%% file: arxiv_version.tex
\documentclass[11pt]{article}

\usepackage[margin=1in]{geometry}

\usepackage[utf8]{inputenc} 
\usepackage[T1]{fontenc}    
\usepackage{lmodern}        
\usepackage[numbers, sort&compress]{natbib}
\usepackage{hyperref}       
\usepackage{url}            
\usepackage{booktabs}       
\usepackage{multirow}       
\usepackage{amsfonts}       
\usepackage{amsmath,amssymb,amsthm,mathtools} 
\usepackage{nicefrac}       
\usepackage{microtype}      
\usepackage{xcolor}         
\usepackage{graphicx}       
\usepackage{subcaption}     
\usepackage{cancel}
\usepackage{enumitem}       
\usepackage{wrapfig}        
\usepackage{etoc}           
\usepackage{authblk}        

\DeclareMathOperator{\tr}{tr}
\DeclareMathOperator*{\argmin}{arg\,min}
\DeclareMathOperator{\prox}{prox}

\newcommand{\RR}{\mathbb{R}}
\newcommand{\EE}{\mathbb{E}}
\newcommand{\PP}{\mathbb{P}}
\newcommand{\NN}{\mathcal{N}}

\newcommand{\I}{\mathbf{I}}
\newcommand{\C}{\mathbf{C}}
\newcommand{\x}{\mathbf{x}}
\newcommand{\bv}{\mathbf{v}}
\newcommand{\bmu}{\boldsymbol{\mu}}
\newcommand{\be}{\mathbf{e}}
\newcommand{\btheta}{\boldsymbol{\theta}}

\newtheorem{theorem}{Theorem}
\newtheorem{proposition}[theorem]{Proposition}
\newtheorem{lemma}[theorem]{Lemma}
\newtheorem{corollary}[theorem]{Corollary}
\newtheorem{assumption}{Assumption}
\newtheorem{remark}{Remark}

\definecolor{hadaspurple}{RGB}{128,0,128}

\title{When Stronger Triggers Backfire:\\ A High-Dimensional Theory of Backdoor Attacks}

\author[1]{D.G.M.~Flynn\textsuperscript{*}}
\author[2]{Hadas Yaron Goldhirsh\textsuperscript{*}}
\author[1]{Jonathan P. Keating\textsuperscript{\dag}}
\author[3]{Inbar Seroussi\textsuperscript{\dag}}

\affil[1]{Mathematical Institute, University of Oxford}
\affil[2]{School of Mathematical Science, Tel Aviv University}
\affil[3]{School of Mathematical Science and Computer Science, Tel Aviv University}
\affil[ ]{\textsuperscript{*}Equal primary contribution.\quad \textsuperscript{\dag}Equal senior contribution.}

\date{}

\begin{document}

\addtocontents{toc}{\protect\etocdepthtag.toc{mainpaper}}

\maketitle

\begin{abstract}
Backdoor poisoning attacks behave counter-intuitively in
high dimensions: stronger training triggers can help the defender. We study
regularised generalised linear models on Gaussian-mixture data in the
proportional regime ($p/n \to \kappa$), varying the training trigger strength
$\alpha$ against a fixed test trigger. Three phenomena emerge: (i) clean test
accuracy increases with $\alpha$; (ii) attack success peaks at a finite
$\alpha$ and then declines; and (iii) the most damaging trigger direction is
the minimum eigenvector of the data covariance. We prove all three results in
closed form for the squared loss, and extend (i) and (ii) to general convex GLM
losses via a Gaussian-proxy fixed-point system. We identify a finite-sample
noise floor proportional to $\kappa$ as the mechanism behind (i), invisible to
classical $n \gg p$ analysis. Experiments on CIFAR-10 and Gaussian surrogates match the theory closely; ResNet-18 experiments show the same phenomena beyond the convex setting.
\end{abstract}

\section{Introduction}
As machine learning is deployed in safety-critical domains, model security becomes a first-class concern: modern pipelines ingest data from the internet, and a tiny fraction of adversarial samples can compromise the model. Despite the canonical nature of backdoor poisoning attacks, a precise theoretical model that captures the high dimensional nature of modern datasets has remained elusive, with current work largely empirical or reliant on lower dimensional bounds.

We study \emph{backdoor} (trigger-based) poisoning~\citep{gu_badnets_2019}: the adversary injects training samples carrying a trigger and labelled with a target class, so that at test time the same trigger flips predictions while performing normally otherwise. In practice, the test-time trigger strength is constrained (e.g.\ the size of a sticker on a stop sign), while the training trigger strength $\alpha$ is a free choice for the attacker; stronger training triggers make poisoned samples easier to classify (potentially reducing their influence), weaker ones may leave too little imprint. We analyze this trade-off rigorously for regularised generalised linear models (GLMs) on high-dimensional Gaussian mixture data, working in the \emph{proportional regime} where the dimension of the data $p$ and the number of data points $n$ are of the same order. \textit{Is a stronger training trigger always a stronger attack?}

\paragraph{Motivating evidence from deep networks}
Figure~\ref{fig:resnet_intro} shows the clean test accuracy and attack success rate of a ResNet-18 on CIFAR-10. Three striking patterns emerge: (i) clean accuracy \emph{increases} with $\alpha$, (ii) the attack success rate peaks at a finite $\alpha$ and then declines, and (iii) the attack is most effective when the trigger aligns with low-variance directions of the data. 
\textit{What underlying mechanisms of standard training give rise to these phenomena?
}

\begin{figure}[ht]
    \includegraphics[width=\textwidth]{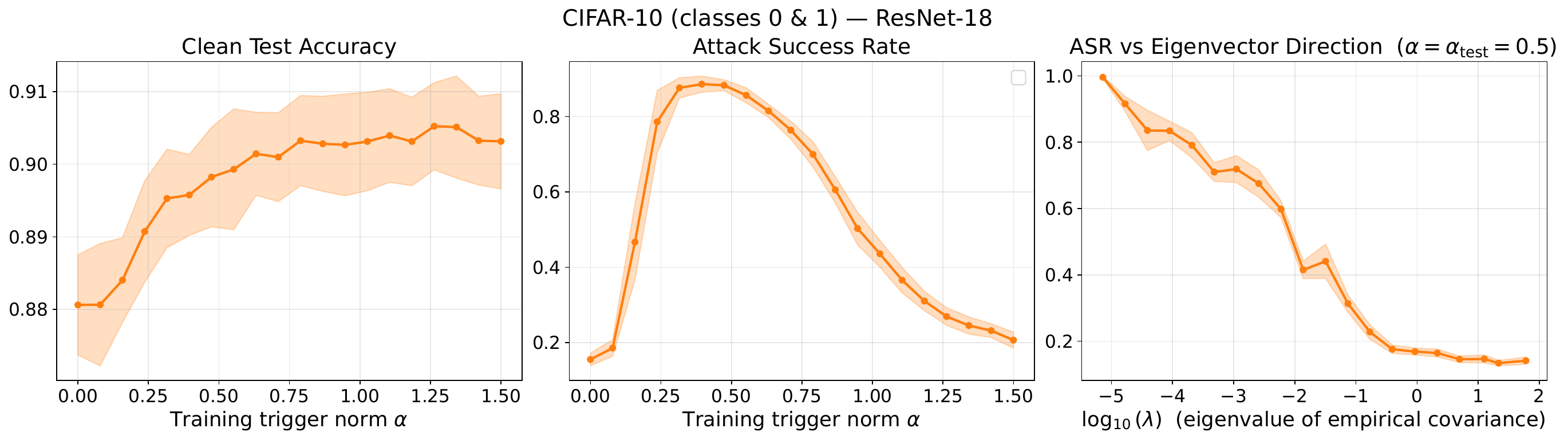}
    \caption{Observed Phenomena in Backdoor Poisoning}
    \smallskip
    \small
    ResNet-18 on CIFAR-10 (classes 0 \& 1); $\phi=0.05$; $\alpha_{\text{test}} = 0.5$, averaged over 25 runs. Left/middle: clean accuracy and attack success rate vs.\ \emph{training} trigger strength $\alpha$, with a fixed $2{\times}2$-pixel corner trigger. Right: at $\alpha=0.5$, the trigger direction varies over eigenvectors of the empirical covariance (x-axis: log eigenvalue).
    \label{fig:resnet_intro}
\end{figure}

\subsection{Our contributions}
We give the first high-dimensional theoretical characterization of the trigger's impact on data poisoning, including a rigorous account of the emergence of the behaviour also observed in deep learning systems described in Figure~\ref{fig:resnet_intro}. We model backdoor-poisoned data as a binary Gaussian mixture with general covariance $\C$ and analyse regularised GLMs in two settings: Empirical risk minimisation (ERM) in the proportional regime and population-risk minimisation in the information limit ($n \gg p$). The central quantities are the \emph{benign alignment} and \emph{trigger alignment}, which govern clean accuracy and attack success rate respectively (Section~\ref{sec:setting}, Theorem~\ref{thm:loureiro}).

We prove the three phenomena of Figure~\ref{fig:resnet_intro} across three settings. \textbf{(i) Squared loss, ERM} (Section~\ref{sec:exact}): closed-form alignments yield benign monotonicity, a unique finite peak in trigger alignment, and the minimum eigenvector of $\C$ as the optimal attack direction (Proposition~\ref{prop:lin_peak_general}, Corollary~\ref{cor:lin_clean_general}, and Proposition~\ref{cor:lin_min_eig}). \textbf{(ii) Convex losses, proportional regime} (Section~\ref{sec:erm_logistic}): the trigger alignment peaks at a finite $\alpha^\star$ with explicit decay rates (Proposition~\ref{prop:alignment_peaks}). \textbf{(iii) Convex losses, information limit} (Section~\ref{sec:pop_logistic}): the minimiser converges to its unpoisoned value as $\alpha \to \infty$ (Proposition~\ref{prop:pop_convergence}), and a one-step monotonicity argument (Proposition~\ref{prop:one_step}) gives the benign-alignment ordering.

We further show that a finite-sample noise floor proportional to $\kappa$ drives the increase in clean accuracy, an effect invisible to the information limit (Section~\ref{sec:comparison}).
All results are validated on logistic regression with real and Gaussian surrogate data, and all three phenomena persist in a ResNet-18 (Figure~\ref{fig:resnet_intro}), showing universality well beyond our convex theoretical setting.

Our work contributes a rigorous, quantitative analysis of backdoor poisoning, where prior work has largely relied on empirical evidence or low-dimensional bounds. From a high-dimensional statistics perspective, we introduce a tractable adversarial data model and show that it exhibits qualitatively new behavior not captured by classical $n \gg p$ asymptotics.

\subsection{Related work}

\paragraph{Backdoor attacks.}
Trigger-based poisoning was introduced by~\citet{gu_badnets_2019}; triggers have grown stealthier through reflections~\citep{liu_reflection_2020}, physical deployment~\citep{li_backdoor_2021}, label-consistent or warping constructions~\citep{turner_label-consistent_2019,nguyen_wanet_2021}, data-efficient variants~\citep{xia_data-efficient_2022,souly_poisoning_2025}, and low-rank-activated triggers~\citep{evansTheoryMinimalWeight2026}; the same mechanism underpins watermarking~\citep{adi_turning_2018} and unlearning verification~\citep{pawelczyk_machine_2025} (see~\citet{li_backdoor_2024} for a survey of attacks and defenses). Theory has so far relied on empirical study~\citep{gu_badnets_2019}, lower-dimensional bounds~\citep{lu_exploring_2023}, or detectability--efficacy trade-offs~\citep{granziolSafetyEfficacyTradeRobustness2026}; \citet{flynn_linear_2026} took a first high-dimensional step for ridge regression with isotropic covariates, which we extend to general convex losses and general covariance.

\paragraph{High-dimensional proportional regime.}
The proportional regime ($p/n \to \kappa$) has been used to explain modern learning phenomena such as double descent and benign overfitting~\citep{hastie2019surprises, belkin2019, bartlett2020benign, bartlett21deep}. 
Sharp asymptotic characterizations in this regime have been developed using tools such as approximate message passing (AMP), the convex Gaussian min--max theorem (CGMT), and leave-one-out techniques~\citep{stojnic_framework_2013,thrampoulidis_regularized_2015,rangan_generalized_2012,javanmard_state_2012,donoho_high_2013,el_karoui_robust_2013,mai_high_2020}. These methods yield precise results for ridge and logistic regression~\citep{dobriban2018high,sur2019modern,montanari2025generalization}, Gaussian mixture classification~\citep{mai_high_2020,deng_model_2020,loureiro_learning_2021}, and related teacher--student models~\citep{liang_precise_2022,montanari_generalization_2023}, as well as Bayes-optimal inference ~\citep{barbier_optimal_2019,seroussi_lower_2022}. %
A complementary line of work studies the dynamics of stochastic gradient descent via high-dimensional ODE/SDE limits~\citep{paquette24homogenization, collins2024hitting, arous2022highdimensional}, providing insight into high-dimensional optimisation. %
Closest to our setting is~\citet{barnfield_high-dimensional_2025}, which studies sparse signal detection in high dimensions; in contrast, we analyse adversarially injected triggers in regularised GLMs.
\section{Setting and notation}
\label{sec:setting}


\paragraph{Clean data.}
We consider a classification problem where each sample $(\x_i, y_i)\in \mathbb{R}^p\times \{\pm 1\}$ is drawn independently from a binary Gaussian mixture
with $\PP(y_i = +1) = \tfrac{1}{2}$, and the
feature vector $\x_i \mid y_i \sim \NN(y_i \bmu, \C)$, where
$\bmu \in \RR^p$ is the class mean and
$\C \in \RR^{p \times p}$ is a shared
covariance matrix.

\paragraph{Poisoning model.}
An adversary selects a fraction $\phi \in (0, \tfrac{1}{2})$ of the
full training set, drawn exclusively from the samples with
$y_i = -1$, and applies a \emph{backdoor attack}: for each selected
sample, the trigger $\alpha \bv$ is added to the feature vector and
the label is flipped to $+1$. Here $\bv \in \RR^p$ is a fixed
trigger direction with $\|\bv\| = 1$, $\alpha \geq 0$ controls the
trigger strength, and samples with $y_i = +1$ are never modified. We assume $\alpha \geq 0$, as otherwise we may just consider $-\bv$.

The losses we consider depend on features and labels only through $y_i \x_i$\footnote{Equivalently, $\tilde L(x, +1) = \tilde L(-x, -1)$; we write $L(x) := \tilde L(x, +1)$.}, so we absorb labels into features as $\mathbf{z}_i := y_i \x_i$.
After poisoning, the absorbed data follows a two-component Gaussian mixture with shared covariance $\C$:
\begin{equation}
\label{eq:mixture}
\mathbf{z}_i \mid K = c \;\sim\; \NN(\bmu_c,\, \C),
\qquad
\bmu_1 = \bmu,\quad
\bmu_2 = \alpha\bv - \bmu,\quad
\pi_1 = 1-\phi,\quad
\pi_2 = \phi.
\end{equation}

\paragraph{Notation.}
We use boldface for matrices and vectors, with capitals for matrices. For a matrix $\mathbf{A}$, $\|\mathbf{A}\|$ denotes its operator norm. The complexity notations  $O(\cdot)$, and $o(\cdot)$ are understood for large data size $n$ and input dimension $p$, while the notation $O_\alpha(\cdot),  o_\alpha(\cdot)$ is intended for sufficiently small $\alpha^{-1}$. 
We use $\hat\btheta$ for the learned parameters (ERM or population minimiser, depending on context), and $\tilde\btheta$ for the Gaussian proxy from Theorem~\ref{thm:loureiro}.

\paragraph{Empirical risk minimisation (proportional regime)}
\label{sec:erm}

Given $n$ training samples $(\x_i, y_i)_{i=1}^n$ and parameter vector $\btheta\in \mathbb{R}^p$; a convex loss $L$, and regularisation $\lambda > 0$, we consider the estimator
\begin{equation}
\label{eq:erm}
\hat\btheta \;=\; \argmin_{\btheta \in \RR^p} \;
\big\{\mathcal{L}_{n}(\btheta)
\;:=\;
 \frac{1}{n}\sum_{i=1}^{n} L\!\bigl(y_i\,
\x_i^\top \btheta \bigr)
\;+\; \frac{\lambda}{2}\|\btheta\|^2\big\}.
\end{equation}
Canonical choices are the logistic loss $L(t) = \log(1 + e^{-t})$ and the squared loss $L(t) = \tfrac{1}{2}(1-t)^2$. 

\begin{assumption}[Proportional asymptotics]
\label{ass:asymptotics}
The dimension $p$ and sample size $n$ grow jointly such that
$p/n \to \kappa \in (0,\infty)$.
The covariance is bounded, $\|\C\| = O(1)$, and the class
means satisfy $\|\bmu_c\| = O(1)$ for each $c \in \{1,2\}$.
The loss $L: \RR \rightarrow \RR$ is strictly convex.
\end{assumption}

\paragraph{Population risk minimisation (information limit)}
\label{sec:population}

In the information limit $\kappa \to 0$ (or $n \gg p$), this is equivalent to minimising the \emph{population risk} (Appendix~\ref{app:erm_pop_bound}):
\begin{equation}
\label{eq:pop_loss}
\mathcal{L}_{\mathrm{pop}}(\btheta;\alpha)
\;:=\;
(1-\phi)\,\EE_{\x \sim \NN(\bmu,\C)}\!\bigl[L(\btheta^\top\x)\bigr]
\;+\;
\phi\,\EE_{\x \sim \NN(\alpha\bv-\bmu,\C)}\!\bigl[L(\btheta^\top\x)\bigr]
\;+\;
\frac{\lambda}{2}\|\btheta\|^2,
\end{equation}
with unique minimiser $\btheta(\alpha)$ (by $\lambda$-strong convexity). 
\subsection{Gaussian proxy for ERM}
\label{sec:det_equiv}

Our ERM-regime analysis builds on the following characterisation of the
high-dimensional minimiser, which is a consequence of the general
results of \citet{loureiro_learning_2021}.

\begin{theorem}[Gaussian proxy {\citep{loureiro_learning_2021}}]
\label{thm:loureiro}
Denote $\mathbf{R}(\lambda,\tau):=(\lambda \I+\tau \C)^{-1}$. Under Assumptions~\ref{ass:asymptotics} and the mixture model~\eqref{eq:mixture}, there exists $\tilde\btheta$ such that for pseudo-Lipschitz $\phi:\RR^p\to\RR$ of finite order, $\phi(\hat\btheta)\to_{p,n\to\infty}\EE\phi(\tilde\btheta)$ in probability, where
\[
\mathbf{R}(\lambda,\tau)^{-1}\tilde\btheta \sim \NN\!\bigl(\eta_1\bmu_1+\eta_2\bmu_2,\; \tfrac{\gamma}{n}\C\bigr).
\]
The scalars $(\tau,\gamma,\delta,\eta_1,\eta_2)\in\RR^5$ solve a self-consistent system. Let $f(x)=-L'(\prox_{\delta L}(x))$ and $r_c\sim\NN(M_c,\sigma^2)$, with $r_K\sim r_c$ conditional on $K=c$ with probability $\pi_c$. Then $\tau=\EE[-f'(r_K)]$, $\gamma=\EE[f^2(r_K)]$, and $\eta_c=\pi_c\,\EE[f(r_c)]$ for $c=1,2$, and
\[
\delta=\tfrac{1}{n}\tr[\C \mathbf{R}(\lambda,\tau)],\quad 
M_c=\bmu_c^\top \mathbf{R}(\lambda,\tau)(\eta_1\bmu_1+\eta_2\bmu_2),\; c=1,2,
\]
\[
\sigma^2=(\eta_1\bmu_1+\eta_2\bmu_2)^\top \mathbf{R}(\lambda,\tau)^2 \C (\eta_1\bmu_1+\eta_2\bmu_2)
+\tfrac{\gamma}{n}\tr\!\bigl[\mathbf{R}(\lambda,\tau)^2\C^2\bigr].
\]
The scalars $M_c$ and $\sigma^2$ are the asymptotic mean and variance of $\bmu_c^\top\tilde\btheta$ and $\x^\top\tilde\btheta$ for a test point $\x$.
\end{theorem}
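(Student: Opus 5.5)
The plan is to obtain Theorem~\ref{thm:loureiro} as a specialisation of the general Gaussian-mixture result of \citet{loureiro_learning_2021} (proved there via the convex Gaussian min--max theorem / multi-scalar replica-type fixed point). Their setting is the estimator $\argmin_{\btheta}\sum_i \ell(y_i,\btheta^\top\x_i/\sqrt{p})+r(\btheta)$ with $\x_i$ drawn from a $K$-component mixture $\sum_c\pi_c\NN(\bmu_c,\boldsymbol{\Sigma}_c)$, and it gives the asymptotic law of the estimator through a Gaussian proxy defined by a proximal/resolvent equation with scalar order parameters $(\hat V_c,\hat q_c,\hat m_c,V_c,q_c,m_c)$. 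The first step is therefore a dictionary: after absorbing labels as $\mathbf{z}_i=y_i\x_i$, the poisoned data is exactly the two-component mixture~\eqref{eq:mixture} with shared covariance $\C$ and a single ``label'' $+1$, so the loss becomes $L(\btheta^\top\mathbf{z}_i)$, and the ridge regulariser $\tfrac{\lambda}{2}\|\btheta\|^2$ fits their framework. I will match normalisations (their $1/\sqrt p$ scaling versus our $\|\bmu_c\|=O(1)$, $\|\C\|=O(1)$, and the factor $1/n$ in \eqref{eq:erm}), checking that Assumption~\ref{ass:asymptotics} implies their hypotheses: bounded spectra and means, $p/n\to\kappa$, and strict convexity plus $\lambda>0$ so the minimiser is unique.

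The second step is to collapse their order parameters. Because the covariance is shared ($\boldsymbol\Sigma_1=\boldsymbol\Sigma_2=\C$), the conjugate variances coincide across components, so $\hat V_1=\hat V_2$ aggregate into one scalar $\tau=\sum_c\pi_c\EE[-f'(r_c)]$, and similarly $\hat q$ into $\gamma=\EE[f^2(r_K)]$; the ridge regulariser makes the proximal operator in $\btheta$ linear, so the proxy becomes $(\lambda\I+\tau\C)\tilde\btheta=\sum_c\eta_c\bmu_c+\sqrt{\gamma/n}\,\C^{1/2}\boldsymbol\xi$, i.e.\ $\mathbf{R}^{-1}\tilde\btheta\sim\NN(\eta_1\bmu_1+\eta_2\bmu_2,\tfrac{\gamma}{n}\C)$. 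Here $\eta_c=\pi_c\EE[f(r_c)]$ plays the role of $\hat m_c$, and $f(x)=-L'(\prox_{\delta L}(x))$ is their output denoiser written via the Moreau envelope identity $\partial_x \mathcal{M}_{\delta L}(x)=(x-\prox_{\delta L}(x))/\delta=L'(\prox_{\delta L}(x))$.

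The third step computes the remaining scalars directly from the proxy: $\delta=V=\tfrac1n\tr[\C\mathbf{R}]$ from the variance equation; $M_c=\bmu_c^\top\EE\tilde\btheta=\bmu_c^\top\mathbf{R}(\eta_1\bmu_1+\eta_2\bmu_2)$; and for a test point $\x=\bmu_c+\C^{1/2}\mathbf g$ independent of $\tilde\btheta$, the fluctuation $\mathrm{Var}(\x^\top\tilde\btheta\mid\text{mean part})$ yields $\sigma^2=\bar\bmu^\top\mathbf{R}^2\C\bar\bmu+\tfrac{\gamma}{n}\tr[\mathbf{R}^2\C^2]$ with $\bar\bmu=\eta_1\bmu_1+\eta_2\bmu_2$, matching their $q$ and showing $r_c\sim\NN(M_c,\sigma^2)$ is the law of the pre-activation. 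The pseudo-Lipschitz convergence statement is then inherited verbatim from their main theorem.

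The main obstacle is the bookkeeping in the reduction: reconciling scalings and sign conventions (their $\hat m,\hat V,\hat q$ versus our $\eta,\tau,\gamma$, and the $1/n$ versus $1/p$ normalisation of traces) so that the fixed-point equations close exactly as stated, and verifying that the shared-covariance collapse is legitimate, i.e.\ that per-component $\hat V_c$ enter only through their $\pi$-weighted sum. I would carry this out by writing their equations for general $K$ with $\boldsymbol\Sigma_c=\C$ and substituting term by term; existence and uniqueness of the fixed point follow from their result given strict convexity and $\lambda>0$.
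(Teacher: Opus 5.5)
Your proposal is correct and takes the same route as the paper, which gives no independent argument and simply obtains the statement as a specialisation of \citet{loureiro_learning_2021}; your dictionary (absorbing labels into $\mathbf{z}_i=y_i\x_i$, matching the $1/\sqrt{p}$ and regularisation normalisations, using the ridge penalty to make the proxy linear, writing $f$ through the Moreau envelope, and reading $M_c$ and $\sigma^2=\EE[\tilde\btheta^\top\C\tilde\btheta]$ off the proxy) just makes that specialisation explicit. One small slip: the per-component conjugates $\hat V_c=\pi_c\EE[-f'(r_c)]$ do not coincide, since they differ through $\pi_c$ and $M_c$; with a shared covariance they only enter through their sum, and the noise contributions add in variance to give $\gamma=\EE[f^2(r_K)]$, which is what your last paragraph correctly sets out to check.
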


The key quantities in our analysis are the expected benign and trigger alignments, defined as
\[
h_\mu(\alpha) := \mathbb{E}[\bmu^\top \tilde{\btheta}], \quad
h_v(\alpha) := \mathbb{E}[\bv^\top \tilde{\btheta}].
\]
For \(\alpha > 0\), we have \(h_v(\alpha) = (M_1 + M_2)/\alpha\).

Our results use the following genericity condition on the trigger direction relative to the data's mean. In high dimensions, almost all pairs of vectors are nearly orthogonal: for $\bv$ uniform on the sphere, $\bv^\top R(\lambda, \tau)\bmu = O(p^{-1/2})$ with probability tending to one. There is no a priori reason for a trigger to have substantial overlap with the mean direction. 
Moreover, standard backdoor trigger constructions (e.g., localised patches) are not designed to align with class-specific structure and thus typically have negligible correlation with the class mean.
\begin{assumption}[Trigger orthogonality]
\label{ass:orthogonality}
The trigger direction $\bv$ is asymptotically orthogonal to the
Krylov subspace of $\C$ generated by $\bmu$:
$\bv^\top \C^k \bmu = o(1)$ for each $k = 0, 1, 2, \ldots$
Equivalently, $\bv^\top
R(\lambda, \tau)\bmu = o(1)$
for every fixed $\tau \geq 0$ and $\lambda>0$.
\end{assumption}

Section \ref{sec:erm_logistic} requires mild regularity of the loss, satisfied by the logistic and exponential losses. 

\begin{assumption}[Loss regularity]
\label{ass:loss}
The loss $L$ is non-negative, convex, and strictly decreasing.
It has an exponentially bounded derivative for $x < 0$ and
sufficient decay for $x > 0$:
\[
|L'(x)| \leq e^{C_1(|x|+1)} \;\text{for } x < 0,
\qquad
|L'(x)| \leq \frac{C_2}{x^{1+\epsilon}+1} \;\text{for } x > 0
\text{ and some } \epsilon > 0.
\]
\end{assumption} 


\section{Main results}
\label{sec:main_results}

We present three phenomena, each stated informally here with numerical support on CIFAR-10 and Gaussian surrogates; formal results appear in Sections~\ref{sec:exact}--\ref{sec:pop_logistic}. Throughout, the attack success rate is evaluated on held-out samples with a fixed test trigger $\alpha_{\mathrm{test}}\bv$.

\begin{figure}[ht]
    \includegraphics[width=\textwidth]{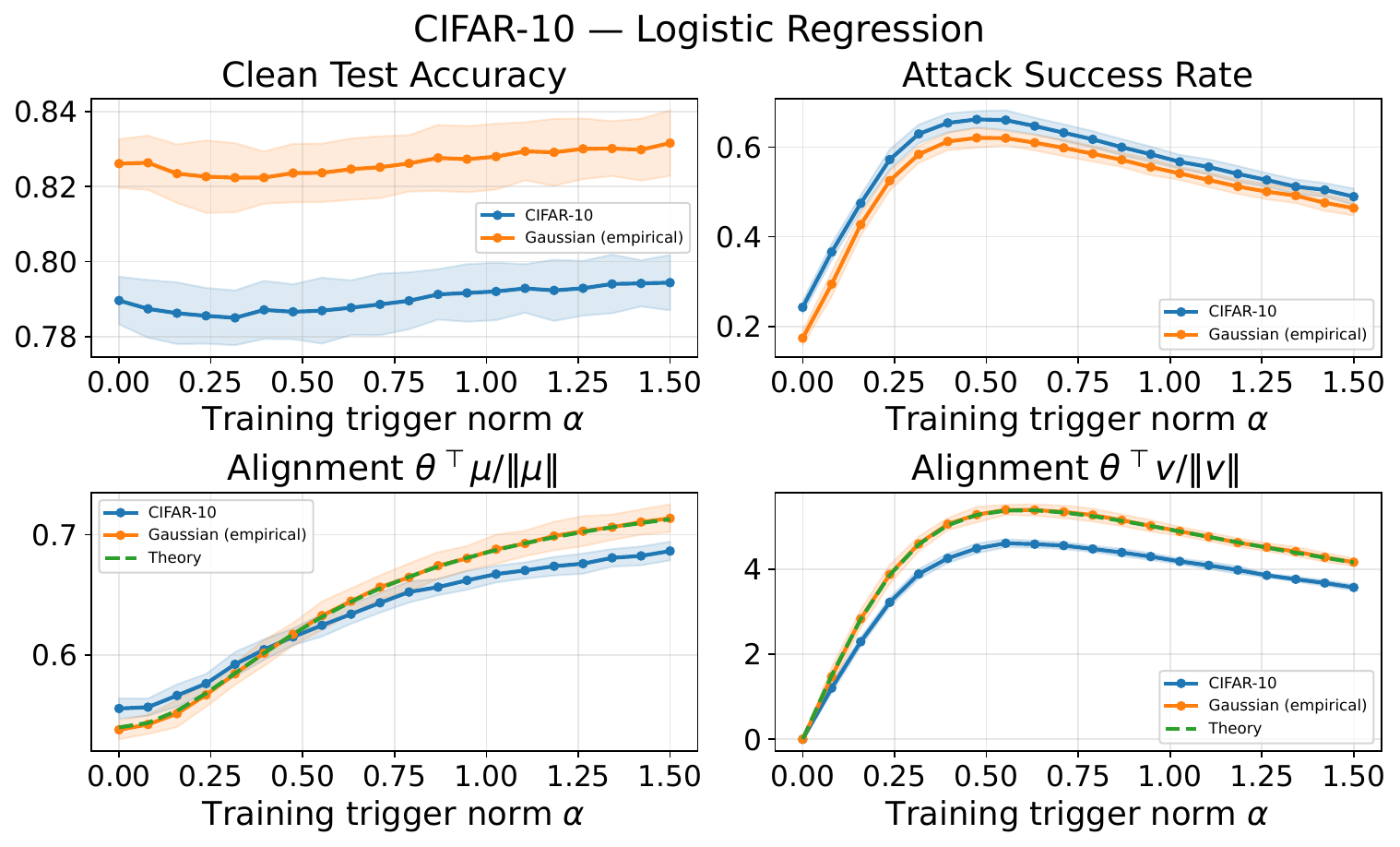}
    \caption{Real Data vs Theoretical Predictions}
    \smallskip
    \small A plot of CIFAR-10 (classes 0 \& 1) for logistic regression on real data
    (blue) and Gaussian surrogates (orange) compared against theoretical predictions (dashed) obtained by solving the fixed point equation in Theorem \ref{thm:loureiro} numerically. Here $\phi=0.05$; $\alpha_{\text{test}} = 0.5$. The Gaussian surrogates act as a proxy for CIFAR-10, which lie within our theoretical assumptions, further experimental details are in Appendix \ref{app:experimental_details}.
    \label{fig:clean_acc_cifar}
\end{figure}

We also define the \emph{benign-only} objective, obtained by dropping
the poisoned term from~\eqref{eq:pop_loss} and with unique minimiser $\btheta_{\mathrm{ben}}$:
\begin{equation}
\label{eq:ben_loss}
\mathcal{L}_{\mathrm{ben}}(\btheta)
\;:=\;
(1-\phi)\,\EE_{\x \sim \NN(\bmu,\C)}\!\bigl[L(\btheta^\top\x)\bigr]
\;+\;
\frac{\lambda}{2}\|\btheta\|^2.
\end{equation}

\paragraph{Clean accuracy  \textit{increases} with trigger strength}
\label{sec:clean_acc}

For fixed poisoning fraction $\phi$, increasing the trigger strength $\alpha$ counter-intuitively \emph{improves} clean test accuracy. The mechanism: as $\alpha$ grows, poisoned samples are pushed further into the $+1$ region and become easier to classify, so the model ``wastes'' less capacity on them and concentrates on the clean distribution.
\emph{Formally:} we prove this for the ERM squared loss (Corollary~\ref{cor:lin_clean_general}). For general losses in the information limit, the poisoned minimiser converges to its unpoisoned counterpart, $\btheta(\alpha) \to \btheta_{\mathrm{ben}}$ as $\alpha \to \infty$ (Proposition~\ref{prop:pop_convergence}); when $\bmu$ is an eigenvector of $\C$, the unpoisoned model has strictly higher benign alignment, $\bmu^\top\btheta(0) < \bmu^\top\btheta_{\mathrm{ben}}$, and a one-step monotonicity argument (Proposition~\ref{prop:one_step}) gives the general ordering.

Figure~\ref{fig:clean_acc_cifar} confirms this on CIFAR-10: logistic regression's clean accuracy increases monotonically with $\alpha$, and the Gaussian surrogate closely tracks the real data.


\paragraph{Alignment of the learned direction peaks with trigger strength}
\label{sec:alignment_peak}

The attack success rate is controlled by the trigger alignment $\bv^\top\tilde\btheta$. Rather than growing without bound in $\alpha$, the trigger alignment is maximised at a finite $\alpha^\star$ and decays thereafter: for large $\alpha$ the poisoned class is so well-separated that the loss provides vanishing gradient in the trigger direction (since $L'(x) \to 0$ as $x \to +\infty$ under Assumption~\ref{ass:loss}). Formally, in the proportional regime $\bv^\top\tilde\btheta \to 0$ as $\alpha \to \infty$, at rate $O(\alpha^{-\epsilon/(2+\epsilon)})$ for polynomial-tail losses and $O(\log\alpha / \alpha)$ for exponential-tail losses (e.g.\ logistic). Figure~\ref{fig:clean_acc_cifar} shows this peaking on CIFAR-10 and its Gaussian surrogate, and Figure~\ref{fig:resnet_intro} shows it persists in ResNet-18.


\paragraph{Trigger direction and the minimum eigendirection of the covariance}
\label{sec:trigger_eigen}

For a fixed trigger budget $\alpha$, the attack is most effective along low-variance directions of the data: the classifier has little signal there to resist the perturbation, so the decision boundary is easiest to shift. This singles out the eigenvector of $\C$ with the smallest eigenvalue as the optimal trigger direction. 
\emph{Formally:} for the squared loss the trigger alignment is monotone in the resolvent quadratic form $\bv^\top \mathbf{R}(\lambda,\tau)\bv$, and is maximised precisely along the minimum eigenvector of~$\C$ (Corollary~\ref{cor:lin_min_eig}).
Numerically this extends to the logistic loss on CIFAR-10 and to ResNet-18.

\subsection{Exact results ERM least squares}
\label{sec:exact}

For the squared loss \(L(t)=\frac12(1-t)^2\), the Gaussian-proxy fixed point
reduces to an explicit ridge-type calculation. In particular, for fixed
\((\lambda,\C)\), the effective scalar \(\tau\) is independent of the trigger
strength \(\alpha\), and the proxy expectation \(\EE[\tilde{\btheta}]\) lies in the
resolvent-weighted span
\[
\operatorname{span}\{\mathbf{R}(\lambda,\tau)\bmu,\;\mathbf{R}(\lambda,\tau)\bv\}.
\]
The full analysis, including arbitrary covariance projections, is given in Proposition~\ref{prop:lin_exact_projections} in Appendix~\ref{app:linear_regression}. Here we highlight the three main consequences: a finite peak in the trigger projection, monotonic growth of the benign projection (at leading order), and the spectral dependence on \(\C\).



\subsubsection{Finite peak of the trigger projection}

We define $g_{\mu\mu}:=\bmu^\top \mathbf{R}(\lambda,\tau)\bmu$, $g_{\mu v}:=\bmu^\top \mathbf{R}(\lambda,\tau)\bv$, $g_{vv}:=\bv^\top \mathbf{R}(\lambda,\tau)\bv$. 

\begin{proposition}[Finite peak under generic trigger orthogonality]
\label{prop:lin_peak_general}
Assume that \(0<\phi<1/2\), that \(\bmu\) and \(\bv\) are linearly independent,
and that Assumption~\ref{ass:orthogonality} holds so that $g_{\mu v}=o(1)$.
Consequently, uniformly for \(\alpha\) in any fixed compact subset of
\([0,\infty)\),
\begin{equation}
\label{eq:lin_hv_peak_compact}
h_v(\alpha)
=
\frac{A_v\alpha}{B+C\alpha^2}
+o(1),
\end{equation}
where $A_v:=\tau\phi g_{vv}\bigl(1+2\tau(1-\phi)g_{\mu\mu}\bigr)$, $B:=1+\tau g_{\mu\mu}$, and $C:=\tau\phi g_{vv}\bigl(1+\tau(1-\phi)g_{\mu\mu}\bigr)$.
In particular, for every fixed \(\alpha>0\), \(h_v(\alpha)>0\) eventually.

Moreover, the exact trigger projection has a unique finite maximizer
\(\alpha_{\star,\varepsilon}\), where \(\varepsilon:=g_{\mu v}\), and
\begin{equation}
\label{eq:lin_alpha_star_compact}
\alpha_{\star,\varepsilon}^2
=
\frac{B}{C}
+o(1)
=
\frac{1+\tau g_{\mu\mu}}
{\tau\phi g_{vv}\bigl(1+\tau(1-\phi)g_{\mu\mu}\bigr)}
+o(1).
\end{equation}
Thus \(h_v\) is strictly increasing on
\((0,\alpha_{\star,\varepsilon})\) and strictly decreasing on
\((\alpha_{\star,\varepsilon},\infty)\).
\end{proposition}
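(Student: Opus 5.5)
The plan is to specialise the fixed-point system of Theorem~\ref{thm:loureiro} to the squared loss, where it becomes a linear system in $(\eta_1,\eta_2)$. Then $h_v$ is an explicit rational function of $\alpha$, and its critical points can be read off directly.

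\emph{Step 1 (reduction).} For $L(t)=\tfrac12(1-t)^2$ we have $\prox_{\delta L}(x)=(x+\delta)/(1+\delta)$, so
\[
f(x)=\frac{1-x}{1+\delta},\qquad f'\equiv-\frac{1}{1+\delta}.
\]
Hence $\tau=1/(1+\delta)$ and $\delta=\tfrac1n\tr[\C\mathbf{R}(\lambda,\tau)]$. These two equations close on $(\tau,\delta)$ alone, so $\tau$ does not depend on $\alpha$, $\bmu$ or $\bv$. Moreover $\eta_c=\pi_c\tau(1-M_c)$, because $\EE f(r_c)$ only involves the mean $M_c$. Since $\EE\tilde\btheta=\mathbf{R}\mathbf{w}$ with $\mathbf{w}=\eta_1\bmu_1+\eta_2\bmu_2$, we get $h_v=\bv^\top\mathbf{R}\mathbf{w}$.

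\emph{Step 2 (linear system).} Write $\mathbf{w}=a\bmu+b\bv$ with $a=\eta_1-\eta_2$ and $b=\alpha\eta_2$. Then
\[
M_1=aG+b\varepsilon,\qquad M_2=\alpha(a\varepsilon+bg)-(aG+b\varepsilon),
\]
where $G=g_{\mu\mu}$, $g=g_{vv}$ and $\varepsilon=g_{\mu v}$. The relations $\eta_c=\pi_c\tau(1-M_c)$ become a $2\times2$ linear system for $(a,b)$ whose coefficients are polynomial in $\alpha$ of degree at most $2$. Its determinant is positive for all $\alpha\ge0$ when $\varepsilon$ is small, since it equals $B+C\alpha^2$ up to $O(\varepsilon)$ terms. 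Solving by Cramer's rule gives
\[
h_v=a\varepsilon+bg=\frac{N_\varepsilon(\alpha)}{D_\varepsilon(\alpha)},
\]
with $N_\varepsilon$ at most quadratic and $D_\varepsilon$ quadratic, both with coefficients continuous in $\varepsilon$. At $\varepsilon=0$ the system decouples:
\[
b=\alpha\phi\tau\bigl(1-\alpha bg+aG\bigr),\qquad a=\tau(1-2\phi)-\tau aG+\phi\tau\alpha bg,
\]
and eliminating $a$ yields exactly $h_v=bg=A_v\alpha/(B+C\alpha^2)$. Since $G,g,\tau$ are $O(1)$ and $D_0\ge B\ge1$ is bounded below, the expansion $h_v=A_v\alpha/(B+C\alpha^2)+O(\varepsilon)$ holds uniformly on compacts. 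This gives \eqref{eq:lin_hv_peak_compact}, and positivity for fixed $\alpha>0$ follows because $A_v,B,C>0$.

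\emph{Step 3 (unique peak).} For the exact $\varepsilon$, the derivative satisfies $h_v'=(N_\varepsilon'D_\varepsilon-N_\varepsilon D_\varepsilon')/D_\varepsilon^2$. Its numerator is a polynomial of degree at most $2$: the would-be cubic terms cancel because $\deg N_\varepsilon\le\deg D_\varepsilon=2$. At $\varepsilon=0$ this numerator equals $A_v(B-C\alpha^2)$, which has the simple root $\sqrt{B/C}$ on $(0,\infty)$ and the opposite sign structure at $0$ and at $\infty$.

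This step is the main obstacle. The goal is to show that for small $\varepsilon$ the perturbed quadratic keeps exactly one root in $(0,\infty)$, with the correct signs on either side. Near $\sqrt{B/C}$ a simple root persists by continuity. The delicate regimes are $\alpha\to\infty$ and $\alpha\to0$, where a small coefficient could create a spurious root. I would first compute the leading coefficient of the numerator explicitly and check that it stays strictly negative, of order $-A_vC+O(\varepsilon)$. I would then check that its value at $\alpha=0$ stays positive, near $A_vB$. With both signs fixed, a quadratic that is positive at $0$ and negative at $+\infty$ has exactly one positive root. That root $\alpha_{\star,\varepsilon}$ then satisfies $\alpha_{\star,\varepsilon}^2=B/C+o(1)$, and $h_v$ is strictly increasing before it and strictly decreasing after it, which gives \eqref{eq:lin_alpha_star_compact}.
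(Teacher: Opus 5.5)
Your proposal is correct and follows essentially the same route as the paper: reduce the square loss to a $2\times2$ linear system, apply Cramer's rule to write $h_v$ as a rational function of $\alpha$, then analyse the signs of the quadratic numerator of $h_v'$ at $\alpha=0$ and as $\alpha\to\infty$. The only difference is that the paper carries out the explicit computation you defer. It finds that the numerator of $h_v$ is in fact linear, $\tau(1-2\phi)\varepsilon+A_\varepsilon\alpha$, and that the two signs you need hold exactly for every $\varepsilon$, not just for small $\varepsilon$: $F_\varepsilon(0)=A_\varepsilon B+2\tau^2\phi(1-2\phi)\varepsilon^2>0$, and the leading coefficient is $-A_\varepsilon C_\varepsilon<0$ with $A_\varepsilon,C_\varepsilon>0$ because $g_{\mu\mu}g_{vv}-\varepsilon^2>0$. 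So uniqueness of the peak needs neither a perturbative argument nor uniform lower bounds on $A_vC$.
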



The rational form~\eqref{eq:lin_hv_peak_compact} reveals the mechanism: a linear numerator and quadratic denominator force \(h_v(\alpha)\) to decay at large \(\alpha\), so the trigger direction ultimately loses influence even under squared loss.

\subsubsection{Benign projection is monotone to leading order}

Similarly, we get a leading-order form for the benign projection, with quadratic-in-$\alpha$ numerator

\begin{corollary}[Leading-order benign projection under generic trigger orthogonality]
\label{cor:lin_clean_general}
Under the assumptions of Proposition~\ref{prop:lin_peak_general}, uniformly
for \(\alpha\) in any fixed compact subset of \([0,\infty)\),
\[
h_\mu(\alpha)
=
\frac{A_\mu^{(0)}+A_\mu^{(2)}\alpha^2}{B+C\alpha^2}
+o(1),
\]
with \(B,C\) as in Proposition~\ref{prop:lin_peak_general} and \(A_\mu^{(0)}:=\tau(1-2\phi)g_{\mu\mu}\), \(A_\mu^{(2)}:=\tau^2\phi(1-\phi)g_{\mu\mu}g_{vv}\). The leading-order curve is strictly increasing for every \(\alpha>0\); equivalently, the exact derivative \(h_\mu'(\alpha)>0\) eventually uniformly on every compact \(K\subset(0,\infty)\).
\end{corollary}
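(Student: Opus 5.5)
Specialising the fixed point of Theorem~\ref{thm:loureiro} to the squared loss should yield an explicit linear system for $(\eta_1,\eta_2)$. We then compute $h_\mu$ exactly, drop the $o(1)$ cross terms using Assumption~\ref{ass:orthogonality}, and differentiate the resulting rational function.

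\textbf{Step 1: solve the fixed point for the squared loss.} For $L(t)=\tfrac12(1-t)^2$ we have $\prox_{\delta L}(x)=(x+\delta)/(1+\delta)$, so $f(x)=(1-x)/(1+\delta)$ is affine. This gives $\tau=1/(1+\delta)$, which is independent of $\alpha$, and $\eta_c=\pi_c\tau(1-M_c)$. Write $\mathbf{w}:=\eta_1\bmu_1+\eta_2\bmu_2$. Then $M_c=\bmu_c^\top\mathbf{R}\mathbf{w}$, so $(\eta_1,\eta_2)$ solves the $2\times2$ linear system
\[
\eta_c+\pi_c\tau\,\bmu_c^\top\mathbf{R}(\eta_1\bmu_1+\eta_2\bmu_2)=\pi_c\tau .
\]
Its Gram entries are $\bmu_1^\top\mathbf{R}\bmu_1=g_{\mu\mu}$, $\bmu_1^\top\mathbf{R}\bmu_2=\alpha g_{\mu v}-g_{\mu\mu}$ and $\bmu_2^\top\mathbf{R}\bmu_2=\alpha^2g_{vv}-2\alpha g_{\mu v}+g_{\mu\mu}$. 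We then have $h_\mu=\bmu^\top\mathbf{R}\mathbf{w}=\eta_1 g_{\mu\mu}+\eta_2(\alpha g_{\mu v}-g_{\mu\mu})$, which is exactly what Proposition~\ref{prop:lin_exact_projections} records. The exact $h_\mu$ is therefore a ratio of polynomials in $\alpha$ whose coefficients depend on $g_{\mu\mu},g_{vv},g_{\mu v}$. I would reuse the determinant from the proof of Proposition~\ref{prop:lin_peak_general}, which at $\varepsilon=0$ equals $B+C\alpha^2$.

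\textbf{Step 2: set $\varepsilon=g_{\mu v}=0$.} Apply Cramer's rule with $\varepsilon=0$. The determinant is
\[
(1+\pi_1\tau g_{\mu\mu})(1+\pi_2\tau(\alpha^2g_{vv}+g_{\mu\mu}))-\pi_1\pi_2\tau^2g_{\mu\mu}^2 = B+C\alpha^2,
\]
using $\pi_1+\pi_2=1$. The numerator is
\[
\pi_1\tau g_{\mu\mu}(1+\pi_2\tau\alpha^2g_{vv})-\pi_2\tau g_{\mu\mu},
\]
which simplifies to $\tau(1-2\phi)g_{\mu\mu}+\tau^2\phi(1-\phi)g_{\mu\mu}g_{vv}\alpha^2$. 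This matches $A_\mu^{(0)}+A_\mu^{(2)}\alpha^2$. The quantities $\tau$, $g_{\mu\mu}$ and $g_{vv}$ are $O(1)$ and $B\ge1$. On a compact set of $\alpha$, the exact ratio is a smooth function of $\varepsilon$ whose denominator is bounded below. Hence $\varepsilon=o(1)$ perturbs it by $o(1)$ uniformly, and the same holds for its $\alpha$-derivative.

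\textbf{Step 3: monotonicity.} For $N/D$ with $N=A^{(0)}+A^{(2)}\alpha^2$ and $D=B+C\alpha^2$, the derivative is
\[
\frac{2\alpha\,(A^{(2)}B-A^{(0)}C)}{D^2}.
\]
I need $A^{(2)}B-A^{(0)}C>0$. Factor out $\tau^2\phi g_{\mu\mu}g_{vv}$; the remaining bracket is
\[
(1-\phi)(1+\tau g_{\mu\mu})-(1-2\phi)(1+\tau(1-\phi)g_{\mu\mu}) = \phi+\tau\phi(1-\phi)g_{\mu\mu} > 0,
\]
since $\phi\in(0,1/2)$.

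\textbf{Main obstacle.} The delicate point is turning ``the leading-order curve is increasing'' into ``the exact $h_\mu'>0$ eventually'' uniformly on a compact $K\subset(0,\infty)$. For this I need the limiting gap $A^{(2)}B-A^{(0)}C$ and $2\alpha/D^2$ to stay bounded away from zero along the sequence. That requires $g_{\mu\mu}$ and $g_{vv}$ to be bounded below. Both hold because $\mathbf{R}\succeq(\lambda+\tau\|\C\|)^{-1}\I$ and $\|\bv\|=1$; for $\bmu$ I would assume, as the statement implicitly does, that $\|\bmu\|$ is bounded away from zero. With these bounds, the exact derivative differs from the leading-order derivative by $o(1)$ uniformly on $K$, because the coefficients are polynomial in $\varepsilon$ and $\alpha$. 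The leading-order derivative is at least some $c_K>0$ on $K$, so the exact derivative is positive for all large $n$.
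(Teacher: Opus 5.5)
Your proposal is correct and follows essentially the same route as the paper: Cramer's rule on the $2\times2$ linear system produced by the affine $f$, then passage to $\varepsilon=g_{\mu v}\to0$, then the sign of $A_\mu^{(2)}B-A_\mu^{(0)}C$, then uniform convergence of the derivative on compacts. The paper solves for $(h_\mu,h_v)$ via $(I_2+\tau GK(\alpha))h=\tau Gd(\alpha)$ rather than for $(\eta_1,\eta_2)$, which is an inessential difference.

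One small slip: the bracket in your Step 3 equals $\phi\bigl(1+2\tau(1-\phi)g_{\mu\mu}\bigr)$, not $\phi+\tau\phi(1-\phi)g_{\mu\mu}$, though the sign is unaffected. Your remark that the uniform-in-$n$ step needs $\|\bmu\|$ bounded below is a genuine point that the paper's proof leaves implicit. The same step also needs $\tau$ bounded below, which follows from $\delta\le p\|\C\|/(n\lambda)$.
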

The translation from benign projection to clean accuracy is discussed in Section~\ref{sec:comparison}.

\begin{figure*}[t]
    \centering
    \begin{subfigure}[t]{0.485\textwidth}
        \centering
        \includegraphics[width=\linewidth]{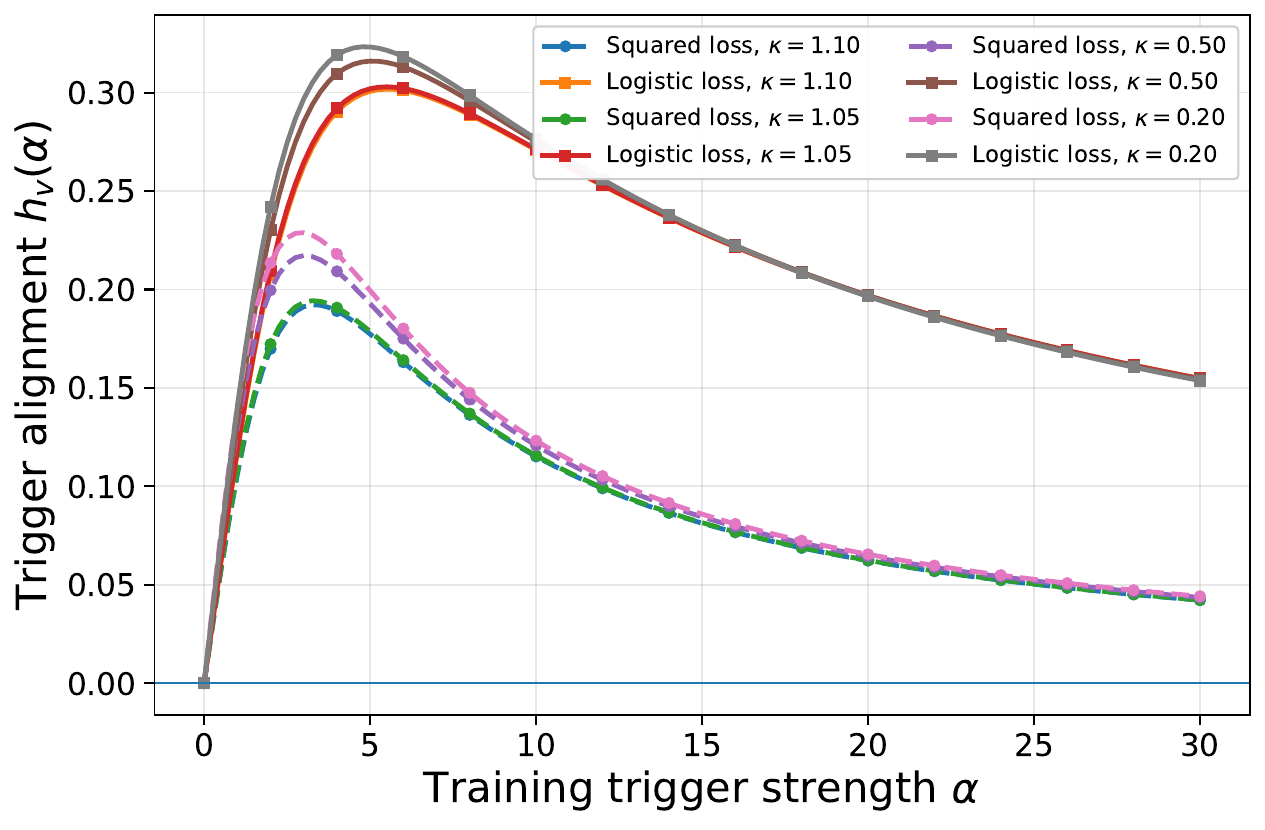}
        \caption{Trigger alignment \(h_v(\alpha)\).}
    \end{subfigure}
    \hfill
    \begin{subfigure}[t]{0.485\textwidth}
        \centering
        \includegraphics[width=\linewidth]{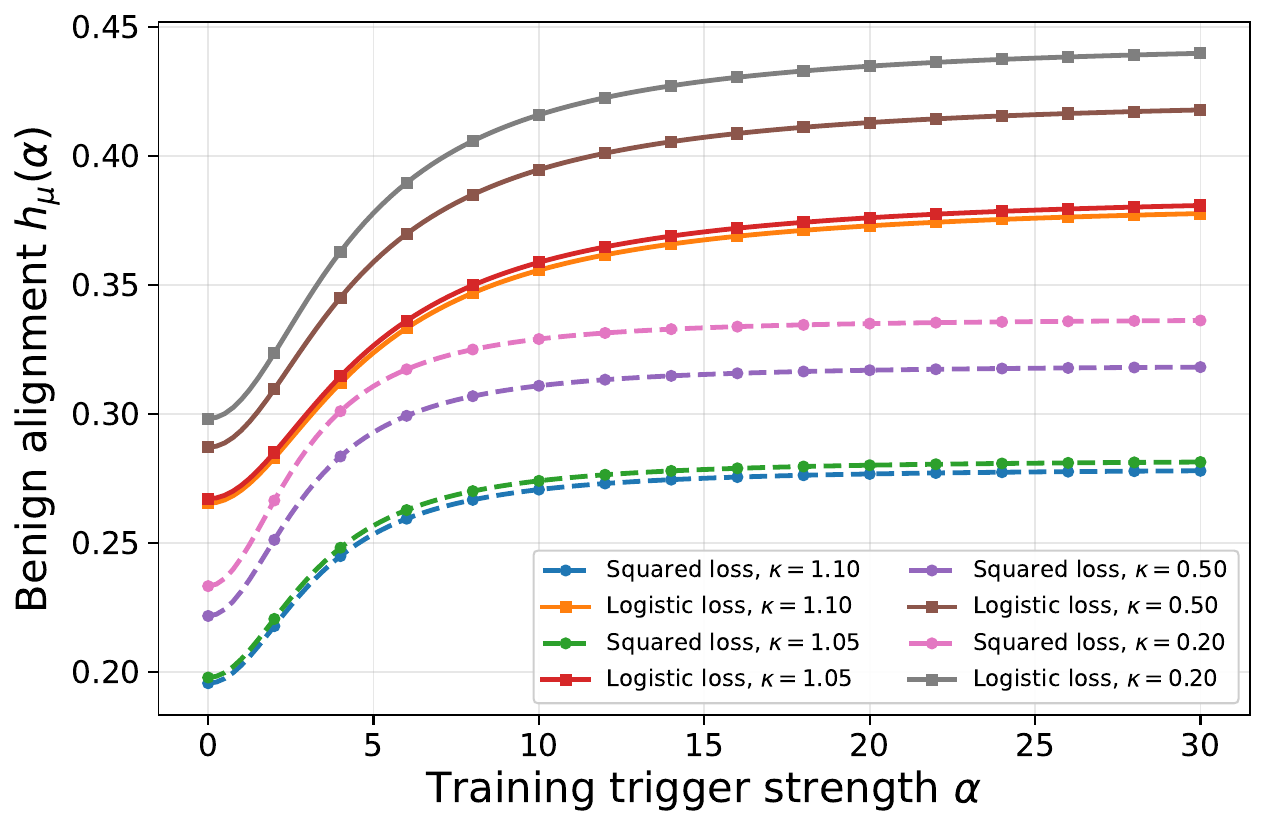}
        \caption{Benign alignment \(h_\mu(\alpha)\).}
    \end{subfigure}
\caption{Projection-level fixed-point predictions in the isotropic setting
\(\C=\I\) across aspect ratios \(\kappa=p/n\), including the overparameterized regime \(\kappa>1\).
Dashed curves show squared-loss predictions, solid curves logistic-loss.
Left: the trigger alignment \(h_v(\alpha)\) peaks at a finite \(\alpha\).
Right: the benign alignment \(h_\mu(\alpha)\) increases with \(\alpha\).
These behaviors persist for both \(\kappa<1\) and \(\kappa>1\).}
    \label{fig:lin_peak_benign}
\end{figure*}

Figure~\ref{fig:lin_peak_benign} confirms both phenomena across aspect ratios
\(\kappa=p/n\), including the overparameterized regime \(\kappa>1\), and shows
that they persist for the logistic loss beyond the squared-loss
setting.

\subsubsection{Eigenvector specialization}
To gain intuition, we restrict attention to a simplified setting in which the clean and trigger directions align with orthogonal eigenvectors of \(\C\). In this case, Assumption~\ref{ass:orthogonality} holds exactly, and the alignment quantities admit closed-form expressions.
\begin{corollary}[Eigenvector specialisation]
\label{cor:lin_peak_eigen}
Assume  $\bmu^\top\bv=0$, $\C\bmu=s_\mu^2\bmu$, $\C\bv=s_v^2\bv$. Define $A_\mu:=\lambda+\tau s_\mu^2+\tau\|\bmu\|^2$, $B_v:=\lambda+\tau s_v^2$, $P_\mu:=\lambda+\tau s_\mu^2+\tau(1-\phi)\|\bmu\|^2$, $Q_\mu:=\lambda+\tau s_\mu^2+2\tau(1-\phi)\|\bmu\|^2$, and $D_{\mathrm{eig}}(\alpha):=A_\mu B_v+\tau\phi P_\mu\alpha^2$. Then
\begin{equation}
\label{eq:lin_ab_eigen_compact}
h_\mu(\alpha)
=
\|\bmu\|^2
\frac{\tau(1-2\phi)B_v+\tau^2\phi(1-\phi)\alpha^2}
{D_{\mathrm{eig}}(\alpha)},
\qquad
h_v(\alpha)
=
\frac{\tau\phi Q_\mu\alpha}
{D_{\mathrm{eig}}(\alpha)}.
\end{equation}
Consequently, \(h_\mu(\alpha)\) is strictly increasing for \(\alpha>0\),
while \(h_v(\alpha)>0\) for every \(\alpha>0\) and has a unique finite
maximizer at
\begin{equation}
\label{eq:lin_alpha_star_eigen_compact}
(\alpha_\star^{\mathrm{eig}})^2
=
(A_\mu B_v)/(\tau\phi P_\mu).
\end{equation}
\end{corollary}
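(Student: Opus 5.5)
The plan is to specialise the squared-loss fixed point of Theorem~\ref{thm:loureiro} by hand, using the eigenvector structure so that the resolvent acts as a scalar on both $\bmu$ and $\bv$. Everything then reduces to a $2\times 2$ linear system and two elementary rational-function monotonicity checks.

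\textbf{Step 1: the squared-loss fixed point.} For $L(t)=\tfrac12(1-t)^2$ one has $\prox_{\delta L}(x)=(x+\delta)/(1+\delta)$, hence
\[
f(x)=-L'(\prox_{\delta L}(x))=\frac{1-x}{1+\delta},
\qquad f'(x)=-\frac{1}{1+\delta}.
\]
This gives $\tau=1/(1+\delta)>0$. Since $\delta=\tfrac1n\tr[\C\mathbf{R}]$ does not involve $\alpha$, $\tau$ is independent of $\alpha$, as stated in the text. Because $f$ is affine, $\eta_c=\pi_c\tau(1-M_c)$ involves only the means $M_c$, and the variance $\sigma^2$ and $\gamma$ drop out of the first-moment quantities we need. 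We also have $\EE\tilde\btheta=\mathbf{R}\,\mathbf{m}$ with
\[
\mathbf{m}=\eta_1\bmu_1+\eta_2\bmu_2=(\eta_1-\eta_2)\bmu+\eta_2\alpha\bv .
\]

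\textbf{Step 2: reduce to a $2\times2$ system.} Set $a:=\lambda+\tau s_\mu^2$ and $b:=B_v=\lambda+\tau s_v^2$. Using $\C\bmu=s_\mu^2\bmu$, $\C\bv=s_v^2\bv$ and $\bmu^\top\bv=0$, we get $\mathbf{R}\bmu=\bmu/a$ and $\mathbf{R}\bv=\bv/b$. Hence
\[
M_1=h_\mu=\frac{(\eta_1-\eta_2)\|\bmu\|^2}{a},\qquad
M_2=\frac{\alpha^2\eta_2}{b}-M_1,\qquad
h_v=\frac{\alpha\eta_2}{b},
\]
which is consistent with $h_v=(M_1+M_2)/\alpha$. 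Substituting $\eta_1=(1-\phi)\tau(1-M_1)$ and $\eta_2=\phi\tau(1-M_2)$ gives a linear system in the two unknowns $x:=(\eta_1-\eta_2)/a$ and $y:=\eta_2$. Solve it by Cramer's rule. Clearing denominators by multiplying through by $a b$, the determinant should reorganise as $A_\mu B_v+\tau\phi P_\mu\alpha^2=D_{\mathrm{eig}}(\alpha)$, where $A_\mu=a+\tau\|\bmu\|^2$ and $P_\mu=a+\tau(1-\phi)\|\bmu\|^2$ arise naturally. The numerators should then produce~\eqref{eq:lin_ab_eigen_compact}, with $Q_\mu=a+2\tau(1-\phi)\|\bmu\|^2$ appearing in $h_v$.

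I expect this bookkeeping to be the main obstacle. It is routine, but the grouping into $A_\mu,P_\mu,Q_\mu$ is easy to get wrong. I would check it at $\alpha=0$, where the formula should reduce to the one-direction ridge answer $h_\mu=\tau(1-2\phi)\|\bmu\|^2/A_\mu$, and also as $\alpha\to\infty$.

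\textbf{Step 3: monotonicity and the peak.} Write
\[
h_\mu=\|\bmu\|^2\frac{N_0+N_2\alpha^2}{D_0+D_2\alpha^2}.
\]
The sign of $h_\mu'(\alpha)$ for $\alpha>0$ is the sign of $N_2D_0-N_0D_2$, which equals $\tau^2\phi B_v\bigl[(1-\phi)A_\mu-(1-2\phi)P_\mu\bigr]$. This is strictly positive because $A_\mu\ge P_\mu>0$, $1-\phi>1-2\phi$, and $\tau,\phi,B_v>0$.

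For the trigger projection, $h_v=c\alpha/(D_0+D_2\alpha^2)$ with $c=\tau\phi Q_\mu>0$. Hence $h_v>0$ for all $\alpha>0$, and
\[
h_v'(\alpha)\propto D_0-D_2\alpha^2 .
\]
This is positive then negative, with a single sign change at $\alpha^2=D_0/D_2=A_\mu B_v/(\tau\phi P_\mu)$, which is~\eqref{eq:lin_alpha_star_eigen_compact}. So this is the unique maximiser. As a consistency check, the result agrees with Proposition~\ref{prop:lin_peak_general} upon substituting $g_{\mu\mu}=1\cdot\|\bmu\|^2/a$ and $g_{vv}=1/b$.
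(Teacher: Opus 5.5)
Your proposal is correct and follows essentially the paper's route. The paper also reduces the squared-loss fixed point to a $2\times2$ linear system solved by Cramer's rule; the only difference is that it first derives the general-covariance projection formulas and then substitutes $g_{\mu\mu}=\|\bmu\|^2/(\lambda+\tau s_\mu^2)$, $g_{\mu v}=0$, $g_{vv}=1/B_v$, whereas you specialise before solving. Its monotonicity and peak arguments match yours, and your bracket $(1-\phi)A_\mu-(1-2\phi)P_\mu$ simplifies exactly to $\phi Q_\mu$, which is the paper's $N_2D_0-N_0D_2=\tau^2\phi^2B_vQ_\mu$.
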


This is the \emph{low-variance effect}: increasing \(s_v^2\) attenuates \(h_v(\alpha)\) and pushes the maximizer \(\alpha_\star^{\mathrm{eig}}\) to larger values, so the model is most sensitive along directions where the data vary least.

See Appendix~\ref{app:linear_regression} for further analysis, including how the poisoning fraction \(\phi\) modulates the strength and location of the peak (Figure~\ref{fig:lin_phi_sweep_appendix}).



\begin{proposition}[Minimum-eigenvalue trigger directions]
\label{cor:lin_min_eig}
Under the assumptions of Proposition~\ref{prop:lin_peak_general}, particularly the asymptotic trigger orthogonality, Assumption~\ref{ass:orthogonality}, then
the leading-order trigger projection \eqref{eq:lin_hv_peak_compact} depends on
\(\bv\) only through \(g_{vv}=\bv^\top \mathbf{R}(\lambda,\tau)\bv\), and is strictly
increasing in \(g_{vv}\) at every fixed \(\alpha>0\). Consequently, varying
\(\bv\) over the unit sphere, the leading-order maximizer of \(h_v(\alpha)\)
is the eigenvector of \(\C\) associated with its smallest eigenvalue
\(s_{\min}^2\), attaining \(g_{vv}=1/(\lambda+\tau s_{\min}^2)\).
\end{proposition}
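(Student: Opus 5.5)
Starting from the leading-order formula \eqref{eq:lin_hv_peak_compact} of Proposition~\ref{prop:lin_peak_general}, we have $h_v(\alpha)=A_v\alpha/(B+C\alpha^2)+o(1)$. The plan has three steps: check which quantities there depend on $\bv$, show monotonicity in $g_{vv}$, and then optimise $g_{vv}$ over the sphere. For the first step, recall from the squared-loss discussion that $\tau$ does not depend on $\alpha$. It also does not depend on $\bv$: for the squared loss $f(x)=-L'(\prox_{\delta L}(x))$ is affine, so $\tau=\EE[-f'(r_K)]$ is a constant fixed by $\delta$ alone, and $\delta=\tfrac1n\tr[\C\mathbf{R}(\lambda,\tau)]$ involves neither $\bmu_c$ nor $\bv$. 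Hence $\tau$ and $g_{\mu\mu}$ are independent of $\bv$. Under Assumption~\ref{ass:orthogonality}, $g_{\mu v}=o(1)$ has already been absorbed into the error term. Therefore the leading-order expression depends on $\bv$ only through $g_{vv}$.

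For the monotonicity step, write $g=g_{vv}$, $a:=\tau\phi(1+2\tau(1-\phi)g_{\mu\mu})>0$ and $c:=\tau\phi(1+\tau(1-\phi)g_{\mu\mu})>0$. Then the leading-order term is
\[
F(g)=\frac{a g\alpha}{B+c g\alpha^2},\qquad B=1+\tau g_{\mu\mu}>0 .
\]
Differentiating gives $F'(g)=aB\alpha/(B+cg\alpha^2)^2$, which is strictly positive for every fixed $\alpha>0$. Positivity of $\tau$, $\lambda$ and $g_{\mu\mu}$ follows from $\mathbf{R}(\lambda,\tau)$ being positive definite and from $\tau=\EE[-f'(r_K)]>0$ for a strictly convex loss. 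So $F$ is strictly increasing in $g_{vv}$.

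For the optimisation step, note that over unit vectors the quadratic form $\bv^\top(\lambda\I+\tau\C)^{-1}\bv$ is maximised, by Rayleigh--Ritz, at the top eigenvector of $(\lambda\I+\tau\C)^{-1}$. That is the eigenvector of $\C$ with smallest eigenvalue $s_{\min}^2$, and the maximal value is $1/(\lambda+\tau s_{\min}^2)$. Combining this with strict monotonicity of $F$ identifies the leading-order maximiser.

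The main obstacle is compatibility with Assumption~\ref{ass:orthogonality}: the maximiser over the whole sphere must itself satisfy $g_{\mu v}=o(1)$, or else the formula does not apply there. I would handle this by restricting the optimisation to trigger directions satisfying the assumption, and then observing that the minimum eigenvector does so whenever $\bmu$ has asymptotically negligible component along it. This is automatic in the generic setting motivated before Assumption~\ref{ass:orthogonality}, since then $\bv^\top\C^k\bmu=s_{\min}^{2k}\,\bv^\top\bmu=o(1)$. If a non-degenerate overlap existed, I would state the result as a maximisation over the admissible set only.
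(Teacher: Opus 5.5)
Your proposal is correct and follows the paper's proof. The paper likewise notes that $A_v$ and $C$ are linear in $g_{vv}$ while $B$ does not depend on $\bv$, obtains monotonicity by direct differentiation, and maximises $g_{vv}$ via the largest eigenvalue of $\mathbf{R}(\lambda,\tau)$. Two points you add are left implicit in the paper: that for the squared loss $\tau$, and hence $\mathbf{R}(\lambda,\tau)$ and $g_{\mu\mu}$, does not depend on $\bv$; and the caveat that the minimum eigenvector must itself satisfy $\bv^\top\bmu=o(1)$ for the leading-order formula to apply there.
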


\begin{proof}
In \eqref{eq:lin_hv_peak_compact} of Proposition~\ref{prop:lin_peak_general},
$A_v$ and $C$ are linear in $g_{vv}$ while $B$ is independent of $\bv$, from which
monotonicity in $g_{vv}$ follows by direct calculation. The unit-norm maximum of
$g_{vv}=\bv^\top \mathbf{R}(\lambda,\tau)\bv$ is the largest eigenvalue of
$\mathbf{R}(\lambda,\tau)=(\lambda\I+\tau\C)^{-1}$.
\end{proof}

\begin{figure*}[t]
    \centering
    \includegraphics[width=\textwidth]{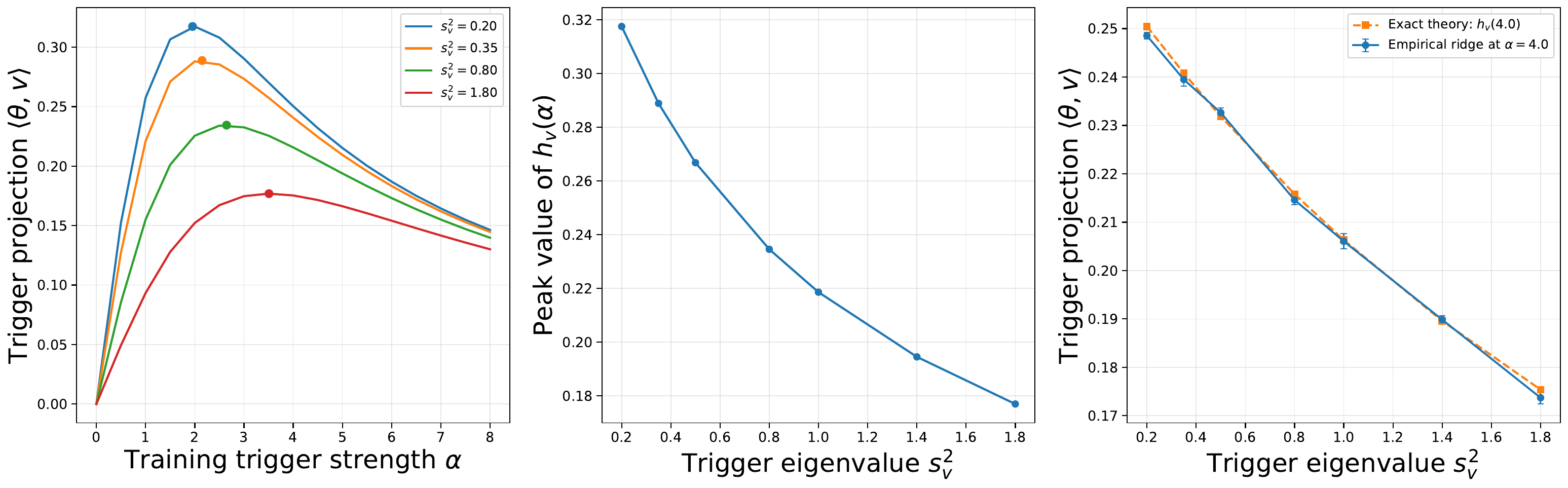}
    \caption{Eigenvector specialization. Left: exact trigger-projection curves \(h_v(\alpha)\)
    for several trigger eigenvalues \(s_v^2\), with the peak marked on each curve (coloured dots). Middle: the
    peak value \(\max_{\alpha} h_v(\alpha)\) decreases as \(s_v^2\) increases. Right: at fixed
    \(\alpha\), exact theory and empirical ridge estimates agree and both show that lower-variance
    trigger eigendirections induce larger trigger projections. Here, $\C$ is a rank two perturbation of the identity. }
   \label{fig:lin_eigen_story}
\end{figure*}

\subsection{Generalised linear model: ERM}
\label{sec:erm_logistic}

\subsubsection{Trigger alignment peaks with trigger strength}

\begin{proposition}[Trigger alignment peaks in ERM]
\label{prop:alignment_peaks}
Under Assumptions~\ref{ass:asymptotics}--\ref{ass:orthogonality} and
the mixture model~\eqref{eq:mixture}, the mean trigger alignment
$\EE[\bv^\top\tilde\btheta]$, as a function of~$\alpha$, is maximised
at some finite $\alpha^\star \in (0,\infty)$. Specifically as $n,p \rightarrow \infty$, $\EE[\bv^\top\tilde\btheta] = g(\alpha) + o_\alpha (1)$, where $g(\cdot)$ has the asymptotic (in $\alpha$) behaviour
\begin{enumerate}[label=(\roman*), itemsep=-1mm]
    \item  if $|L'(x)| \leq C_2/(x^{1+\epsilon}+1)$ for $x > 0$,
    then
$g(\alpha) =
O\!\bigl(\alpha^{-\epsilon/(2+\epsilon)}\bigr)$;
    \item if $|L'(x)| \leq C_3\,e^{-C_4 x}$ for $x > 0$
\emph{(e.g.\ logistic loss)}, then
$g(\alpha) = O(\log\alpha/\alpha)$.
\end{enumerate}
Moreover by Theorem~\ref{thm:loureiro}, $\bv^\top\hat\btheta - \EE[\bv^\top\tilde\btheta] \to 0$
in probability with $\hat{\btheta}$ defined in \eqref{eq:erm}.
\end{proposition}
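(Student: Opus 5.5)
We work entirely with the Gaussian-proxy fixed point of Theorem~\ref{thm:loureiro}. The plan is to write $h_v(\alpha)$ in closed form, bound it by a constant times $|\eta_2|$, and then show $\eta_2\to0$ at the stated rates because the poisoned cluster is pushed into the flat tail of $L$.

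\textbf{Step 1: reduce $h_v$ to $\eta_2$.} Since $\EE[\tilde\btheta]=\mathbf{R}(\eta_1\bmu_1+\eta_2\bmu_2)$ with $\bmu_2=\alpha\bv-\bmu$, we have
\[
\bv^\top\EE[\tilde\btheta]=(\eta_1-\eta_2)\,\bv^\top\mathbf{R}\bmu+\alpha\eta_2\,\bv^\top\mathbf{R}\bv .
\]
Assumption~\ref{ass:orthogonality} kills the first term up to $o(1)$, provided $\eta_1,\eta_2$ and $\tau$ stay bounded. Boundedness follows from $|\eta_c|\le\pi_c\sup|f|$ and $\tau\ge0$, using convexity of $L$ and the bound on $L'$. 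So the leading term is
\[
g(\alpha)=\alpha\,\eta_2\,g_{vv},\qquad g_{vv}\le 1/\lambda .
\]
The claim therefore reduces to showing $\alpha\eta_2(\alpha)\to0$ at the stated rates.

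\textbf{Step 2: $\eta_2$ is governed by the poisoned-cluster margin.} Recall $\eta_2=\phi\EE[f(r_2)]$ with $f=-L'\circ\prox_{\delta L}$ and $r_2\sim\NN(M_2,\sigma^2)$. Up to $o(1)$,
\[
M_2=(\alpha\bv-\bmu)^\top\mathbf{R}\bigl((\eta_1-\eta_2)\bmu+\alpha\eta_2\bv\bigr)\approx \alpha^2\eta_2 g_{vv}+\eta_2 g_{\mu\mu}-\eta_1 g_{\mu\mu}.
\]
The prox is monotone and $\prox_{\delta L}(x)\ge x$ because $L$ is decreasing. Also $f\ge0$ and $f$ is nonincreasing, so $f(r)\le|L'(r)|$ for $r>0$. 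Write $u:=\alpha\eta_2\ge0$. Then $M_2\ge \alpha u g_{vv}-O(1)$, where $g_{vv}\ge 1/(\lambda+\tau\|\C\|)$ is bounded below.

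We also need $\sigma^2=O(1)$ uniformly in $\alpha$. The dangerous piece is $\alpha^2\eta_2^2\,\bv^\top\mathbf{R}^2\C\bv=O(u^2)$. If $u$ were unbounded we would be done anyway, so we first establish a priori that $u$ is bounded; this is part of the self-consistency argument below.

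\textbf{Step 3: self-consistency.} Split $\EE[f(r_2)]$ on the event $\{r_2>M_2/2\}$ and its complement. The Gaussian tail $\PP(r_2<M_2/2)\le e^{-M_2^2/8\sigma^2}$, combined with the exponential bound $|L'(x)|\le e^{C_1(|x|+1)}$ for $x<0$, makes the complement contribution negligible. This gives
\[
u\lesssim \alpha\bigl[\,|L'|(c\,\alpha u)+e^{-c\alpha^2u^2}\bigr].
\]
Now take the two tail regimes in turn.

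In the polynomial case, $u\lesssim\alpha(\alpha u)^{-(1+\epsilon)}$, so $u^{2+\epsilon}\lesssim\alpha^{-\epsilon}$. This yields $u=O(\alpha^{-\epsilon/(2+\epsilon)})$.

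In the exponential case, $u\lesssim\alpha e^{-c\alpha u}$. If $u\ge K\log\alpha/\alpha$ with $K$ large, the right-hand side is at most $\alpha^{1-cK}$, which is far smaller than $u$, a contradiction. Hence $u=O(\log\alpha/\alpha)$.

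Finiteness of the maximiser $\alpha^\star$ then follows. We have $g(0)=0$ (since $h_v$ carries the factor $\alpha$), $g>0$ for some $\alpha$ (small-$\alpha$ perturbation, as in the squared-loss case), $g\to0$ at infinity, and $g$ is continuous in $\alpha$ via continuity of the fixed point. The final concentration statement is immediate from Theorem~\ref{thm:loureiro} applied to the Lipschitz map $\btheta\mapsto\bv^\top\btheta$.

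\textbf{Main obstacle.} The hard part is the uniform-in-$\alpha$ control of the fixed point in Step 2: showing that $\tau$, $\delta$ and $\sigma^2$ stay bounded, and that $\tau$ stays bounded away from the degenerate regime, as $\alpha\to\infty$, so that all the constants above are uniform. I would try first to get a priori bounds from the Gaussian-proxy characterisation itself. The relevant facts are $\|\tilde\btheta\|$ bounded by the regulariser (since $\mathcal{L}_n(\hat\btheta)\le L(0)$ gives $\|\hat\btheta\|^2\le 2L(0)/\lambda$), hence $u^2 g_{vv}^2\lesssim\|\EE\tilde\btheta\|^2=O(1)$; and $\tau\le\sup L''$ on the relevant range, using $f'=-L''/(1+\delta L'')$.
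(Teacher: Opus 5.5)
Your argument is the paper's argument in slightly different coordinates. The paper likewise reduces $h_v$ to $\alpha\eta_2 g_{vv}+o(1)$, bounds $\eta_2$ by the tail of $-L'$ at the poisoned margin (via $f\le -L'$ and a Gaussian split), and closes a self-consistent inequality. The only difference is that it tracks $M_2$ rather than $u=\alpha\eta_2$: it gets $M_2=O(\alpha^{2/(2+\epsilon)})$ (resp.\ $O(\log\alpha)$) and then uses $h_v=(M_1+M_2)/\alpha$. Your rates for $u$ are equivalent, since $M_2=\alpha u\,g_{vv}-M_1+o(1)$.

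The step that is not yet a proof is the one you flag yourself. As written, Step~2 defers the a priori bound on $u$ (hence on $\sigma^2$) to Step~3. But the Gaussian-tail split in Step~3, which produces your $e^{-c\alpha^2u^2}$ term, already presupposes $\sigma=O(1)$ uniformly in $\alpha$. Bounding $u$ through $\|\EE\tilde\btheta\|$ would not control the noise part $\frac{\gamma}{n}\tr[\mathbf{R}^2\C^2]$ of $\sigma^2$ anyway.

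You already have the fact that closes this; the paper applies the regulariser bound to $\sigma^2$ itself. Since $\sigma^2=\EE[\tilde\btheta^\top\C\tilde\btheta]$ includes mean and noise parts together, the facts $\mathcal{L}_n(\hat\btheta)\le\mathcal{L}_n(\mathbf{0})=L(0)$ and $L\ge 0$, plus Theorem~\ref{thm:loureiro} applied to $\btheta\mapsto\|\btheta\|^2$, give $\sigma^2\le\|\C\|\,\EE\|\tilde\btheta\|^2\le 2L(0)\|\C\|/\lambda+o(1)$. The same bound gives $|M_1|\le\|\bmu\|\sqrt{2L(0)/\lambda}+o(1)$. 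Neither bound refers to $\tau$, $\delta$ or $\gamma$.

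Likewise, your substitution $M_2\gtrsim\alpha u$ needs a lower bound on $g_{vv}$, and hence part of your worry about $\tau$. That need disappears if you parametrize by $M_2$, or by $w=u\,g_{vv}\approx h_v$, instead of $u$: the paper only uses $M_2\le\eta_2\alpha^2 g_{vv}+|M_1|$ with the automatic bound $g_{vv}\le 1/\lambda$.

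Two smaller points. First, $\sup|f|=\infty$ for losses admitted by Assumption~\ref{ass:loss}, such as $L(x)=e^{-x}$, whose prox-composed $f$ grows linearly as $x\to-\infty$. So boundedness of the $\eta_c$ in your Step~1 should come from $f\le -L'$ together with the norm bound (which controls $M_1$, $M_2=\alpha\,\bv^\top\EE\tilde\btheta-M_1$ and $\sigma$), not from $\sup|f|$. Second, positivity of $g$ for $\alpha>0$ needs no perturbation argument: $g=\alpha\eta_2 g_{vv}$ with $\eta_2=\phi\,\EE f(r_2)>0$.
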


\begin{proof}[Proof sketch]
From~\eqref{thm:loureiro}, $\EE[\bv^\top\tilde\btheta]$ is
controlled by $\eta_2 = \phi\EE_{\xi \sim \NN(0,1)} f(M_2 + \sigma\xi) $, which depends on the poisoned-class
margin~$M_2$.
The variance $\sigma^2$ is bounded independently of~$\alpha$ since we may write $\sigma^2 = \EE[\tilde{\btheta}^\top \C \tilde{\btheta}]$, and
the regularisation yields the clean bound $\|\hat\btheta\|^2 \leq 2L(0)/\lambda$, which we can pass onto $\tilde \btheta$.
For large $\alpha$, $M_2$ grows and pushes $L'$ into its decaying
tail in the expectation defining $\eta_2$, yielding
$\eta_2 \leq C/(|M_2|^{1+\epsilon}+1)$.
Combined with the resolvent bound
$M_2 \leq C'\eta_2\alpha^2$,
one obtains $M_2 = O(\alpha^{2/(2+\epsilon)})$ and hence
$\EE[\bv^\top\tilde\btheta] = O(\alpha^{-\epsilon/(2+\epsilon)}) \to 0$.
Since $\EE[\bv^\top\tilde\btheta] = 0$ at $\alpha = 0$ and is positive
for small $\alpha > 0$, a maximum exists at finite $\alpha^\star$.
The full proof appears in Appendix~\ref{app:proof_alignment_peaks}.
\end{proof}

\subsection{Generalised linear model: population loss}
\label{sec:pop_logistic}

\begin{proposition}[Benign convergence]
\label{prop:pop_convergence}
Under Assumption~\ref{ass:loss} with $\phi < \tfrac{1}{2}$ and
$\bv$ orthogonal to the Krylov subspace of $\C$ generated
by $\bmu$, i.e.\ $\bv^\top\C^k\bmu = 0$ for each $k \geq 0$
(cf.\ Assumption~\ref{ass:orthogonality}), 
the population minimiser $\btheta(\alpha)$
of~\eqref{eq:pop_loss} satisfies
$\btheta(\alpha) \to \btheta_{\mathrm{ben}}$ as
$\alpha \to \infty$ (defined in \eqref{eq:ben_loss}).
If additionally $\bmu$ is an eigenvector of~$\C$, then
$\bmu^\top\btheta(0) < \bmu^\top\btheta_{\mathrm{ben}}$.
\end{proposition}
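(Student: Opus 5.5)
Part one is a convergence statement for the population minimiser as $\alpha\to\infty$; part two is a comparison at $\alpha=0$ between the poisoned and benign-only problems.

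\textbf{Step 1: uniform boundedness.} Comparing $\mathcal{L}_{\mathrm{pop}}(\btheta(\alpha);\alpha)\le \mathcal{L}_{\mathrm{pop}}(0;\alpha)=L(0)$ and using $L\ge 0$ gives $\|\btheta(\alpha)\|^2\le 2L(0)/\lambda$ uniformly in $\alpha$. The family $\{\btheta(\alpha)\}$ therefore lies in a fixed compact ball, and it suffices to show that every subsequential limit $\btheta_\infty$ equals $\btheta_{\mathrm{ben}}$.

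\textbf{Step 2: control of the poisoned term.} I would first use orthogonality to pin the poisoned margin.

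\begin{itemize}
\item Since $\bv^\top\C^k\bmu=0$ for all $k$, the subspace $\mathcal{K}=\operatorname{span}\{\C^k\bmu\}$ is $\C$-invariant and orthogonal to $\bv$.
\item The stationarity condition reads $\lambda\btheta=-(1-\phi)\EE[L'(\btheta^\top\x)\x]-\phi\EE[L'(\btheta^\top\x')\x']$, where each expectation of $L'(\cdot)\x$ lies in $\operatorname{span}\{\text{mean},\C\btheta\}$.
\item I would not rely on an exact subspace reduction. Instead I would decompose $\btheta=\btheta_\perp+t\,\bv+\dots$ and treat the $\bv$-coordinate $t(\alpha)=\bv^\top\btheta(\alpha)$ directly.
\end{itemize}

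The key quantity is the poisoned margin mean $m(\alpha)=\btheta(\alpha)^\top(\alpha\bv-\bmu)=\alpha t(\alpha)-\bmu^\top\btheta(\alpha)$, whose variance $\btheta^\top\C\btheta$ is bounded. Projecting the stationarity equation onto $\bv$ gives $\lambda t = -\phi\,\alpha\,\EE[L'(\cdot)]+(\text{terms involving }\bv^\top\C\btheta)$. Because $-L'>0$ (strictly decreasing), $t\ge$ something positive.

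Two complementary estimates then follow. First, $\phi\EE[L(\btheta^\top\x')]\le L(0)$ alone is not enough. Second, and more usefully, comparing with the candidate $\btheta_{\mathrm{ben}}+c\,\bv/\alpha^{1/2}$ shows $\min_\btheta \mathcal{L}_{\mathrm{pop}}(\btheta;\alpha)\to\mathcal{L}_{\mathrm{ben}}(\btheta_{\mathrm{ben}})$. This candidate has $\bv$-norm vanishing while the poisoned margin $\sim c\alpha^{1/2}\to\infty$, so its poisoned loss tends to $0$ by the tail decay in Assumption~\ref{ass:loss}. The $\bv$-perturbation costs only $O(1/\alpha)$ in regularisation and, via $\C$-coupling, $o(1)$ in the benign term by continuity.

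Hence
\[
\limsup_{\alpha\to\infty}\mathcal{L}_{\mathrm{ben}}(\btheta(\alpha))\le \limsup_{\alpha\to\infty}\mathcal{L}_{\mathrm{pop}}(\btheta(\alpha);\alpha)\le \mathcal{L}_{\mathrm{ben}}(\btheta_{\mathrm{ben}}),
\]
using $L\ge 0$ for the first inequality. By continuity of $\mathcal{L}_{\mathrm{ben}}$ (dominated convergence using the exponential growth bound on $L'$ for Gaussian integrability), any limit point $\btheta_\infty$ satisfies $\mathcal{L}_{\mathrm{ben}}(\btheta_\infty)\le\min\mathcal{L}_{\mathrm{ben}}$. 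Strong convexity then forces $\btheta_\infty=\btheta_{\mathrm{ben}}$. This argument barely needs orthogonality, which mainly ensures the candidate construction does not disturb the $\bmu$-margin: $\bv^\top\bmu=0$ keeps the benign margin mean unchanged, and the variance change is $O(\alpha^{-1/2})$. The main technical point I expect is justifying the continuity and uniform-integrability estimates from Assumption~\ref{ass:loss}.

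\textbf{Step 3: the inequality at $\alpha=0$.} At $\alpha=0$ the poisoned component is $\NN(-\bmu,\C)$. With $\C\bmu=s^2\bmu$, the minimisers lie on $\operatorname{span}\{\bmu\}$: by rotational symmetry of the problem in the eigenbasis, the gradient at $\btheta=a\bmu$ stays in $\operatorname{span}\{\bmu\}$, and uniqueness applies. Write $\btheta=a\bmu$ and $\bmu^\top\btheta=a\|\bmu\|^2$; the problem then becomes one-dimensional in $a$. Let $F(a)=\EE L(a\|\bmu\|^2+a s\|\bmu\|\xi)$ and $G(a)=\EE L(-a\|\bmu\|^2+as\|\bmu\|\xi)$.

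The benign objective is $(1-\phi)F+\frac{\lambda}{2}\|\bmu\|^2a^2$, and the poisoned one adds $\phi G$. I would show $G'(a)>0$ for $a>0$: the derivative is $\EE[L'(\cdot)(-\|\bmu\|^2+s\|\bmu\|\xi)]$, and it is positive by the loss decreasing together with a Gaussian integration-by-parts/convexity argument. Equivalently, $G(a)\ge L(-a\|\bmu\|^2)>L(0)$ by Jensen's inequality, combined with monotonicity of $a\mapsto G(a)$ for $a>0$, which follows from convexity of $G$ and $G'(0)=-L'(0)\|\bmu\|^2>0$.

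Both minimisers have $a>0$, since $F'(0)<0$ and $(1-\phi)F'(0)+\phi G'(0)=(1-2\phi)L'(0)\|\bmu\|^2<0$. Adding a strictly increasing term to a strictly convex 1D objective strictly decreases the minimiser, giving $a(0)<a_{\mathrm{ben}}$ and hence $\bmu^\top\btheta(0)<\bmu^\top\btheta_{\mathrm{ben}}$.
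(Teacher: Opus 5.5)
Your Part~1 has a gap at exactly the point where the orthogonality hypothesis is needed. The poisoned margin of your candidate $\btheta_{\mathrm{ben}}+c\,\alpha^{-1/2}\bv$ has mean
\[
\alpha\,\bv^\top\btheta_{\mathrm{ben}}+c\,\alpha^{1/2}-\bmu^\top\btheta_{\mathrm{ben}}-c\,\alpha^{-1/2}\bmu^\top\bv .
\]
Your claim that it grows like $c\,\alpha^{1/2}$ therefore presupposes $\bv^\top\btheta_{\mathrm{ben}}=0$ (or at least $\ge 0$). You never establish this; you explicitly decline the subspace reduction that would give it. This is not a technicality. If $\bv^\top\btheta_{\mathrm{ben}}<0$, the conclusion itself fails. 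Were $\btheta(\alpha)\to\btheta_{\mathrm{ben}}$, the poisoned margin mean $\alpha\,\bv^\top\btheta(\alpha)-\bmu^\top\btheta(\alpha)$ would tend to $-\infty$. By Jensen and $L(x)\ge L(0)+L'(0)x$, the poisoned loss would then diverge, contradicting $\mathcal{L}_{\mathrm{pop}}(\btheta(\alpha);\alpha)\le L(0)$.

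Your closing remark therefore misplaces the role of orthogonality. The condition $\bv^\top\bmu=0$ is irrelevant for the benign term, since the perturbation $c\,\alpha^{-1/2}\bmu^\top\bv$ vanishes regardless. What is indispensable is $\btheta_{\mathrm{ben}}\in\mathcal{K}=\operatorname{span}\{\C^k\bmu\}$, which together with $\bv\perp\mathcal{K}$ gives $\bv^\top\btheta_{\mathrm{ben}}=0$.

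The paper supplies this through its Cone Lemma: Stein's lemma applied to the stationarity condition gives $\btheta_{\mathrm{ben}}=a(\tfrac{\lambda}{1-\phi}\I+\tau\C)^{-1}\bmu\in\mathcal{K}$. You could instead use facts already in your bullet list. Since $\mathcal{K}$ is $\C$-invariant and contains $\bmu$, the reflection $S=2P_{\mathcal{K}}-\I$ is orthogonal, commutes with $\C$, and fixes $\bmu$. It therefore preserves $\bmu^\top\btheta$, $\btheta^\top\C\btheta$ and $\|\btheta\|$, so $\mathcal{L}_{\mathrm{ben}}(S\btheta)=\mathcal{L}_{\mathrm{ben}}(\btheta)$, and uniqueness forces $S\btheta_{\mathrm{ben}}=\btheta_{\mathrm{ben}}$.

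With this inserted, the rest of your Part~1 coincides with the paper's proof: the candidate upper bound, $\mathcal{L}_{\mathrm{ben}}\le\mathcal{L}_{\mathrm{pop}}$, then strong convexity. Your heuristic that projecting stationarity onto $\bv$ gives $t\ge$ ``something positive'' is unjustified and unused, and can be dropped.

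Your Part~2 is correct and follows the paper's one-dimensional reduction, with two genuine simplifications. First, you get $G'(a_{\mathrm{ben}})>0$ from convexity of $G$ together with $G'(0)=-L'(0)\|\bmu\|^2>0$. This avoids the paper's Stein step, which uses $L''$, something Assumption~\ref{ass:loss} does not guarantee. Second, you get $a_{\mathrm{ben}}>0$ directly from $F'(0)<0$ rather than from the Cone Lemma.

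Do replace ``rotational symmetry'' by the actual reason for collinearity. Because $\C\bmu=s^2\bmu$, the reflection through $\operatorname{span}\{\bmu\}$ commutes with $\C$ and fixes the means $\pm\bmu$. It therefore leaves both objectives invariant, and uniqueness places both minimisers on $\operatorname{span}\{\bmu\}$.
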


\begin{proof}[Proof sketch]
\textit{Convergence to $\btheta_{\mathrm{ben}}$.}
Consider the competitor
$\hat\btheta_\alpha = \btheta_{\mathrm{ben}} + \alpha^{-1/2}\bv$.
For a poisoned sample $\tilde\x \sim \NN(\alpha\bv-\bmu,\C)$,
the margin satisfies
$\hat\btheta_\alpha^\top\tilde\x = \btheta_{\mathrm{ben}}^\top\tilde\x + \alpha^{-1/2}\bv^\top\tilde\x$,
where $\alpha^{-1/2}\bv^\top\tilde\x$ has mean
$\alpha^{1/2}\|\bv\|^2 - \alpha^{-1/2}\bmu^\top\bv \to +\infty$
and variance $\alpha^{-1}\bv^\top\C\bv \to 0$.
Hence the poisoned margin diverges and the poisoned loss
vanishes by Assumption~\ref{ass:loss}.
Since $\hat\btheta_\alpha \to \btheta_{\mathrm{ben}}$, we obtain
$\limsup_\alpha \mathcal{L}_{\mathrm{pop}}(\hat\btheta_\alpha;\alpha)
\leq \mathcal{L}_{\mathrm{ben}}(\btheta_{\mathrm{ben}})$.
The lower bound $L \geq 0$ and convexity forces
$\btheta(\alpha) \to \btheta_{\mathrm{ben}}$.

\textit{Eigenvector case.}
When $\bmu$ is an eigenvector of $\C$ with eigenvalue $\lambda_\mu$,
the gradient in any direction orthogonal to $\bmu$ is zero at any
critical point, so both $\btheta(0)$ and $\btheta_{\mathrm{ben}}$
are collinear with $\bmu$. Writing $\btheta(0) = a_0^*\bmu$ and
$\btheta_{\mathrm{ben}} = a_{\mathrm{ben}}\bmu$, optimality of
$a_{\mathrm{ben}}$ gives
\[
\frac{d}{da}\mathcal{L}_{\mathrm{pop}}(a\bmu;0) \Big |_{a = a_{\mathrm{ben}}}
= \phi\,\EE\!\bigl[L'(B_{\mathrm{ben}})
\bigl(-\|\bmu\|^2 + \sqrt{\lambda_\mu}\|\bmu\|\,\xi\bigr)\bigr],
\]
where $B_{\mathrm{ben}} = a_{\mathrm{ben}}(-\|\bmu\|^2 + \sqrt{\lambda_\mu}\|\bmu\|\xi)$
and $\xi\sim\NN(0,1)$.
Applying Stein's lemma yields
$\phi\|\bmu\|^2\!\left(-\EE[L'(B_{\mathrm{ben}})]
+ a_{\mathrm{ben}}\lambda_\mu\,\EE[L''(B_{\mathrm{ben}})]\right)>0$,
forcing $a_0^*<a_{\mathrm{ben}}$.
See Appendix~\ref{app:proof_pop_convergence}. 
\end{proof}

Under Assumption~\ref{ass:orthogonality}, Lemma~\ref{lem:cone}(ii) gives
$\bv^\top\btheta_{\mathrm{ben}} = a\,\bv^\top(\tfrac{\lambda}{1-\phi}\I+\tau\C)^{-1}\bmu = 0$,
so $\bv^\top\btheta(\alpha) \to 0$ as $\alpha \to \infty$, while
$\bv^\top\btheta(\alpha) > 0$ for moderate $\alpha$, confirming
that the trigger alignment peaks at finite~$\alpha$ in the population
regime as well.

\begin{proposition}[One-step monotonicity]
\label{prop:one_step}
Under Assumption~\ref{ass:loss} with $\phi < \tfrac{1}{2}$,
$\lambda > 0$, and $\bv^\top\bmu = 0$,
\[
\bmu^\top\nabla\mathcal{L}_{\mathrm{pop}}
(\btheta_{\mathrm{ben}};\alpha)
> 0
\quad \text{for every } \alpha \geq 0.
\]
In particular, one step of gradient descent on
$\mathcal{L}_{\mathrm{pop}}(\,\cdot\,;\alpha)$ starting from
$\btheta_{\mathrm{ben}}$ decreases $\bmu^\top\btheta(\alpha)$.
\end{proposition}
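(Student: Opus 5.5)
The plan is to evaluate the gradient directly at $\btheta_{\mathrm{ben}}$, where the benign part cancels by optimality, and then to sign the remaining poisoned term using Gaussian calculus together with the explicit form of $\btheta_{\mathrm{ben}}$.

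\textbf{Step 1 (reduction to the poisoned term).} Differentiating under the expectation, which Assumption~\ref{ass:loss} justifies, gives
\[
\nabla\mathcal{L}_{\mathrm{pop}}(\btheta;\alpha)=\nabla\mathcal{L}_{\mathrm{ben}}(\btheta)+\phi\,\EE_{\x\sim\NN(\alpha\bv-\bmu,\C)}\bigl[L'(\btheta^\top\x)\,\x\bigr].
\]
The first term vanishes at $\btheta_{\mathrm{ben}}$. It therefore suffices to show
\[
\EE\bigl[L'(s)\,t\bigr]>0, \qquad s:=\btheta_{\mathrm{ben}}^\top\x,\quad t:=\bmu^\top\x,\quad \x\sim\NN(\alpha\bv-\bmu,\C).
\]
The pair $(s,t)$ is jointly Gaussian. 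Because $\bv^\top\bmu=0$, the mean of $t$ is $-\|\bmu\|^2$, independently of $\alpha$; this is the only place the orthogonality hypothesis enters. The covariance is $\mathrm{Cov}(s,t)=\btheta_{\mathrm{ben}}^\top\C\bmu$.

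\textbf{Step 2 (structure of $\btheta_{\mathrm{ben}}$).} Apply the same computation to the benign stationarity condition
\[
(1-\phi)\,\EE_{\x\sim\NN(\bmu,\C)}\bigl[L'(\btheta^\top\x)\x\bigr]+\lambda\btheta=0.
\]
Gaussian integration by parts gives $\EE[L'(\btheta^\top\x)\x]=\EE[L']\,\bmu+\EE[L'']\,\C\btheta$. Writing $a:=-\EE[L']$ and $b:=\EE[L'']\ge 0$, this yields
\[
\btheta_{\mathrm{ben}}=(1-\phi)\,a\,\bigl(\lambda\I+(1-\phi)b\,\C\bigr)^{-1}\bmu .
\]
This is the cone description of Lemma~\ref{lem:cone}, which I would invoke directly.

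Here $a>0$: a convex, strictly decreasing $L$ has $L'<0$ everywhere, since $L'(x_0)=0$ at some $x_0$ would force $L$ to be nondecreasing on $[x_0,\infty)$. The resolvent commutes with $\C$ and both are positive semidefinite. Hence
\[
\btheta_{\mathrm{ben}}^\top\C\bmu=(1-\phi)\,a\,\bmu^\top\bigl(\lambda\I+(1-\phi)b\C\bigr)^{-1}\C\bmu\ \ge 0 .
\]

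\textbf{Step 3 (signing).} Decompose $t=-\|\bmu\|^2+\beta\,(s-\EE s)+w$, with $w$ independent of $s$ and $\beta=\mathrm{Cov}(s,t)/\mathrm{Var}(s)\ge0$. If $\mathrm{Var}(s)=0$, simply take $\beta=0$. Then
\[
\EE[L'(s)t]=\|\bmu\|^2\,\EE[-L'(s)]+\beta\,\EE\bigl[L'(s)(s-\EE s)\bigr].
\]
The first term is strictly positive, since $\bmu\neq0$ and $L'<0$. The second is nonnegative: $L'$ is nondecreasing by convexity, and a nondecreasing function of $s$ has nonnegative covariance with $s$. This proves the inequality for every $\alpha\ge0$.

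The gradient step then gives $\bmu^\top(\btheta_{\mathrm{ben}}-\eta\nabla\mathcal{L}_{\mathrm{pop}})<\bmu^\top\btheta_{\mathrm{ben}}$ for any step size $\eta>0$.

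\textbf{Main obstacle.} The delicate point is Step 2. It uses Stein's identity with $L''$, while Assumption~\ref{ass:loss} only gives convexity and growth bounds on $L'$. I would handle this by interpreting $\EE[L'']$ as $\EE[L''(\btheta^\top\x)]=\mathrm{Var}^{-1}\,\mathrm{Cov}\bigl(L'(\btheta^\top\x),\btheta^\top\x\bigr)\ge0$. This is valid because $L'$ is monotone and locally absolutely continuous. The exponential growth bound on $L'$ for negative arguments and the decay bound for positive arguments guarantee that all expectations are finite, so the differentiation under the integral in Step 1 is also legitimate.
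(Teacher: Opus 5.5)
Your proposal is correct and follows essentially the paper's proof. You cancel the benign gradient at $\btheta_{\mathrm{ben}}$, use $\bv^\top\bmu=0$ to obtain the strictly positive term $\|\bmu\|^2\EE[-L'(B)]$, and sign the remaining Gaussian term through the cone form of $\btheta_{\mathrm{ben}}$ (Lemma~\ref{lem:cone}(ii)), which gives $\bmu^\top\C\btheta_{\mathrm{ben}}\ge 0$. The only difference is that you replace Stein's lemma with Gaussian regression on $s$ plus the covariance inequality for the nondecreasing $L'$; this derivative-free version never needs $L''$ to exist, so your appeal to local absolute continuity of $L'$ (which convexity does not guarantee) is unnecessary.
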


Combined with Proposition~\ref{prop:pop_convergence}, 
this gives the benign-alignment ordering: $\bmu^\top\btheta(\alpha)$ rises from $\bmu^\top\btheta(0)$ toward $\bmu^\top\btheta_{\mathrm{ben}}$.

\begin{proof}[Proof sketch]
Write $\bar{L}_\alpha(\btheta)
:= \EE_{\x\sim\NN(\alpha\bv-\bmu,\C)}[L(\x^\top\btheta)]$.
Since $\btheta_{\mathrm{ben}}$ minimises $\mathcal{L}_{\mathrm{ben}}$,
\(
\bmu^\top\nabla\mathcal{L}_{\mathrm{pop}}(\btheta_{\mathrm{ben}};\alpha)
\;=\; \phi\,\bmu^\top\nabla h_\alpha(\btheta_{\mathrm{ben}}).
\)
Setting $B := \x^\top\btheta_{\mathrm{ben}}$ for
$\x\sim\NN(\alpha\bv-\bmu,\C)$ and using $\bv^\top\bmu=0$, Stein's
lemma gives
\begin{equation}
\label{eq:stein_grad}
\bmu^\top\nabla \bar{L}_\alpha(\btheta_{\mathrm{ben}})
\;=\;
\underbrace{-\|\bmu\|^2\EE[L'(B)]}_{>\,0}
\;+\;
\underbrace{(\bmu^\top\C\btheta_{\mathrm{ben}})\EE[L''(B)]}_{\ge\,0}.
\end{equation}
The first term is strictly positive since $L'<0$.
For the second, Lemma~\ref{lem:cone}(ii) gives
$\btheta_{\mathrm{ben}} = a(\tfrac{\lambda}{1-\phi}\I+\tau\C)^{-1}\bmu$ with $a>0$,
so $\bmu^\top\C\btheta_{\mathrm{ben}} = a\,\bmu^\top\C(\tfrac{\lambda}{1-\phi}\I+\tau\C)^{-1}\bmu \geq 0$.
Hence both terms are non-negative and the first is strictly positive.
See Appendix~\ref{app:proof_one_step}.
\end{proof}

\section{Comparing ERM and information limit}
\label{sec:comparison}

The three phenomena described in Sections~\ref{sec:main_results}
and~\ref{sec:exact} arise in both the proportional regime and information limit.
In this section, we explain precisely how alignments translate into clean
accuracy and attack success rate, and why the proportional regime captures
finite-sample effects that 
do not appear in the information limit.

\paragraph{From alignment to accuracy.}
The clean test accuracy of $\btheta$ on the Gaussian mixture depends on the \emph{benign alignment} $\btheta^\top\bmu$ and the \emph{total variance} $\btheta^\top\C\btheta$. For $\Phi$ the standard normal CDF:
\begin{equation}
\label{eq:clean_acc}
  \mathrm{Acc}_{\mathrm{clean}}(\btheta)
  = \Phi\!\left(\btheta^\top\bmu/\sqrt{\btheta^\top\C\btheta}\right),
\end{equation}

For intuition we assume throughout this section that $\bmu,\bv$ are eigenvectors of $\C$ with eigenvalues $s_\mu^2,s_v^2$; the qualitative conclusions hold in general (Appendix~\ref{app:variance_decomp_general}). With $h_\mu := \EE[\bmu^\top\tilde\btheta]$ and $h_v := \EE[\bv^\top\tilde\btheta]$, the asymptotic margin variance decomposes as
\begin{equation}
\label{eq:variance_decomp}
  \sigma^2 \;:=\; \EE\!\left[\tilde\btheta^\top\C\tilde\btheta\right]
  \;=\;
  \underbrace{\tfrac{s_\mu^2}{\|\bmu\|^2}\,h_\mu^2}_{\text{mean signal}}
  \;+\;
  \underbrace{s_v^2\,h_v^2}_{\text{trigger signal}}
  \;+\;
  \underbrace{\tfrac{\gamma}{n}\,\tr\!\bigl[
  \mathbf{R}(\lambda,\tau)^2
  \C^2\bigr]}_{\text{fundamental noise }\zeta},
\end{equation}
the cross term vanishing by Assumption~\ref{ass:orthogonality}. The first two summands come from the mean of $\tilde\btheta$; $\zeta$ captures the $O(1/n)$ fluctuations in~\eqref{thm:loureiro}, scales with $\kappa = p/n$, and vanishes in the information limit. The quadratic form concentrates on $\sigma^2$, so~\eqref{eq:clean_acc} with $\btheta=\tilde\btheta$ is faithful asymptotically.

In the information limit ($p\ll n$) there is no noise floor, $\sigma^2 \approx s_\mu^2 h_\mu^2/\|\bmu\|^2 + s_v^2 h_v^2$, so for large $\alpha$, $h_v\to 0$ (Proposition~\ref{prop:alignment_peaks}), giving $\mathrm{Acc}_{\mathrm{clean}} \approx \Phi(\|\bmu\|/\sqrt{s_\mu^2})$, \emph{independent} of $h_\mu$ and hence of $\alpha$. This contradicts Figure~\ref{fig:clean_acc_cifar}, where clean accuracy is observed to rise with $\alpha$ (and hence with $h_\mu$)

\paragraph{Why clean accuracy increases with trigger strength.}
The noise floor $\zeta > 0$ breaks this degeneracy. Neglecting the (small, see Table~\ref{tab:sigma2_decomposition}) trigger term, $\mathrm{Acc}_{\mathrm{clean}} \approx g(h_\mu) := \Phi(h_\mu/\sqrt{s_\mu^2 h_\mu^2/\|\bmu\|^2 + \zeta})$, which has $g'(h_\mu)>0$ whenever $\zeta>0$. As $\alpha\to\infty$, $h_\mu$ rises to its unpoisoned value (Proposition~\ref{prop:pop_convergence}), improving clean accuracy. With $\zeta=0$ this collapses to $\|\bmu\|/\sqrt{s_\mu^2}$ and the improvement is invisible.

\paragraph{Empirical validation.}
\begin{wraptable}{r}{0.5\textwidth}
\vspace{-1.2em}
\centering
\small
\caption{Decomposition of $\sigma^2 = \EE[\tilde\btheta^\top\C\tilde\btheta]$ for empirical $\bmu$, $\C$ from CIFAR-10 (classes 0 \& 1), at $\lambda = 10^{-4}$, $\phi = 0.05$. General $\C$, not the eigenvector simplification; see Appendix~\ref{app:experimental_details} for more details. Here $\mathbf{A} := \mathbf{R}(\lambda, \tau)^2\C$}
\label{tab:sigma2_decomposition}
\begin{tabular}{lrr}
\toprule
Component & Value & \% \\
\midrule
$(\eta_1-\eta_2)^2\,\bmu^\top\mathbf{A}\bmu$    & $4.79$    & $79.93$ \\
$\eta_2^2\,\bv^\top\mathbf{A}\bv$ & $0.003$   & $0.46$  \\
$2(\eta_1-\eta_2)\eta_2\,\bmu^\top\mathbf{A}\bv$               & $-0.003$  & $-0.05$ \\
$\zeta=\tfrac{\gamma}{n}\tr[\mathbf{A}\C]$ & $1.18$    & $19.67$ \\
\bottomrule
\end{tabular}
\vspace{-1em}
\end{wraptable}
Table~\ref{tab:sigma2_decomposition} decomposes $\sigma^2$ into its 4 components, including the cross term not present in the eigenvector simplification ~\eqref{eq:variance_decomp} using the fixed-point system of Theorem~\ref{thm:loureiro}. The mean-direction signal carries $79.9\%$ of the variance; the trigger and cross terms are negligible ($<0.5\%$ combined), consistent with Assumption~\ref{ass:orthogonality}. The noise floor $\zeta$ contributes $19.67\%$, driven by $\kappa = p/n \approx 0.32$. Setting $\zeta = 0$ (the information limit) underestimates $\sigma^2$ and conceals the dependence on $h_\mu$.

\paragraph{Attack success rate.}
A test sample from the negative class poisoned with trigger $\alpha_{\mathrm{test}}\bv$ has mean $\alpha_{\mathrm{test}}\bv - \bmu$, giving attack success rate $\Phi((\alpha_{\mathrm{test}}h_v - h_\mu)/\sigma)$, with $\sigma$ including both $s_v^2 h_v^2$ and $\zeta$ via~\eqref{eq:variance_decomp}. In both regimes $h_v$ peaks at finite $\alpha^\star$ (Proposition~\ref{prop:alignment_peaks} for ERM; Proposition~\ref{prop:pop_convergence} for population risk) while $h_\mu$ rises monotonically, so the attack success rate also peaks at finite $\alpha$; in the proportional regime, $\zeta$ further moderates it---an effect absent in the population limit.


\section{Conclusion and limitations}
\label{sec:conclusion}


We analysed backdoor poisoning in regularised generalised linear models
on high-dimensional Gaussian mixtures. Closed-form squared-loss expressions
and a general fixed-point analysis yield three phenomena: (i) clean accuracy
\emph{increases} with training trigger strength, driven by a finite-sample
noise floor $\zeta \propto \kappa$ absent in the information limit;
(ii) attack success peaks at a finite trigger strength, as large triggers
push the loss into its tail; and (iii) the minimum
eigenvector of the covariance is the most effective trigger direction.
Experiments on CIFAR-10, Gaussian surrogates closely
match the theory, and ResNet-18 experiments show the same qualitative phenomena beyond the convex setting.

\paragraph{Future work.}
Our analysis rests on a Gaussian mixture data model and linear
(or generalised linear) classifiers; while experiments suggest the
phenomena persist for deep networks, extending the theory to non-convex feature maps is an interesting future direction.
Other directions include multi-class settings and non-uniform poisoning strategies.
Finally, our work characterises the attack but does not propose a
defence; developing defences informed by the identified mechanisms is
an important direction for future work. Backdoor poisoning is a known threat~\citep{gu_badnets_2019,li_backdoor_2024}, and by identifying regimes in which stronger attacks \emph{fail} our analysis is most directly useful to defenders and auditors.

\section*{Acknowledgments}
DF is funded by the Charles Coulson Scholarship. DF and JK also acknowledge support from His Majesty's Government in the development of this research. HYG and IS were partially supported by the Israel Science Foundation grant no. 777/25.

\bibliographystyle{plainnat}
\bibliography{references, references-2}

\appendix
\newpage
\addtocontents{toc}{\protect\etocdepthtag.toc{appendix}}
\etocsettocstyle{\subsection*{Appendix contents}}{}
\etocsettagdepth{mainpaper}{none}
\etocsettagdepth{appendix}{subsection}
\tableofcontents
\newpage
\input{appendix_linear.tex}
\newpage
\input{appendix_logistic.tex}
\newpage
\input{appendix_zeta.tex}
\newpage
\input{appendix_experiments.tex}
\end{document}

%% file: appendix_linear.tex
\section{Additional linear-regression results and proofs}
\label{app:linear_regression}

This appendix proves the linear-regression results stated in
Section~\ref{sec:exact}. We work directly with the proxy expectation
\[
\EE[\tilde{\btheta}],
\]
where \(\tilde{\btheta}\) is the Gaussian proxy from
Theorem~\ref{thm:loureiro}. The dependence on the trigger strength
\(\alpha\) is kept implicit. We use throughout the resolvent notation
\[
\mathbf{R}(\lambda,\tau)=(\lambda\I+\tau\C)^{-1}
\]
introduced in Theorem~\ref{thm:loureiro}.

The appendix is organized as follows. First we derive the square-loss reduction
for general covariance and obtain the exact projection formulas. We then expand
these formulas under the generic trigger-orthogonality condition
\(g_{\mu v}=o(1)\), as used in Proposition~\ref{prop:lin_peak_general} and
Corollary~\ref{cor:lin_clean_general}. Finally, we prove the eigenvector
specialization and record the isotropic formulas used in the main text.

\subsection{General-covariance reduction}

For the squared loss, the high-dimensional fixed-point equations reduce to a
finite-dimensional deterministic calculation. The following proposition records
the reduction and the projection formulas used throughout the linear-regression
analysis.

\begin{proposition}[Square-loss proxy expectation and projection formulas]
\label{prop:lin_exact_projections}
Let
\[
L(t)=\frac12(1-t)^2,
\qquad
\bmu_1=\bmu,
\qquad
\bmu_2=\alpha\bv-\bmu,
\qquad
(\pi_1,\pi_2)=(1-\phi,\phi).
\]
For the squared loss,
\[
\prox_{\delta L}(x)=\frac{x+\delta}{1+\delta},
\qquad
f(x):=-L'(\prox_{\delta L}(x))=\frac{1-x}{1+\delta}.
\]
Writing
\[
\tau:=\frac{1}{1+\delta},
\]
the scalar fixed point satisfies
\begin{equation}
\label{eq:lin_tau_eta}
\tau=\frac{1}{1+\delta},
\qquad
\delta=\frac1n\tr\!\left[\C \mathbf{R}(\lambda,\tau)\right],
\qquad
\eta_c=\pi_c\tau(1-M_c),
\qquad
M_c=\bmu_c^\top\EE[\tilde{\btheta}].
\end{equation}
In particular, the equation determining \(\tau\) does not involve the trigger
strength \(\alpha\).

Define
\begin{equation}
\label{eq:lin_mubar}
\bar{\bmu}(\alpha)
:=
\sum_{c=1}^2\pi_c\bmu_c
=
(1-2\phi)\bmu+\phi\alpha\bv
\end{equation}
and
\begin{equation}
\label{eq:lin_S}
S(\alpha)
:=
\sum_{c=1}^2\pi_c\bmu_c\bmu_c^\top
=
\bmu\bmu^\top
-\phi\alpha(\bmu\bv^\top+\bv\bmu^\top)
+\phi\alpha^2\bv\bv^\top .
\end{equation}
Then the proxy expectation is the unique solution of
\begin{equation}
\label{eq:lin_master}
\bigl(\lambda\I+\tau\C+\tau S(\alpha)\bigr)\EE[\tilde{\btheta}]
=
\tau\bar{\bmu}(\alpha).
\end{equation}

Assume further that \(\bmu\) and \(\bv\) are linearly independent. Let
\[
U:=[\,\bmu\ \ \bv\,],
\]
and define the \(2\times2\) resolvent Gram matrix
\[
G:=U^\top \mathbf{R}(\lambda,\tau) U
=
\begin{pmatrix}
g_{\mu\mu} & g_{\mu v}\\
g_{\mu v} & g_{vv}
\end{pmatrix},
\]
where
\[
g_{\mu\mu}:=\bmu^\top \mathbf{R}(\lambda,\tau)\bmu,
\qquad
g_{\mu v}:=\bmu^\top \mathbf{R}(\lambda,\tau)\bv,
\qquad
g_{vv}:=\bv^\top \mathbf{R}(\lambda,\tau)\bv .
\]
Set
\[
\Delta_G:=\det(G)=g_{\mu\mu}g_{vv}-g_{\mu v}^2.
\]
Since \(\mathbf{R}(\lambda,\tau)\succ0\) and \(U\) has full column rank, \(G\succ0\),
and hence \(\Delta_G>0\).

The projections
\[
h_\mu(\alpha):=\bmu^\top\EE[\tilde{\btheta}],
\qquad
h_v(\alpha):=\bv^\top\EE[\tilde{\btheta}]
\]
are given by
\begin{equation}
\label{eq:lin_hmu_exact}
h_\mu(\alpha)
=
\frac{
\tau\Bigl((1-2\phi)g_{\mu\mu}
+\phi\alpha g_{\mu v}
+\tau\phi(1-\phi)\alpha^2\Delta_G\Bigr)
}
{D_{\mathrm{proj}}(\alpha)}
\end{equation}
and
\begin{equation}
\label{eq:lin_hv_exact}
h_v(\alpha)
=
\frac{
\tau\Bigl((1-2\phi)g_{\mu v}
+\phi\alpha g_{vv}
+2\tau\phi(1-\phi)\alpha\Delta_G\Bigr)
}
{D_{\mathrm{proj}}(\alpha)},
\end{equation}
where
\begin{equation}
\label{eq:lin_Dh}
D_{\mathrm{proj}}(\alpha)
=
1+\tau g_{\mu\mu}
-2\tau\phi\alpha g_{\mu v}
+\tau\phi\alpha^2 g_{vv}
+\tau^2\phi(1-\phi)\alpha^2\Delta_G .
\end{equation}
\end{proposition}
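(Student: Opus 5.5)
The plan is to specialise the fixed-point system of Theorem~\ref{thm:loureiro} to the squared loss, reduce it to a single linear equation for $\EE[\tilde\btheta]$, and then compress that equation onto $\operatorname{span}\{\bmu,\bv\}$. Everything after the first step is finite-dimensional linear algebra.

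\textbf{Step 1: scalar equations.} The prox of $\delta L$ with $L(t)=\tfrac12(1-t)^2$ solves $u-x=\delta(1-u)$, so $\prox_{\delta L}(x)=(x+\delta)/(1+\delta)$. Then
\[
f(x)=1-\prox_{\delta L}(x)=\frac{1-x}{1+\delta}.
\]
Since $f'\equiv -1/(1+\delta)$ is constant, $\tau=\EE[-f'(r_K)]=1/(1+\delta)$ involves no expectation over the Gaussian $r_K$. Combined with $\delta=\tfrac1n\tr[\C\mathbf R(\lambda,\tau)]$, this gives a scalar equation in $\tau$ alone, which does not contain $\bmu_c$ and hence not $\alpha$. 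Next, $\eta_c=\pi_c\EE[f(r_c)]=\pi_c\tau(1-M_c)$, because $f$ is affine and $\EE[r_c]=M_c$. From Theorem~\ref{thm:loureiro}, $\EE[\tilde\btheta]=\mathbf R(\lambda,\tau)(\eta_1\bmu_1+\eta_2\bmu_2)$. Therefore the theorem's definition $M_c=\bmu_c^\top\mathbf R(\eta_1\bmu_1+\eta_2\bmu_2)$ is exactly $\bmu_c^\top\EE[\tilde\btheta]$, which is \eqref{eq:lin_tau_eta}.

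\textbf{Step 2: master equation.} Substituting $\eta_c$ into $\mathbf R^{-1}\EE[\tilde\btheta]=\sum_c\eta_c\bmu_c$ gives
\[
(\lambda\I+\tau\C)\EE[\tilde\btheta]=\tau\sum_c\pi_c\bmu_c-\tau\sum_c\pi_c\bmu_c\bmu_c^\top\EE[\tilde\btheta].
\]
This is \eqref{eq:lin_master}, once $\bar\bmu$ and $S$ are expanded with $\bmu_2=\alpha\bv-\bmu$. Uniqueness holds because $S(\alpha)\succeq0$ (a positive combination of rank-one projectors), $\C\succeq0$ and $\lambda>0$. Hence $\lambda\I+\tau\C+\tau S\succ0$ and is invertible.

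\textbf{Step 3: two-dimensional reduction.} Write $S=UWU^\top$ with
\[
W=\begin{pmatrix}1&-\phi\alpha\\-\phi\alpha&\phi\alpha^2\end{pmatrix},
\]
and $\bar\bmu=U\mathbf w$ with $\mathbf w=((1-2\phi),\phi\alpha)^\top$. Rather than using Woodbury, which would need $W^{-1}$ and fails at $\alpha=0$, I will left-multiply the master equation by $U^\top\mathbf R(\lambda,\tau)$. With $\mathbf h:=U^\top\EE[\tilde\btheta]=(h_\mu,h_v)^\top$ this gives
\[
(\I_2+\tau GW)\,\mathbf h=\tau G\mathbf w .
\]
Cramer's rule then finishes the proof. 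The determinant is $1+\tau\tr(GW)+\tau^2\det G\det W$. Here $\tr(GW)=g_{\mu\mu}-2\phi\alpha g_{\mu v}+\phi\alpha^2g_{vv}$ and $\det W=\phi(1-\phi)\alpha^2$, which reproduces $D_{\mathrm{proj}}$ in \eqref{eq:lin_Dh}. This determinant is positive: the compressed equation is equivalent to the invertible master equation, and $\I+\tau GW$ is similar to $\I+\tau G^{1/2}WG^{1/2}\succ0$ since $G\succ0$ and $W\succeq0$. The numerators come from $\operatorname{adj}(\I+\tau GW)\,\tau G\mathbf w$. Expanding them, every product $g_{\mu\mu}g_{vv}-g_{\mu v}^2$ collects into $\Delta_G$, yielding \eqref{eq:lin_hmu_exact}--\eqref{eq:lin_hv_exact}. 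For example, the $\alpha^2$ term of $h_\mu$ is $\tau^2\phi(1-\phi)\alpha^2\Delta_G$.

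I expect no conceptual obstacle. The main risk is bookkeeping in the adjugate expansion, namely making sure the cross terms cancel into $\Delta_G$. I would check this against the eigenvector case of Corollary~\ref{cor:lin_peak_eigen}, where $g_{\mu v}=0$. A secondary point needing care is the identification $M_c=\bmu_c^\top\EE[\tilde\btheta]$ in Step 1, which relies on the Gaussian proxy's mean being $\mathbf R\sum_c\eta_c\bmu_c$.
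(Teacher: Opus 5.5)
Your proposal is correct and follows the paper's proof essentially step for step: the same prox computation, the same master equation with uniqueness from $\lambda\I+\tau\C+\tau S(\alpha)\succ0$, and the same $2\times2$ system $(I_2+\tau GW)\mathbf{h}=\tau G\mathbf{w}$ solved by Cramer's rule, with positivity of the determinant from the similarity to $I_2+\tau G^{1/2}WG^{1/2}$. The only cosmetic difference is how you reach the $2\times2$ system. You left-multiply the master equation by $U^\top\mathbf{R}(\lambda,\tau)$, whereas the paper first writes $\eta_1\bmu_1+\eta_2\bmu_2=U\mathbf{x}$, derives $(I_2+\tau WG)\mathbf{x}=\tau\mathbf{w}$ using the full column rank of $U$, and then multiplies by $G$; your route is slightly more direct.
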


\begin{proof}
The proximal formula follows by direct minimization:
\[
\prox_{\delta L}(x)
=
\argmin_{u\in\RR}
\left\{
\frac{\delta}{2}(1-u)^2+\frac12(u-x)^2
\right\}
=
\frac{x+\delta}{1+\delta}.
\]
Hence
\[
f(x):=-L'\!\bigl(\prox_{\delta L}(x)\bigr)
=
1-\prox_{\delta L}(x)
=
\frac{1-x}{1+\delta}.
\]
With \(\tau=(1+\delta)^{-1}\), this becomes
\[
f(x)=\tau(1-x),
\qquad
-f'(x)=\tau.
\]
Substituting this identity into Theorem~\ref{thm:loureiro} gives
\[
\tau=\frac{1}{1+\delta},
\qquad
\delta=\frac1n\tr\!\left[\C \mathbf{R}(\lambda,\tau)\right].
\]
Moreover, since \(r_c\sim\NN(M_c,\sigma^2)\),
\[
\eta_c
=
\pi_c\EE[f(r_c)]
=
\pi_c\tau(1-M_c),
\]
which proves~\eqref{eq:lin_tau_eta}.

Let
\[
m(\alpha):=\eta_1\bmu_1+\eta_2\bmu_2 .
\]
Taking expectations in the Gaussian proxy representation gives
\[
(\lambda\I+\tau\C)\EE[\tilde{\btheta}]=m(\alpha).
\]
Using \(\eta_c=\pi_c\tau(1-M_c)\) and
\(M_c=\bmu_c^\top\EE[\tilde{\btheta}]\), we obtain
\[
m(\alpha)
=
\tau\sum_{c=1}^2\pi_c\bmu_c
-
\tau\sum_{c=1}^2\pi_c\bmu_c\bmu_c^\top\EE[\tilde{\btheta}]
=
\tau\bar{\bmu}(\alpha)-\tau S(\alpha)\EE[\tilde{\btheta}].
\]
Therefore
\[
(\lambda\I+\tau\C+\tau S(\alpha))\EE[\tilde{\btheta}]
=
\tau\bar{\bmu}(\alpha),
\]
which proves~\eqref{eq:lin_master}. The coefficient matrix is positive
definite:
\[
\lambda\I\succ0,
\qquad
\tau\C\succeq0,
\qquad
S(\alpha)=\sum_{c=1}^2\pi_c\bmu_c\bmu_c^\top\succeq0.
\]
Thus the solution is unique. Expanding
\(\bmu_1=\bmu\), \(\bmu_2=\alpha\bv-\bmu\), and
\((\pi_1,\pi_2)=(1-\phi,\phi)\) gives
\eqref{eq:lin_mubar} and~\eqref{eq:lin_S}.

It remains to derive the projection formulas. Write
\[
d(\alpha):=
\begin{pmatrix}
1-2\phi\\
\phi\alpha
\end{pmatrix},
\qquad
K(\alpha):=
\begin{pmatrix}
1 & -\phi\alpha\\
-\phi\alpha & \phi\alpha^2
\end{pmatrix}.
\]
Then
\[
\bar{\bmu}(\alpha)=Ud(\alpha),
\qquad
S(\alpha)=UK(\alpha)U^\top .
\]
Since
\[
m(\alpha)=\eta_1\bmu_1+\eta_2\bmu_2\in\operatorname{span}\{\bmu,\bv\},
\]
there is a unique vector \(x(\alpha)\in\RR^2\) such that
\[
m(\alpha)=Ux(\alpha).
\]
Moreover,
\[
\EE[\tilde{\btheta}]=\mathbf{R}(\lambda,\tau)Ux(\alpha).
\]
The identity
\[
m(\alpha)=\tau\bar{\bmu}(\alpha)-\tau S(\alpha)\EE[\tilde{\btheta}]
\]
therefore becomes
\[
Ux
=
\tau Ud-\tau UKU^\top \mathbf{R}(\lambda,\tau)Ux.
\]
Since \(U\) has full column rank,
\[
x=\tau d-\tau K Gx,
\qquad
G:=U^\top \mathbf{R}(\lambda,\tau)U.
\]
Equivalently,
\[
(I_2+\tau K(\alpha)G)x(\alpha)=\tau d(\alpha).
\]

Now define
\[
h(\alpha):=
\begin{pmatrix}
h_\mu(\alpha)\\
h_v(\alpha)
\end{pmatrix}
=
\begin{pmatrix}
\bmu^\top\EE[\tilde{\btheta}]\\
\bv^\top\EE[\tilde{\btheta}]
\end{pmatrix}.
\]
Since \(\EE[\tilde{\btheta}]=\mathbf{R}(\lambda,\tau)Ux(\alpha)\), we have
\[
h(\alpha)=Gx(\alpha).
\]
Multiplying the reduced system by \(G\) gives
\[
(I_2+\tau G K(\alpha))h(\alpha)
=
\tau Gd(\alpha).
\]
This is a \(2\times2\) linear system. Its determinant is
\[
\det(I_2+\tau G K(\alpha))
=
1+\tau\tr(GK(\alpha))+\tau^2\det(G)\det(K(\alpha)).
\]
Because
\[
\tr(GK(\alpha))
=
g_{\mu\mu}-2\phi\alpha g_{\mu v}+\phi\alpha^2g_{vv}
\]
and
\[
\det(K(\alpha))=\phi(1-\phi)\alpha^2,
\]
the determinant is precisely \(D_{\mathrm{proj}}(\alpha)\) in
\eqref{eq:lin_Dh}.

We also note that this denominator is positive. Indeed, \(K(\alpha)\succeq0\)
and \(G\succ0\). Hence \(G K(\alpha)\) is similar to the symmetric positive
semidefinite matrix \(G^{1/2}K(\alpha)G^{1/2}\), so all eigenvalues of
\(G K(\alpha)\) are nonnegative. Since \(\tau>0\), the matrix
\(I_2+\tau G K(\alpha)\) is invertible and
\[
D_{\mathrm{proj}}(\alpha)>0 .
\]

Applying Cramer's rule to
\[
(I_2+\tau G K(\alpha))h(\alpha)
=
\tau Gd(\alpha)
\]
yields \eqref{eq:lin_hmu_exact} and~\eqref{eq:lin_hv_exact}.
\end{proof}

\subsection{Consequences under generic trigger orthogonality}

We now expand the general-covariance formulas under the condition
\[
g_{\mu v}=\bmu^\top \mathbf{R}(\lambda,\tau)\bv=o(1),
\]
which follows from Assumption~\ref{ass:orthogonality}. This is the condition
used in Proposition~\ref{prop:lin_peak_general} and
Corollary~\ref{cor:lin_clean_general}.

\begin{proof}[Proof of Proposition~\ref{prop:lin_peak_general}]
Set
\[
\varepsilon:=g_{\mu v},
\qquad
m:=g_{\mu\mu},
\qquad
q:=g_{vv},
\qquad
\Delta_\varepsilon:=mq-\varepsilon^2.
\]
By Proposition~\ref{prop:lin_exact_projections},
\[
h_v(\alpha)
=
\frac{
\tau\Bigl((1-2\phi)\varepsilon
+\phi\alpha q
+2\tau\phi(1-\phi)\alpha\Delta_\varepsilon\Bigr)
}
{
1+\tau m
-2\tau\phi\alpha\varepsilon
+\tau\phi\alpha^2q
+\tau^2\phi(1-\phi)\alpha^2\Delta_\varepsilon
}.
\]
Equivalently,
\[
h_v(\alpha)
=
\frac{n_\varepsilon+A_\varepsilon\alpha}
{B+d_\varepsilon\alpha+C_\varepsilon\alpha^2},
\]
where
\[
n_\varepsilon:=\tau(1-2\phi)\varepsilon,
\qquad
B:=1+\tau m,
\qquad
d_\varepsilon:=-2\tau\phi\varepsilon,
\]
and
\[
A_\varepsilon
:=
\tau\phi\Bigl(q+2\tau(1-\phi)\Delta_\varepsilon\Bigr),
\qquad
C_\varepsilon
:=
\tau\phi\Bigl(q+\tau(1-\phi)\Delta_\varepsilon\Bigr).
\]
Since \(\mathbf{R}(\lambda,\tau)\succ0\) and \(\bmu,\bv\) are linearly independent,
\(\Delta_\varepsilon>0\). Hence
\[
A_\varepsilon>0,
\qquad
C_\varepsilon>0.
\]

If \(\varepsilon=o(1)\), then
\[
\Delta_\varepsilon=mq+o(1),
\qquad
n_\varepsilon=o(1),
\qquad
d_\varepsilon=o(1),
\]
and
\[
A_\varepsilon
=
\tau\phi q\bigl(1+2\tau(1-\phi)m\bigr)+o(1),
\]
\[
C_\varepsilon
=
\tau\phi q\bigl(1+\tau(1-\phi)m\bigr)+o(1).
\]
Therefore, uniformly for \(\alpha\) in any fixed compact subset of
\([0,\infty)\),
\[
h_v(\alpha)
=
\frac{
\tau\phi q\bigl(1+2\tau(1-\phi)m\bigr)\alpha
}
{
1+\tau m+
\tau\phi q\bigl(1+\tau(1-\phi)m\bigr)\alpha^2
}
+o(1).
\]
This is exactly \eqref{eq:lin_hv_peak_compact}, with
\[
A_v=\tau\phi q\bigl(1+2\tau(1-\phi)m\bigr),
\qquad
C=\tau\phi q\bigl(1+\tau(1-\phi)m\bigr).
\]

It remains to prove the finite-peak claim for the exact expression.
Differentiating
\[
h_v(\alpha)
=
\frac{n_\varepsilon+A_\varepsilon\alpha}
{B+d_\varepsilon\alpha+C_\varepsilon\alpha^2}
\]
gives
\[
h_v'(\alpha)
=
\frac{
F_\varepsilon(\alpha)
}
{
\bigl(B+d_\varepsilon\alpha+C_\varepsilon\alpha^2\bigr)^2
},
\]
where
\[
F_\varepsilon(\alpha)
=
A_\varepsilon B
-
n_\varepsilon d_\varepsilon
-
2n_\varepsilon C_\varepsilon\alpha
-
A_\varepsilon C_\varepsilon\alpha^2.
\]
The quadratic \(F_\varepsilon\) has negative leading coefficient and
\[
F_\varepsilon(0)
=
A_\varepsilon B-n_\varepsilon d_\varepsilon
=
A_\varepsilon B
+
2\tau^2\phi(1-2\phi)\varepsilon^2
>0.
\]
The discriminant of \(F_\varepsilon\) is
\[
(2n_\varepsilon C_\varepsilon)^2
+
4A_\varepsilon C_\varepsilon
\bigl(A_\varepsilon B-n_\varepsilon d_\varepsilon\bigr)
>0.
\]
Thus \(F_\varepsilon\) has two real roots. Their product is
\[
\frac{
A_\varepsilon B-n_\varepsilon d_\varepsilon
}
{
-A_\varepsilon C_\varepsilon
}
<0,
\]
so exactly one root is positive. Denote this positive root by
\(\alpha_{\star,\varepsilon}\). Therefore \(h_v\) is strictly increasing on
\((0,\alpha_{\star,\varepsilon})\) and strictly decreasing on
\((\alpha_{\star,\varepsilon},\infty)\).

The positive root is
\[
\alpha_{\star,\varepsilon}
=
-\frac{n_\varepsilon}{A_\varepsilon}
+
\left[
\left(\frac{n_\varepsilon}{A_\varepsilon}\right)^2
+
\frac{
B-\frac{n_\varepsilon d_\varepsilon}{A_\varepsilon}
}
{C_\varepsilon}
\right]^{1/2}.
\]
Since \(n_\varepsilon,d_\varepsilon=o(1)\),
\(A_\varepsilon=A_v+o(1)\), and \(C_\varepsilon=C+o(1)\), we obtain
\[
\alpha_{\star,\varepsilon}^2
=
\frac{B}{C}+o(1),
\]
as claimed.
\end{proof}

\begin{proof}[Proof of Corollary~\ref{cor:lin_clean_general}]
Set again
\[
\varepsilon:=g_{\mu v},
\qquad
m:=g_{\mu\mu},
\qquad
q:=g_{vv},
\qquad
\Delta_\varepsilon:=mq-\varepsilon^2.
\]
By Proposition~\ref{prop:lin_exact_projections},
\[
h_\mu(\alpha)
=
\frac{
\tau\Bigl((1-2\phi)m
+\phi\alpha\varepsilon
+\tau\phi(1-\phi)\alpha^2\Delta_\varepsilon\Bigr)
}
{
1+\tau m
-2\tau\phi\alpha\varepsilon
+\tau\phi\alpha^2q
+\tau^2\phi(1-\phi)\alpha^2\Delta_\varepsilon
}.
\]
If \(\varepsilon=o(1)\), then, uniformly for \(\alpha\) in any fixed compact
subset of \([0,\infty)\),
\[
h_\mu(\alpha)
=
\frac{
\tau(1-2\phi)m
+
\tau^2\phi(1-\phi)mq\,\alpha^2
}
{
1+\tau m
+
\tau\phi q\bigl(1+\tau(1-\phi)m\bigr)\alpha^2
}
+o(1).
\]
This is the claimed leading-order formula with
\[
A_\mu^{(0)}=\tau(1-2\phi)m,
\qquad
A_\mu^{(2)}=\tau^2\phi(1-\phi)mq.
\]

Set
\[
B:=1+\tau m,
\qquad
C:=\tau\phi q\bigl(1+\tau(1-\phi)m\bigr).
\]

The leading-order curve has the form
\[
h_\mu^{(0)}(\alpha)
=
\frac{A_\mu^{(0)}+A_\mu^{(2)}\alpha^2}{B+C\alpha^2}.
\]
Differentiating gives
\[
\frac{d}{d\alpha}h_\mu^{(0)}(\alpha)
=
\frac{
2\alpha\bigl(A_\mu^{(2)}B-A_\mu^{(0)}C\bigr)
}
{
(B+C\alpha^2)^2
}.
\]
A direct simplification yields
\[
A_\mu^{(2)}B-A_\mu^{(0)}C
=
\tau^2\phi^2mq
\bigl(1+2\tau(1-\phi)m\bigr)
>0.
\]
Hence the leading-order curve is strictly increasing for every
\(\alpha>0\).

Finally, on every compact interval \(K\subset(0,\infty)\), the derivative
of \(h_\mu^{(0)}\) is bounded below by a positive constant. The exact derivative
is a continuous rational function of \((\varepsilon,\alpha)\), and the
denominator is bounded away from zero on \(K\). Therefore the exact derivative
converges uniformly to the derivative of \(h_\mu^{(0)}\) on \(K\). It follows
that
\[
h_\mu'(\alpha)>0
\]
eventually uniformly over \(\alpha\in K\).

\end{proof}

\subsection{Dependence on the poisoning fraction\label{sec:phi}}

We also record how the square-loss projections depend on the poisoning
fraction \(\phi\). This complements the dependence on the trigger strength
\(\alpha\): at fixed \(\alpha\), increasing the fraction of poisoned examples can
have different effects on the benign and trigger projections.

\begin{figure*}[t]
    \centering
    \begin{subfigure}[t]{0.485\textwidth}
        \centering
        \includegraphics[width=\linewidth]{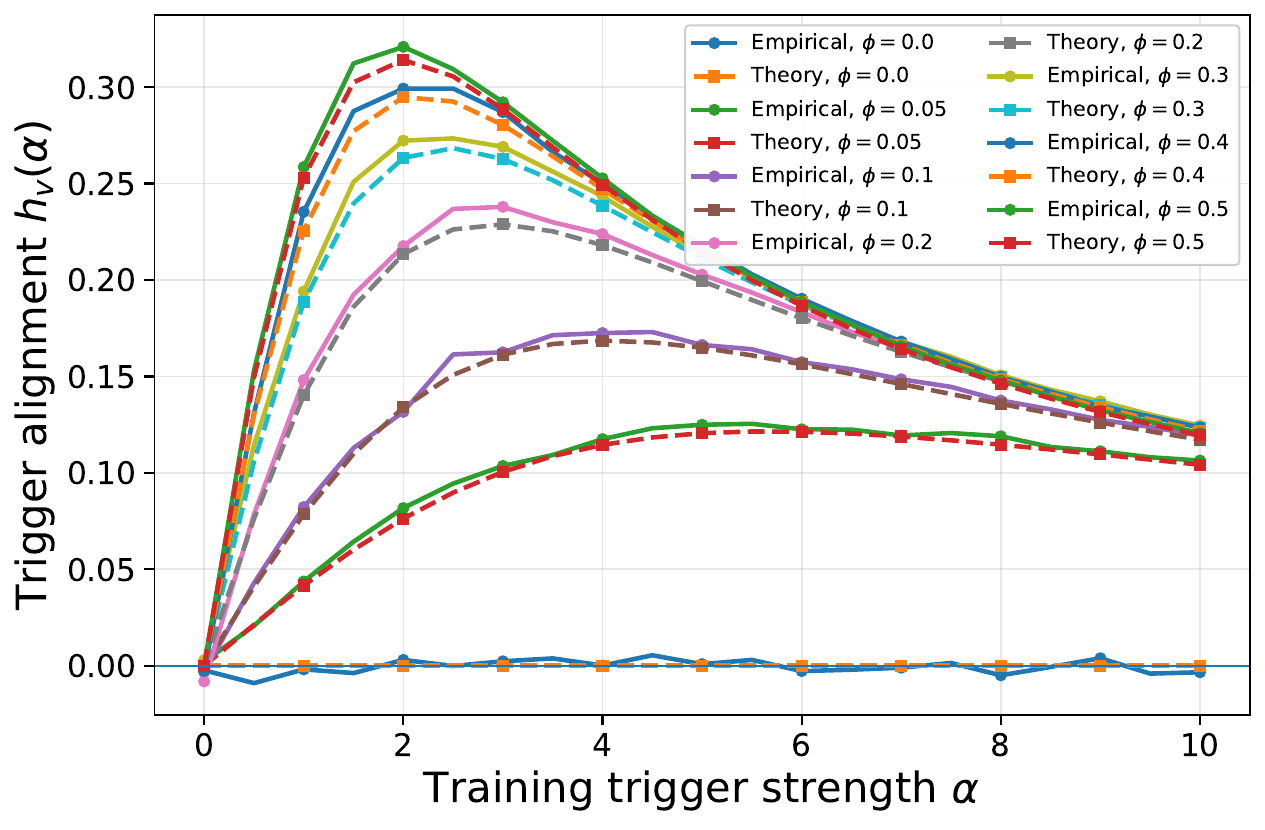}
        \caption{Trigger alignment \(h_v(\alpha)\).}
    \end{subfigure}
    \hfill
    \begin{subfigure}[t]{0.485\textwidth}
        \centering
        \includegraphics[width=\linewidth]{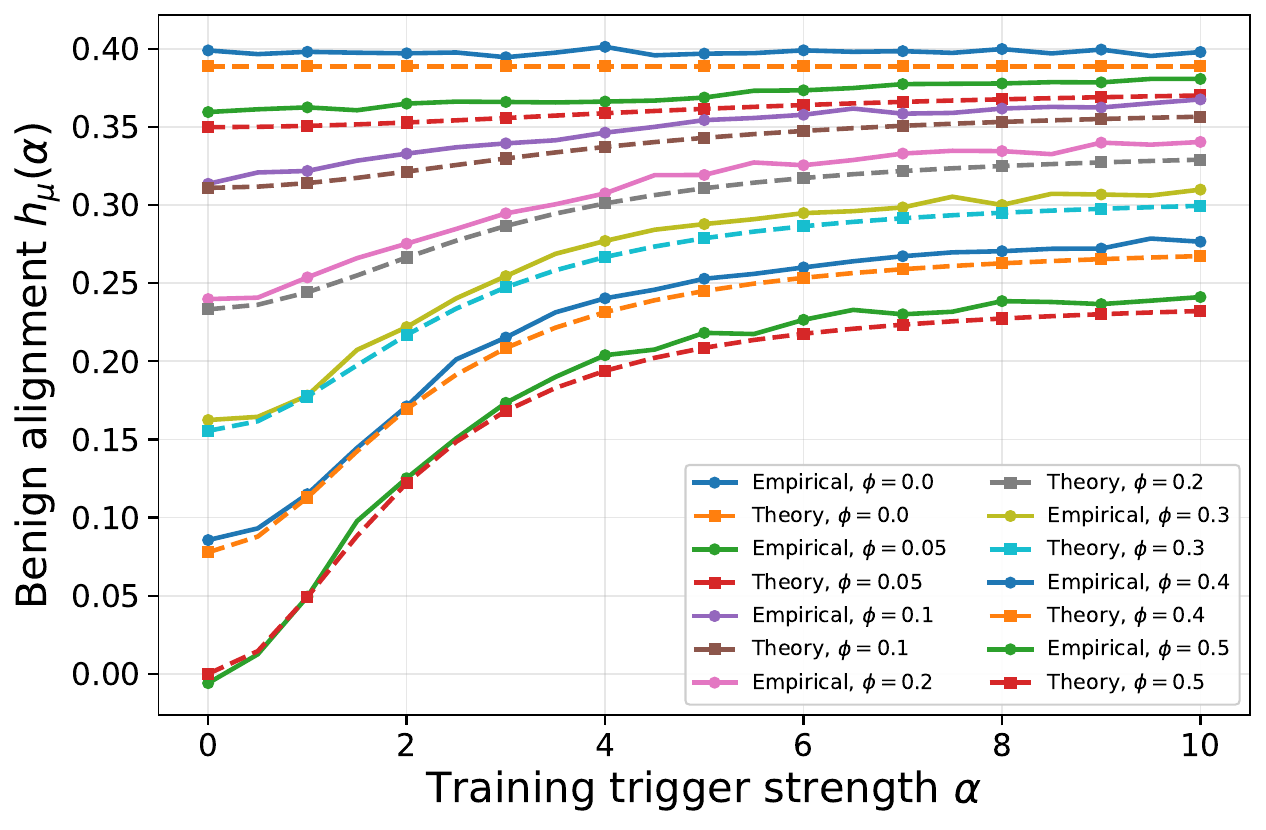}
        \caption{Benign alignment \(h_\mu(\alpha)\).}
    \end{subfigure}
    \caption{Effect of the poisoning fraction \(\phi\) in the square-loss
    model. Curves show empirical ridge estimates and the corresponding theory
    predictions as the training trigger strength \(\alpha\) varies. Increasing
    \(\phi\) can increase the trigger alignment at small \(\alpha\), but can
    decrease it once \(\alpha\) is large. The benign alignment decreases with
    \(\phi\) at fixed \(\alpha\), to leading order.}
    \label{fig:lin_phi_sweep_appendix}
\end{figure*}

Figure~\ref{fig:lin_phi_sweep_appendix} illustrates this dependence. The left
panel plots the trigger alignment \(h_v(\alpha)\) for several poisoning
fractions, and the right panel plots the benign alignment \(h_\mu(\alpha)\).
The boundary curves \(\phi=0\) and, if shown, \(\phi=1/2\), are included only
as visual references; the formal statements in the main text assume
\(0<\phi<1/2\).

To explain the behavior, write
\[
m:=g_{\mu\mu},
\qquad
q:=g_{vv},
\]
and define
\[
D_\phi(\alpha)
:=
1+\tau m+\tau\phi q\bigl(1+\tau(1-\phi)m\bigr)\alpha^2 .
\]
In the square-loss setting, \(\tau,m,q\) are fixed with respect to \(\phi\):
the scalar \(\tau\) depends only on \((\lambda,\C,\kappa)\), not on the
poisoning fraction. Under generic trigger orthogonality, the leading-order
trigger projection is
\[
h_v(\alpha,\phi)
=
\frac{
\tau\phi q\bigl(1+2\tau(1-\phi)m\bigr)\alpha
}
{
D_\phi(\alpha)
}
+o(1).
\]
Differentiating the exact rational expression and then using
\(g_{\mu v}=o(1)\) gives
\[
\frac{\partial}{\partial\phi}
h_v(\alpha,\phi)
=
\frac{
\tau q\alpha
\left[
(1+\tau m)\bigl(1+2\tau m(1-2\phi)\bigr)
-
\phi^2\tau^2mq\alpha^2
\right]
}
{
D_\phi(\alpha)^2
}
+o(1).
\]
Since \(D_\phi(\alpha)>0\), the leading-order sign is controlled by
\[
(1+\tau m)\bigl(1+2\tau m(1-2\phi)\bigr)
-
\phi^2\tau^2mq\alpha^2 .
\]
Thus the trigger projection is not monotone in \(\phi\) in general. For
sufficiently small positive \(\alpha\), the first term dominates, so increasing
\(\phi\) increases \(h_v\). For sufficiently large \(\alpha\), the negative
term
\[
-\phi^2\tau^2mq\alpha^2
\]
dominates, so increasing \(\phi\) decreases \(h_v\).

The benign projection has a simpler dependence on \(\phi\). Its leading-order
form is
\[
h_\mu(\alpha,\phi)
=
\frac{
\tau(1-2\phi)m+\tau^2\phi(1-\phi)mq\alpha^2
}
{
D_\phi(\alpha)
}
+o(1).
\]
Again, differentiating the exact rational expression and then using
\(g_{\mu v}=o(1)\) gives
\[
\frac{\partial}{\partial\phi}
h_\mu(\alpha,\phi)
=
-
\frac{
m\tau
\left[
2+2\tau m
+2\tau q\alpha^2\phi
+2\tau^2mq\alpha^2\phi^2
+\tau^2q^2\alpha^4\phi^2
\right]
}
{
D_\phi(\alpha)^2
}
+o(1).
\]
All terms inside the bracket are positive and \(D_\phi(\alpha)>0\). Hence, to
leading order,
\[
\frac{\partial}{\partial\phi}h_\mu(\alpha,\phi)<0.
\]
Thus, increasing the poisoning fraction decreases the benign projection at fixed
\(\alpha\), while its effect on the trigger projection depends on the trigger
strength.

\subsection{Eigenvector reduction}

We next prove the eigenvector specialization directly for the projections
\(h_\mu(\alpha)\) and \(h_v(\alpha)\). This avoids introducing auxiliary
coefficient notation and mirrors the statement in the main text.

\begin{proof}[Proof of Corollary~\ref{cor:lin_peak_eigen}]
Assume
\[
\bmu^\top\bv=0,
\qquad
\C\bmu=s_\mu^2\bmu,
\qquad
\C\bv=s_v^2\bv.
\]
Set
\[
L_\mu:=\lambda+\tau s_\mu^2,
\qquad
r:=\|\bmu\|^2.
\]
Then
\[
A_\mu=L_\mu+\tau r,
\qquad
B_v=\lambda+\tau s_v^2,
\]
and
\[
P_\mu=L_\mu+\tau(1-\phi)r,
\qquad
Q_\mu=L_\mu+2\tau(1-\phi)r.
\]
Since
\[
\mathbf{R}(\lambda,\tau)=(\lambda\I+\tau\C)^{-1},
\]
the eigenvector assumptions give
\[
\mathbf{R}(\lambda,\tau)\bmu=\frac{1}{L_\mu}\bmu,
\qquad
\mathbf{R}(\lambda,\tau)\bv=\frac{1}{B_v}\bv.
\]
Therefore
\[
g_{\mu\mu}
=
\bmu^\top \mathbf{R}(\lambda,\tau)\bmu
=
\frac{r}{L_\mu},
\]
\[
g_{\mu v}
=
\bmu^\top \mathbf{R}(\lambda,\tau)\bv
=
0,
\]
and, using the standing normalization \(\|\bv\|=1\),
\[
g_{vv}
=
\bv^\top \mathbf{R}(\lambda,\tau)\bv
=
\frac{1}{B_v}.
\]
Hence
\[
\Delta_G=g_{\mu\mu}g_{vv}-g_{\mu v}^2
=
\frac{r}{L_\mu B_v}.
\]

Substituting these identities into the denominator
\eqref{eq:lin_Dh} gives
\[
D_{\mathrm{proj}}(\alpha)
=
1+\tau\frac{r}{L_\mu}
+
\tau\phi\alpha^2\frac{1}{B_v}
+
\tau^2\phi(1-\phi)\alpha^2\frac{r}{L_\mu B_v}.
\]
Equivalently,
\[
D_{\mathrm{proj}}(\alpha)
=
\frac{
A_\mu B_v+\tau\phi P_\mu\alpha^2
}
{L_\mu B_v}.
\]
Thus, with
\[
D_{\mathrm{eig}}(\alpha):=A_\mu B_v+\tau\phi P_\mu\alpha^2,
\]
we have
\[
D_{\mathrm{proj}}(\alpha)
=
\frac{D_{\mathrm{eig}}(\alpha)}{L_\mu B_v}.
\]

Now use the exact projection formula \eqref{eq:lin_hmu_exact}. Since
\(g_{\mu v}=0\),
\[
h_\mu(\alpha)
=
\frac{
\tau\left((1-2\phi)\frac{r}{L_\mu}
+
\tau\phi(1-\phi)\alpha^2\frac{r}{L_\mu B_v}\right)
}
{
D_{\mathrm{proj}}(\alpha)
}.
\]
Multiplying numerator and denominator by \(L_\mu B_v\), we get
\[
h_\mu(\alpha)
=
r
\frac{
\tau(1-2\phi)B_v+\tau^2\phi(1-\phi)\alpha^2
}
{
D_{\mathrm{eig}}(\alpha)
}.
\]
This is the claimed formula for the benign projection.

Similarly, using \eqref{eq:lin_hv_exact} and \(g_{\mu v}=0\),
\[
h_v(\alpha)
=
\frac{
\tau\left(
\phi\alpha\frac{1}{B_v}
+
2\tau\phi(1-\phi)\alpha\frac{r}{L_\mu B_v}
\right)
}
{
D_{\mathrm{proj}}(\alpha)
}.
\]
Again multiplying numerator and denominator by \(L_\mu B_v\), we obtain
\[
h_v(\alpha)
=
\frac{
\tau\phi\alpha
\left(L_\mu+2\tau(1-\phi)r\right)
}
{
D_{\mathrm{eig}}(\alpha)
}
=
\frac{
\tau\phi Q_\mu\alpha
}
{
D_{\mathrm{eig}}(\alpha)
}.
\]
This proves the projection formulas in \eqref{eq:lin_ab_eigen_compact}.

It remains to prove the monotonicity and peak claims. Write
\[
D_{\mathrm{eig}}(\alpha)=D_0+D_2\alpha^2,
\qquad
D_0:=A_\mu B_v,
\qquad
D_2:=\tau\phi P_\mu .
\]
For the trigger projection,
\[
h_v(\alpha)
=
\frac{\tau\phi Q_\mu\alpha}{D_0+D_2\alpha^2}.
\]
Since \(\tau\phi Q_\mu>0\),
\[
h_v'(\alpha)
=
\frac{
\tau\phi Q_\mu(D_0-D_2\alpha^2)
}
{
(D_0+D_2\alpha^2)^2
}.
\]
Thus \(h_v\) is strictly increasing when \(\alpha^2<D_0/D_2\) and strictly
decreasing when \(\alpha^2>D_0/D_2\). Its unique maximizer satisfies
\[
(\alpha_\star^{\mathrm{eig}})^2
=
\frac{D_0}{D_2}
=
\frac{A_\mu B_v}{\tau\phi P_\mu},
\]
which is \eqref{eq:lin_alpha_star_eigen_compact}.

For the benign projection, write
\[
h_\mu(\alpha)
=
r\frac{N_0+N_2\alpha^2}{D_0+D_2\alpha^2},
\]
where
\[
N_0:=\tau(1-2\phi)B_v,
\qquad
N_2:=\tau^2\phi(1-\phi).
\]
Differentiating gives
\[
h_\mu'(\alpha)
=
r\frac{2\alpha(N_2D_0-N_0D_2)}
{(D_0+D_2\alpha^2)^2}.
\]
Using the definitions above,
\[
N_2D_0-N_0D_2
=
\tau^2\phi^2B_v
\bigl(L_\mu+2\tau(1-\phi)r\bigr)
=
\tau^2\phi^2B_vQ_\mu.
\]
This quantity is strictly positive. Therefore
\[
h_\mu'(\alpha)>0
\qquad
\text{for every } \alpha>0.
\]
This completes the proof of Corollary~\ref{cor:lin_peak_eigen}.
\end{proof}

\subsection{Isotropic specialization}

For completeness, we record the fully expanded isotropic formulas referenced
in the main text.

\begin{corollary}[Isotropic specialization]
\label{cor:lin_isotropic}
Assume
\[
\C=\I,
\qquad
\bmu^\top\bv=0,
\qquad
\|\bv\|=1.
\]
Then \(s_\mu^2=s_v^2=1\), and Corollary~\ref{cor:lin_peak_eigen} gives
\begin{align}
h_\mu(\alpha)
&=
\EE[\bmu^\top\tilde{\btheta}]
\nonumber\\
&=
\|\bmu\|^2
\frac{
\tau(1-2\phi)(\lambda+\tau)+\tau^2\phi(1-\phi)\alpha^2
}
{
(\lambda+\tau+\tau\|\bmu\|^2)(\lambda+\tau)
+
\tau\phi\bigl(\lambda+\tau+\tau(1-\phi)\|\bmu\|^2\bigr)\alpha^2
},
\label{eq:lin_hmu_iso}
\\[4pt]
h_v(\alpha)
&=
\EE[\bv^\top\tilde{\btheta}]
\nonumber\\
&=
\frac{
\tau\phi\alpha\bigl(\lambda+\tau+2\tau(1-\phi)\|\bmu\|^2\bigr)
}
{
(\lambda+\tau+\tau\|\bmu\|^2)(\lambda+\tau)
+
\tau\phi\bigl(\lambda+\tau+\tau(1-\phi)\|\bmu\|^2\bigr)\alpha^2
}.
\label{eq:lin_hv_iso}
\end{align}
The trigger projection has a unique maximizer at
\begin{equation}
\label{eq:lin_alpha_star_iso}
(\alpha^\star_{\mathrm{iso}})^2
=
\frac{(\lambda+\tau+\tau\|\bmu\|^2)(\lambda+\tau)}
{\tau\phi\bigl(\lambda+\tau+\tau(1-\phi)\|\bmu\|^2\bigr)}.
\end{equation}
In particular, \(h_v(\alpha)>0\) for every \(\alpha>0\), increases up to
\(\alpha^\star_{\mathrm{iso}}\), and decreases thereafter.
\end{corollary}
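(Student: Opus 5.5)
Since $\C=\I$ and $\|\bv\|=1$, both $\bmu$ and $\bv$ are eigenvectors of $\C$ with $s_\mu^2=s_v^2=1$, and $\bmu^\top\bv=0$ by hypothesis. The plan is therefore to apply Corollary~\ref{cor:lin_peak_eigen} directly and then specialise its constants; no new fixed-point analysis should be needed, because Proposition~\ref{prop:lin_exact_projections} already shows that $\tau$ is determined independently of $\alpha$. (For $\C=\I$ it solves $\tau=1/(1+\delta)$ with $\delta=\tfrac{p}{n}\cdot\tfrac{1}{\lambda+\tau}$.)

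The first step is to substitute $s_\mu^2=s_v^2=1$ into the constants of Corollary~\ref{cor:lin_peak_eigen}:
\[
A_\mu=\lambda+\tau+\tau\|\bmu\|^2,\qquad B_v=\lambda+\tau,
\]
\[
P_\mu=\lambda+\tau+\tau(1-\phi)\|\bmu\|^2,\qquad Q_\mu=\lambda+\tau+2\tau(1-\phi)\|\bmu\|^2.
\]
Plugging these into \eqref{eq:lin_ab_eigen_compact} yields \eqref{eq:lin_hmu_iso} and \eqref{eq:lin_hv_iso} verbatim. In particular, the denominator becomes
\[
D_{\mathrm{eig}}(\alpha)=(\lambda+\tau+\tau\|\bmu\|^2)(\lambda+\tau)+\tau\phi P_\mu\alpha^2 .
\]
The maximiser formula \eqref{eq:lin_alpha_star_iso} is then just \eqref{eq:lin_alpha_star_eigen_compact} with these values, namely $(\alpha^\star_{\mathrm{iso}})^2=A_\mu B_v/(\tau\phi P_\mu)$.

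For the qualitative claims, the key observation is that $\tau=1/(1+\delta)>0$ with $\delta\ge 0$, that $\lambda>0$, and that $0<\phi<1/2$. Together these make $A_\mu$, $B_v$, $P_\mu$ and $Q_\mu$ all strictly positive. Consequently $h_v(\alpha)=\tau\phi Q_\mu\alpha/D_{\mathrm{eig}}(\alpha)$ is strictly positive for every $\alpha>0$. The derivative computation in the proof of Corollary~\ref{cor:lin_peak_eigen} gives $h_v'$ the sign of $D_0-D_2\alpha^2$, where $D_0=A_\mu B_v$ and $D_2=\tau\phi P_\mu$. Hence $h_v$ is increasing below $\alpha^\star_{\mathrm{iso}}$, decreasing above it, and $\alpha^\star_{\mathrm{iso}}$ is the unique maximiser.

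There is no genuine obstacle here, since everything reduces to bookkeeping. The only points to check carefully are positivity of $\tau$ and correct cancellation of the $L_\mu B_v$ factors, which has already been handled in the eigenvector proof.
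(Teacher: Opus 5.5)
Your proposal is correct and follows the paper's proof: set $s_\mu^2=s_v^2=1$ in Corollary~\ref{cor:lin_peak_eigen}, write out $A_\mu$, $B_v$, $P_\mu$, $Q_\mu$, and read off the projection formulas and the peak location. Your added checks — that $\tau>0$ and $A_\mu,B_v,P_\mu,Q_\mu>0$, and that $h_v'$ has the sign of $D_0-D_2\alpha^2$ — just spell out what the paper inherits from the eigenvector corollary.
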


\begin{proof}
If \(\C=\I\), then
\[
s_\mu^2=s_v^2=1.
\]
Substituting these values into Corollary~\ref{cor:lin_peak_eigen} gives
\[
A_\mu=\lambda+\tau+\tau\|\bmu\|^2,
\qquad
B_v=\lambda+\tau,
\]
and
\[
P_\mu=\lambda+\tau+\tau(1-\phi)\|\bmu\|^2,
\qquad
Q_\mu=\lambda+\tau+2\tau(1-\phi)\|\bmu\|^2.
\]
The projection formulas \eqref{eq:lin_hmu_iso} and~\eqref{eq:lin_hv_iso}
follow by substituting these identities into
\eqref{eq:lin_ab_eigen_compact}. The peak location
\eqref{eq:lin_alpha_star_iso} follows from
\eqref{eq:lin_alpha_star_eigen_compact}.
\end{proof}

%% file: appendix_logistic.tex
\section{Proofs for non-quadratic losses }
\label{app:proofs}

Throughout this appendix, we write
$\mathbf{R} = \mathbf{R}(\lambda,\tau) = (\lambda\I+\tau\C)^{-1}$ for the resolvent.

\begin{remark}[Conventions for the proofs]
\label{rem:conventions}
Assumption~\ref{ass:asymptotics} is really a statement about a
sequence $\{(\bmu_n, \C_n, \bv_n)\}$ in dimension $p_n$ with
$p_n/n\to\kappa$; we suppress the index throughout and write
``$=o(1)$'' for terms vanishing along this sequence as $n\to\infty$
with $\alpha$ fixed.  All constants in the bounds below depend only
on $L$, $\lambda$, $\phi$, $\sup_n\|\C_n\|$ and $\sup_n\|\bmu_n\|$.

\end{remark}

\subsection{Proof of Proposition~\ref{prop:alignment_peaks}}
\label{app:proof_alignment_peaks}

\begin{proof}
We use the notation of Theorem~\ref{thm:loureiro}.
The proof proceeds in six steps: we first bound the variance parameter
$\sigma^2$ independently of the trigger strength~$\alpha$, then
decompose the alignment equations under the orthogonality assumption,
bound the key quantity $\eta_2$ as a function of the poisoned-class
margin~$M_2$, and finally combine these bounds to establish the
decay rate.

\medskip
\noindent\textbf{Step 1: Bounding $\sigma^2$ independently of
$\alpha$.}
We first show that the variance parameter $\sigma^2$ from the
fixed-point system remains bounded as $\alpha$ varies.
Since $\hat\btheta$ minimises the regularised empirical loss
$\mathcal{L}_n$, comparing with the zero vector gives
\[
\mathcal{L}_n(\hat\btheta) \leq \mathcal{L}_n(\mathbf{0}) = L(0).
\]
Expanding the left-hand side and using non-negativity of the
individual loss terms, we obtain
\[
\frac{\lambda}{2}\|\hat\btheta\|^2
\leq \mathcal{L}_n(\hat\btheta) \leq L(0),
\]
so $\|\hat\btheta\|^2 \leq 2L(0)/\lambda$.
Applying Theorem~\ref{thm:loureiro} to the pseudo-Lipschitz function
$\btheta\mapsto\|\btheta\|^2$ gives
$\|\hat\btheta\|^2 \rightarrow \EE[\|\tilde\btheta\|^2]$ in probability, and since
$\EE[\|\tilde\btheta\|^2]$ is deterministic, this forces
$\EE[\|\tilde\btheta\|^2] \leq 2L(0)/\lambda + o_\alpha(1)$
as $n\to\infty$, and therefore
\[
\sigma^2
= \EE[\tilde\btheta^\top\C\tilde\btheta]
\leq \EE[\|\tilde\btheta\|^2]\,\|\C\|
\leq \sigma_{\max}^2 + o_\alpha(1),
\qquad
\sigma_{\max}^2 := \frac{2L(0)\|\C\|}{\lambda},
\]
where the $o_\alpha(1)$ is uniform in~$\alpha$ in the sense that the
limiting bound $\sigma_{\max}^2$ does not depend on~$\alpha$.
This bound will be used in Step~3 to control the Gaussian
expectations defining~$\eta_2$; since $\sigma_{\max}^2$ is finite, the
$o_\alpha(1)$ correction is absorbed into the constants of the bounds
that follow without affecting their structural form, and re-emerges
explicitly in the final rate.

\medskip
\noindent\textbf{Step 2: Alignment decomposition under
orthogonality.}
Recall from Theorem~\ref{thm:loureiro} that the Gaussian proxy
$\tilde\btheta$ has mean
\[
\EE[\tilde\btheta]
= \mathbf{R}\bigl[\eta_1\bmu + \eta_2(\alpha\bv - \bmu)\bigr]
= \mathbf{R}\bigl[(\eta_1 - \eta_2)\bmu + \eta_2\alpha\bv\bigr],
\]
where the two classes have means $\bmu_1 = \bmu$ and
$\bmu_2 = \alpha\bv - \bmu$ with probabilities $1-\phi$ and $\phi$
respectively.
The fixed-point means Theorem~\ref{thm:loureiro} are therefore
\begin{align*}
M_1
&= \bmu^\top\mathbf{R}\bigl[(\eta_1-\eta_2)\bmu
   + \eta_2\alpha\bv\bigr]
 = (\eta_1-\eta_2)\bmu^\top\mathbf{R}\bmu
   + \eta_2\alpha\,\bv^\top\mathbf{R}\bmu, \\[4pt]
M_2
&= (\alpha\bv-\bmu)^\top\mathbf{R}\bigl[(\eta_1-\eta_2)\bmu
   + \eta_2\alpha\bv\bigr].
\end{align*}
Under Assumption~\ref{ass:orthogonality}, the cross term
$\bv^\top\mathbf{R}\bmu = o_{\alpha}(1)$ is negligible in the limit.
Applying this to the first equation gives
$M_1 = (\eta_1-\eta_2)\bmu^\top\mathbf{R}\bmu + o_{\alpha}(1)$.
For the second, we expand:
\begin{align*}
M_2
&= \alpha(\eta_1-\eta_2)
   \underbrace{\bv^\top\mathbf{R}\bmu}_{o_{\alpha}(1)}
 - (\eta_1-\eta_2)\bmu^\top\mathbf{R}\bmu
 + \eta_2\alpha^2\bv^\top\mathbf{R}\bv
 - \eta_2\alpha
   \underbrace{\bmu^\top\mathbf{R}\bv}_{o_{\alpha}(1)} \\
&= \eta_2\alpha^2\bv^\top\mathbf{R}\bv
 - (\eta_1-\eta_2)\bmu^\top\mathbf{R}\bmu + o_{\alpha}(1).
\end{align*}
We therefore obtain the simplified system
\begin{align}
M_1 &= (\eta_1 - \eta_2)\,\bmu^\top\mathbf{R}\bmu + o_{\alpha}(1),
\label{eq:M1_app} \\
M_2 &= \eta_2\alpha^2\,\bv^\top\mathbf{R}\bv - M_1 + o_{\alpha}(1).
\label{eq:M2_app}
\end{align}
The trigger alignment is computed similarly:
\[
\EE[\tilde\btheta^\top\bv]
= \bv^\top\mathbf{R}
  \bigl[(\eta_1-\eta_2)\bmu + \eta_2\alpha\bv\bigr]
= (\eta_1-\eta_2)
  \underbrace{\bv^\top\mathbf{R}\bmu}_{o_{\alpha}(1)}
  + \eta_2\alpha\,\bv^\top\mathbf{R}\bv
= \eta_2\alpha\,\bv^\top\mathbf{R}\bv + o_{\alpha}(1).
\]
Noting that
$M_1 + M_2 = \eta_2\alpha^2\,\bv^\top\mathbf{R}\bv$
from~\eqref{eq:M1_app}--\eqref{eq:M2_app}, we also have the useful
identity
\begin{equation}
\label{eq:alignment_identity_app}
\EE[\tilde\btheta^\top\bv]
= \frac{M_1 + M_2}{\alpha} + o_{\alpha}(1).
\end{equation}
This alignment vanishes at $\alpha = 0$ (since $\eta_2 = 0$ when
there is no poisoned class) and is positive for $\alpha > 0$
(since $\eta_2 > 0$ and $\bv^\top\mathbf{R}\bv > 0$).

We also note that
$|M_1| = |\bmu^\top\EE[\tilde\btheta]|
\leq \|\bmu\|\,\sqrt{\EE[\|\tilde\btheta\|^2]}
\leq M_1^{\max} + o_\alpha(1)$,
where $M_1^{\max} := \|\bmu\|\sqrt{2L(0)/\lambda}$, by Jensen's
inequality and Step~1.
Since the first term of~\eqref{eq:M2_app} is non-negative,
$M_2$ is bounded below:
\[
M_2
= \eta_2\alpha^2\bv^\top\mathbf{R}\bv - M_1
\geq -|M_1|
\geq -M_1^{\max} - o_\alpha(1).
\]

\begin{remark}[Orthogonality with $n$-dependent $\tau$]
\label{rem:krylov}
Assumption~\ref{ass:orthogonality} is stated for each fixed
$\tau \geq 0$, but in Theorem~\ref{thm:loureiro} the quantity
$\tau = \tau_n$ depends on $n$.
Since $\tau_n$ is bounded, any subsequence has a convergent
sub-subsequence $\tau_{n_k} \to \tau_\infty$, and
$\bv^\top(\lambda\I+\tau_\infty\C)^{-1}\bmu = 0$ by the assumption.
A resolvent-identity argument extends this to
$\bv^\top(\lambda\I+\tau_n\C)^{-1}\bmu \to 0$ along the full
sequence.
\end{remark}

\medskip
\noindent\textbf{Step 3: Bounding $\eta_2$ as a function of $M_2$.}

We view $\eta_2$ as a function of $M_2$.
Recall from the fixed-point system that
\[
\eta_2 = \phi\,\EE_{\xi \sim \NN(0,1)}[f(M_2 + \sigma\xi)],
\]
where $f$ is defined via the proximal operator as
$f(x) = -L'\bigl(\prox_{\delta L}(x)\bigr)$.
The function $f$ satisfies the functional equation
\[
f(x) = -L'(x + \delta f(x)).
\]
Since $L$ is convex and decreasing, the proximal shift satisfies
$\delta f(x) \geq 0$, and since $L'$ is increasing (by convexity)
and strictly negative (by Assumption~\ref{ass:loss}), we obtain the
pointwise bound
\[
f(x) = -L'(x + \delta f(x)) \leq -L'(x).
\]
This allows us to replace $f$ with the simpler quantity $-L'$ in upper
bounds, at the cost of an inequality.
Using this together with the tail bounds on $L'$ from
Assumption~\ref{ass:loss}, we establish:

\begin{lemma}
\label{lem:eta2_bound}
There exists $C > 0$ independent of $\alpha$ and $n$ such that
\[
\eta_2 \;\leq\; \frac{C + o_\alpha(1)}{|M_2|^{1+\epsilon}+1},
\]
where the $o_\alpha(1)$ tracks the finite-$n$ deviation of $\sigma$
from its limiting upper bound $\sigma_{\max}$.
\end{lemma}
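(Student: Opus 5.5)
Start from the representation $\eta_2 = \phi\,\EE_{\xi}[f(M_2+\sigma\xi)]$ and the pointwise bound $0 < f(x) \le -L'(x) = |L'(x)|$ from Step~3. This reduces the lemma to a Gaussian-smoothed tail estimate for $|L'|$:
\[
\eta_2 \le \phi\,\EE_\xi\bigl[|L'(M_2+\sigma\xi)|\bigr],
\]
with $\sigma \le \sigma_{\max}+o_\alpha(1)$ by Step~1. Step~2 gives the lower bound $M_2 \ge -M_1^{\max}-o_\alpha(1)$, so the only regimes are $M_2$ bounded ($|M_2|\le 2$, say, or $M_2$ negative but bounded below) and $M_2 \to +\infty$.

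\emph{Bounded regime.} Suppose $M_2 \le T$ for a fixed $T$ (for instance $T = \max(2, 2M_1^{\max})$). Then $|M_2|$ lies in a compact set and it suffices to show $\eta_2 \le C'$. Split the expectation according to the sign of $M_2+\sigma\xi$.
\begin{itemize}
\item On $\{M_2+\sigma\xi<0\}$, Assumption~\ref{ass:loss} gives $|L'| \le e^{C_1(|M_2|+\sigma|\xi|+1)}$. Since $\sigma\le\sigma_{\max}+o_\alpha(1)$ and $\EE e^{c|\xi|}<\infty$, this contributes a constant.
\item On $\{M_2+\sigma\xi>0\}$, $|L'|\le C_2$.
\end{itemize}
Multiplying by $|M_2|^{1+\epsilon}+1 \le T^{1+\epsilon}+1$ gives the bound with a constant $C$.

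\emph{Large-$M_2$ regime.} Suppose $M_2 > T$. Split on the event $E=\{\sigma\xi \ge -M_2/2\}$.
\begin{itemize}
\item On $E$, the argument satisfies $M_2+\sigma\xi \ge M_2/2>0$. Since $1/(x^{1+\epsilon}+1)$ is decreasing for $x>0$, the tail bound gives $|L'| \le C_2/((M_2/2)^{1+\epsilon}+1) \le 2^{1+\epsilon}C_2/(M_2^{1+\epsilon}+1)$.
\item On $E^c$, use Cauchy--Schwarz on $|L'(M_2+\sigma\xi)|\mathbf 1_{E^c}$. On $E^c$ we have $\xi<0$, and in that region $|L'(M_2+\sigma\xi)|\le \max(C_2, e^{C_1(\sigma|\xi|+1)})$, since any negative argument satisfies $|M_2+\sigma\xi|\le\sigma|\xi|$. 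The second moment of this quantity is a constant $K(\sigma_{\max})$. The Gaussian tail gives $\PP(E^c)\le e^{-M_2^2/(8\sigma^2)}$, so this piece is at most $\sqrt{K}\,e^{-M_2^2/(16\sigma^2)}$.
\end{itemize}
Super-polynomial decay lets $e^{-M_2^2/(16\sigma^2)}(M_2^{1+\epsilon}+1)$ be bounded uniformly in $M_2$ by a constant that depends only on $\sigma_{\max}$. Combining the two pieces gives $\eta_2(M_2^{1+\epsilon}+1)\le C$.

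\emph{Main obstacle.} The delicate point is uniformity in $\sigma$. The constants above depend on $\sigma$ only through monotone quantities such as $\EE e^{c\sigma|\xi|}$ and $\sup_x x^{1+\epsilon}e^{-x^2/(16\sigma^2)}$. Each of these is increasing in $\sigma$, so I would evaluate them at $\sigma_{\max}+o_\alpha(1)$ and expand to first order. That produces exactly the additive $o_\alpha(1)$ correction in the numerator, with $C$ depending only on $L$, $\lambda$, $\phi$, $\|\C\|$ and $\|\bmu\|$, as required by Remark~\ref{rem:conventions}. I would also check that nothing uses a lower bound on $\sigma$: the case $\sigma\to0$ only helps, since $E^c$ is then empty.
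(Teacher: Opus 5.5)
Your proof is correct and follows essentially the same route as the paper. Both arguments reduce $\eta_2$ to $\phi\,\EE_\xi[-L'(M_2+\sigma\xi)]$ via $f\le -L'$, apply the polynomial tail bound where $M_2+\sigma\xi$ is at least a fixed fraction of $M_2$, and absorb the remaining region with a Gaussian tail whose constants are monotone in $\sigma\le\sigma_{\max}+o_\alpha(1)$. The differences are cosmetic: you split at $-M_2/(2\sigma)$ instead of $-M_2/(2\sigma_{\max})$, which avoids the ``$n$ large, $M_2/4$'' step; you merge the negative-argument and Gaussian-tail pieces via Cauchy--Schwarz; and you write out the bounded-$M_2$ case that the paper calls trivial.
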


\begin{proof}
The bound is trivial for $M_2 \leq 0$ since $M_2$ is bounded below
(Step~2).
Hence assume $M_2 > 0$.
Using the bound $f(x) \leq -L'(x)$ established above:
\begin{align*}
\eta_2
&= \phi\,\EE_{\xi \sim \NN(0,1)} f(M_2 + \sigma \xi) \\
&\leq \phi\,\EE_{\xi}[-L'(M_2 + \sigma \xi)] \\
&= \phi\,\EE_{\xi}\!\left[
   -L'(M_2 + \sigma \xi)\,\mathbf{1}_{M_2 + \sigma\xi \geq 0}
\right]
 + \phi\,\EE_{\xi}\!\left[
   -L'(M_2 + \sigma \xi)\,\mathbf{1}_{M_2 + \sigma\xi < 0}
\right] \\
&\leq \phi\,\EE_{\xi}\!\left[
   \frac{C_2}{(M_2 + \sigma\xi)^{1+\epsilon}+1}\,
   \mathbf{1}_{M_2 + \sigma\xi \geq 0}
\right]
 + \phi\,\EE_{\xi}\!\left[
   e^{C_1(|M_2 + \sigma\xi|+1)}\,
   \mathbf{1}_{M_2 + \sigma\xi < 0}
\right],
\end{align*}
where we have applied the positive and negative tail bounds from
Assumption~\ref{ass:loss}.
Since $\sigma \leq \sigma_{\max} + o_\alpha(1)$ (Step~1), the second
term is of order $O(e^{-C_1 M_2})$ uniformly in $n$ for $n$
sufficiently large, as the Gaussian integral over
$\{\sigma\xi < -M_2\}$ is bounded.

The first term can be further partitioned, using the bound
$\sigma \leq \sigma_{\max} + o_\alpha(1)$ from Step~1:
\begin{align*}
&\phi\,\EE_{\xi}\!\left[
   \frac{C_2}{(M_2 + \sigma\xi)^{1+\epsilon}+1}\,
   \mathbf{1}_{M_2 + \sigma\xi \geq 0}
\right] \\
&\quad\leq
\phi\,\EE_{\xi}\!\left[
   \frac{C_2}{(M_2 + \sigma\xi)^{1+\epsilon}+1}\,
   \mathbf{1}_{\xi \geq -\frac{M_2}{2\sigma_{\max}}}
\right]
 + \phi\,\EE_{\xi}\!\left[
   \frac{C_2}{(M_2 + \sigma\xi)^{1+\epsilon}+1}\,
   \mathbf{1}_{\xi < -\frac{M_2}{2\sigma_{\max}}}
\right].
\end{align*}
For the second term: the integrand is bounded (since the denominator
is at least $1$), while the Gaussian measure of
$\{\xi < -M_2/(2\sigma_{\max})\}$ is $O(e^{-C'M_2^2})$ by a standard
tail bound, so this contribution is negligible.

For the first term: when $\xi \geq -M_2/(2\sigma_{\max})$ we have
$M_2 + \sigma\xi \geq M_2 - \sigma \cdot M_2/(2\sigma_{\max})
\geq M_2/2 - o_\alpha(1)\,M_2/(2\sigma_{\max})$
(since $\sigma \leq \sigma_{\max} + o_\alpha(1)$), so for $n$
sufficiently large $M_2 + \sigma\xi \geq M_2/4$, and therefore
\[
\frac{C_2}{(M_2 + \sigma\xi)^{1+\epsilon}+1}
\leq \frac{C_2}{(M_2/4)^{1+\epsilon}+1}
\leq \frac{C}{|M_2|^{1+\epsilon}+1}.
\]
The constant $C$ depends continuously on $\sigma_{\max}+o_\alpha(1)$,
giving the lemma's bound
$\eta_2 \leq (C+o_\alpha(1))/(|M_2|^{1+\epsilon}+1)$. 
\end{proof}

\medskip
\noindent\textbf{Step 4: Polynomial-tail decay rate.}
We now combine the bound on $\eta_2$ from
Lemma~\ref{lem:eta2_bound} with the alignment equations to determine
the growth rate of $M_2$ as $\alpha \to \infty$.

From~\eqref{eq:M2_app}, we have
$M_2 = \eta_2\alpha^2\bv^\top\mathbf{R}\bv - M_1$, and since $M_1$
is bounded we obtain
\[
M_2 \leq \eta_2\alpha^2\bv^\top\mathbf{R}\bv + |M_1|.
\]
Since $\|\bv\| = 1$ and
$\mathbf{R} = (\lambda\I + \tau\C)^{-1} \preceq \lambda^{-1}\I$,
we have the resolvent bound
\[
\bv^\top\mathbf{R}\bv
\leq \frac{\|\bv\|^2}{\lambda} = \frac{1}{\lambda}.
\]
Substituting the bound from Lemma~\ref{lem:eta2_bound},
$\eta_2 \leq (C+o_\alpha(1))/(|M_2|^{1+\epsilon}+1)$:
\[
M_2
\leq \frac{(C+o_\alpha(1))\,\alpha^2}{\lambda(|M_2|^{1+\epsilon}+1)} + |M_1|.
\]
Since $|M_1| \leq M_1^{\max} + o_\alpha(1)$ is bounded independently
of $\alpha$ (Step~2), for $\alpha$ sufficiently large the first term
dominates, and we may absorb $|M_1|$ into the constant.
We therefore have the self-consistent inequality
\[
M_2 \leq \frac{(C'+o_\alpha(1))\,\alpha^2}{|M_2|^{1+\epsilon}+1}
\]
for some constant $C'$ not depending on $\alpha$ or $n$. And thus for $M_2 > 0$
rearranging gives
$M_2^{2+\epsilon} \leq (C'+o_\alpha(1))\alpha^2$, i.e.\
\[
|M_2| \leq C''\,\alpha^{2/(2+\epsilon)}\,(1+o_\alpha(1)).
\]

Where we note now that this inequality holds for $M_2 < 0$ and $\alpha$ large too trivially, since $M_2$ is bounded below.

Now, recalling the identity~\eqref{eq:alignment_identity_app},
$\EE[\bv^\top\tilde\btheta] = (M_1 + M_2)/\alpha$, and using
$|M_1| \leq M_1^{\max} + o_\alpha(1)$ from Step~2 together with the
$M_2$ bound above:
\[
\EE[\tilde\btheta^\top\bv]
= \frac{M_1 + M_2}{\alpha}
\leq \frac{M_1^{\max}}{\alpha}
+ \frac{C''\,\alpha^{2/(2+\epsilon)}\,(1+o_\alpha(1))}{\alpha}
+ \frac{o_\alpha(1)}{\alpha}
= g(\alpha) + o_\alpha(1),
\]
where, using
$\frac{2}{2+\epsilon} - 1
= \frac{-\epsilon}{2+\epsilon}$
and absorbing
$M_1^{\max}/\alpha \leq M_1^{\max}\alpha^{-\epsilon/(2+\epsilon)}$
(valid for $\alpha \geq 1$) into the leading term, while the
multiplicative $o_\alpha(1)\,\alpha^{-\epsilon/(2+\epsilon)}$ is at
most $o_\alpha(1)$ for $\alpha \geq 1$,
\[
g(\alpha) \;\leq\; C'\,\alpha^{-\epsilon/(2+\epsilon)}.
\]
Since $\epsilon > 0$, $g(\alpha) \to 0$ as $\alpha \to \infty$.

\medskip
\noindent\textbf{Step 5: Concentration and existence of maximum.}
By Theorem~\ref{thm:loureiro} applied to the $1$-Lipschitz function
$\btheta\mapsto\bv^\top\btheta$, we have
$\bv^\top\hat\btheta\to_p\EE[\bv^\top\tilde\btheta]$ in probability,
so the decay bound from Step~4 on $\EE[\bv^\top\tilde\btheta]$
transfers to $\bv^\top\hat\btheta$ in probability.

Since $\EE \bv^\top\tilde\btheta = o_{\alpha}(1)$ at $\alpha = 0$
(no poisoning) and $\EE[\bv^\top \tilde\btheta] = \eta_2 \alpha^2 \bv^\top \mathbf{R} \bv$ is always positive for $\alpha > 0$ (since $\eta_2 = \EE[f(\cdot)]$ and $f(x) \geq 0$ under our assumptions on $L$), and vanishes as
$\alpha \to \infty$ (by the decay bound above), continuity
guarantees that the trigger alignment $\bv^\top\tilde\btheta$ is
maximised at some finite $\alpha^\star \in (0,\infty)$.

\medskip
\noindent\textbf{Step 6: Exponential-tail case.}
Suppose now that the loss derivative satisfies the stronger bound
$|L'(x)| \leq C_3 e^{-C_4 x}$ for $x > 0$ (this holds, for
instance, for the logistic loss $L(x) = \log(1+e^{-x})$, where
$L'(x) = -e^{-x}/(1+e^{-x})$).

We show that $\eta_2 \leq Ce^{-C'M_2}$ for $M_2$ large, by a
partition argument analogous to that in Lemma~\ref{lem:eta2_bound}.
Starting from
$\eta_2 \leq \phi\,\EE_\xi[-L'(M_2 + \sigma\xi)]$,
we split the expectation into the same regions:
\[
\eta_2
\leq \phi\,\EE_{\xi}\!\left[
   -L'(M_2+\sigma\xi)\,\mathbf{1}_{M_2+\sigma\xi \geq 0}
\right]
 + \phi\,\EE_{\xi}\!\left[
   -L'(M_2+\sigma\xi)\,\mathbf{1}_{M_2+\sigma\xi < 0}
\right].
\]

\emph{Negative region} $\{M_2 + \sigma\xi < 0\}$:
as before, the exponential bound
$|L'(x)| \leq e^{C_1(|x|+1)}$ for $x < 0$ combined with
$\sigma \leq \sigma_{\max} + o_\alpha(1)$ from Step~1 gives a
contribution of $O(e^{-C_1 M_2})$, uniformly in $n$ for $n$ large.

\emph{Positive region} $\{M_2 + \sigma\xi \geq 0\}$:
we partition further at $\xi = -M_2/(2\sigma_{\max})$, using
$\sigma \leq \sigma_{\max} + o_\alpha(1)$.
\begin{align*}
&\phi\,\EE_{\xi}\!\left[
   -L'(M_2+\sigma\xi)\,\mathbf{1}_{M_2+\sigma\xi \geq 0}
\right] \\
&\quad\leq
\phi\,\EE_{\xi}\!\left[
   C_3 e^{-C_4(M_2+\sigma\xi)}\,
   \mathbf{1}_{-\frac{M_2}{\sigma_{\max}} < \xi
   < -\frac{M_2}{2\sigma_{\max}}}
\right]
 + \phi\,\EE_{\xi}\!\left[
   C_3 e^{-C_4(M_2+\sigma\xi)}\,
   \mathbf{1}_{\xi \geq -\frac{M_2}{2\sigma_{\max}}}
\right].
\end{align*}
For the first term: the integrand is bounded, while the Gaussian
measure of
$\{\xi < -M_2/(2\sigma_{\max})\}$ is $O(e^{-C'M_2^2})$, which is
negligible compared to $e^{-C'M_2}$ for large enough $M_2$.

For the second term: when $\xi \geq -M_2/(2\sigma_{\max})$ we have
$M_2 + \sigma\xi \geq M_2/2 - o_\alpha(1)\,M_2/(2\sigma_{\max})$
(since $\sigma \leq \sigma_{\max}+o_\alpha(1)$), hence
$M_2 + \sigma\xi \geq M_2/4$ for $n$ sufficiently large. The
exponential tail bound then gives
$|L'(M_2+\sigma\xi)| \leq C_3 e^{-C_4 M_2/4}$, and since the Gaussian
integral over this region is at most $1$, the contribution is
$O(e^{-C_4 M_2/4})$.

Combining all regions yields
\[
\eta_2 \leq (C+o_\alpha(1))\,e^{-C'M_2}
\]
for constants $C, C' > 0$ not depending on $\alpha$ or $n$, with the
$o_\alpha(1)$ tracking the dependence of $C$ on $\sigma_{\max}+o_\alpha(1)$
rather than $\sigma_{\max}$ exactly.

Substituting into the alignment equation as in Step~4:
\[
M_2
\leq (C+o_\alpha(1))\,e^{-C'M_2}\frac{\alpha^2}{\lambda} + |M_1|.
\]
For $\alpha$ large, absorbing the bounded $|M_1|$ term:
\[
M_2 \leq (C''+o_\alpha(1))\,e^{-C'M_2}\alpha^2,
\]
and hence
\[
M_2\,e^{C'M_2} \leq (C''+o_\alpha(1))\,\alpha^2.
\]
Now for $M_2 > 0$ then taking logarithms of both sides:
\[
\log M_2 + C'M_2 \leq 2\log\alpha + \log(C''+o_\alpha(1)).
\]
For $M_2$ sufficiently large we have $\log M_2 \geq 0$, and therefore
\[
C'M_2 \leq 2\log\alpha + O(1) + o_\alpha(1),
\]
giving $M_2 \leq C'''\,\log\alpha + o_\alpha(1)$ as
$\alpha \to \infty$. Where again now the statement is trivially true for negative $M_2$ and large enough alpha, since $M_2$ is bounded below.
Hence,
 \[
|M_2| \leq C'''\,\log\alpha + o_\alpha(1)
\]

Finally, using the identity~\eqref{eq:alignment_identity_app} and
$|M_1| \leq M_1^{\max} + o_\alpha(1)$ from Step~2 together with the
$M_2$ bound above:
\[
\EE[\tilde\btheta^\top\bv]
= \frac{M_1 + M_2}{\alpha}
\leq \frac{M_1^{\max} + C'''\,\log\alpha}{\alpha}
+ \frac{o_\alpha(1)}{\alpha}
= g(\alpha) + o_\alpha(1),
\qquad
g(\alpha) \leq C'\,\frac{\log\alpha}{\alpha},
\]
where $M_1^{\max}/\alpha$ is absorbed into the leading
$\log\alpha/\alpha$ term for $\alpha \geq e$, and $o_\alpha(1)/\alpha
\leq o_\alpha(1)$ for $\alpha \geq 1$.
The concentration and existence-of-maximum arguments from Step~5
apply identically.
\end{proof}

\subsection{Cone Lemma}
\label{app:proof_cone}

The following structural lemma underpins much of the population-risk
analysis. It shows that the unpoisoned ($\alpha = 0$) population
minimiser, and as a special case the benign minimiser, lies in a
two-dimensional submanifold determined by the resolvent of $\C$.\footnote{This can be compared with the form of $\hat \btheta$ for the ERM with $\alpha = 0$, for which its expectation will lie on a similar submanifold.}

\begin{lemma}[Cone Lemma]
\label{lem:cone}
Suppose Assumption~\ref{ass:loss} holds with $\phi < \tfrac{1}{2}$ and
$\lambda > 0$. Then:
\begin{enumerate}[label=(\roman*),leftmargin=*]
\item The minimiser $\btheta(0)$
of~\eqref{eq:pop_loss} at $\alpha = 0$ satisfies
\[
\btheta(0) \;=\; a\,(\lambda\I+\tau\C)^{-1}\bmu
\]
for some $a>0$ and $\tau\ge 0$.
\item The benign minimiser $\btheta_{\mathrm{ben}}$
of~\eqref{eq:ben_loss} satisfies
\[
\btheta_{\mathrm{ben}} \;=\; a\,\bigl(\tfrac{\lambda}{1-\phi}\I+\tau\C\bigr)^{-1}\bmu
\]
for some $a>0$ and $\tau\ge 0$.
\end{enumerate}
Part~(ii) is the special case of part~(i) with $\phi = 0$ and $\lambda$
replaced by $\lambda/(1-\phi)$, since
$\btheta_{\mathrm{ben}}$ is the minimiser of
$\mathcal{L}_{\mathrm{ben}}/(1-\phi)
= \EE_{\x\sim\NN(\bmu,\C)}[L(\btheta^\top\x)]
+ \tfrac{\lambda/(1-\phi)}{2}\|\btheta\|^2$,
which coincides with $\mathcal{L}_{\mathrm{pop}}(\,\cdot\,;0)$ at
$\phi = 0$ and regulariser $\lambda/(1-\phi)$.
\end{lemma}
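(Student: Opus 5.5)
The plan is to prove part~(i) from the first-order optimality condition, evaluated with Stein's lemma; part~(ii) then follows by the reduction already noted after the statement. At $\alpha=0$ both mixture components of~\eqref{eq:pop_loss} are Gaussian with mean $\pm\bmu$ and covariance $\C$. The objective is $\lambda$-strongly convex, so the minimiser $\btheta:=\btheta(0)$ is unique and satisfies
\[
\lambda\btheta=-(1-\phi)\,\EE_{\x\sim\NN(\bmu,\C)}[L'(\btheta^\top\x)\x]-\phi\,\EE_{\x\sim\NN(-\bmu,\C)}[L'(\btheta^\top\x)\x].
\]
The growth bounds in Assumption~\ref{ass:loss} justify differentiating under the expectation. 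If $L$ is not twice differentiable, I would apply Stein's lemma in its integration-by-parts form, or mollify $L$ and pass to the limit.

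For each component, write $B_\pm=\btheta^\top\x$ with $\x\sim\NN(\pm\bmu,\C)$. Stein's lemma gives
\[
\EE[L'(B_\pm)\x]=\pm\bmu\,\EE[L'(B_\pm)]+\C\btheta\,\EE[L''(B_\pm)].
\]
Substituting into the optimality condition yields
\[
(\lambda\I+\tau\C)\btheta=a\bmu,
\]
with
\[
\tau:=(1-\phi)\EE[L''(B_+)]+\phi\EE[L''(B_-)]\ge0,\qquad
a:=-(1-\phi)\EE[L'(B_+)]+\phi\EE[L'(B_-)].
\]
Convexity gives $\tau\ge0$, so $\lambda\I+\tau\C$ is invertible and $\btheta=a(\lambda\I+\tau\C)^{-1}\bmu$. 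This establishes the cone form, and it remains only to show $a>0$.

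\textbf{Main obstacle: the sign of $a$.} Since $L'<0$, the first term of $a$ is positive but the second is negative, so I need
\[
(1-\phi)\,\EE[-L'(B_+)]>\phi\,\EE[-L'(B_-)].
\]

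First I would show $m:=\bmu^\top\btheta\ge0$. The map $\btheta\mapsto-\btheta$ swaps the roles of the two classes. The objective restricted to the line $\{t\btheta\}$ therefore compares $\EE L(t m+t s\xi)$ weighted by $1-\phi$ against the same expression with mean $-tm$ weighted by $\phi$, where $s^2=\btheta^\top\C\btheta$. Because $1-\phi>\phi$, flipping the sign so that the heavier class gets the positive mean never increases the loss, since $L$ is decreasing. Uniqueness then forces $m\ge0$; more precisely, it forces $m>0$ unless $\btheta=0$.

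Next, since $-L'$ is nonincreasing and $B_+$ stochastically dominates $B_-$ (same variance, means $m\ge -m$), I obtain $\EE[-L'(B_+)]\le\EE[-L'(B_-)]$. This is the wrong direction for a direct comparison, so the argument needs care. I would instead use the identity $\bmu^\top\btheta=a\,\bmu^\top(\lambda\I+\tau\C)^{-1}\bmu$. The quadratic form $\bmu^\top(\lambda\I+\tau\C)^{-1}\bmu$ is positive for $\bmu\neq0$, so the sign of $a$ equals the sign of $m$. Hence $a>0$ follows from $m>0$.

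To obtain $m>0$, I would rule out $\btheta=0$ by showing the directional derivative at $0$ along $\bmu$ is negative:
\[
\bmu^\top\nabla\mathcal L_{\mathrm{pop}}(0;0)=L'(0)\bigl((1-\phi)-\phi\bigr)\|\bmu\|^2<0,
\]
using $\phi<\tfrac12$ and $L'(0)<0$. So $0$ is not the minimiser. With $\btheta\neq0$, the symmetry/uniqueness argument gives $m>0$, and therefore $a>0$.

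Finally, for part~(ii), I would apply part~(i) with $\phi=0$ and regulariser $\lambda/(1-\phi)$, exactly as in the reduction stated after the lemma.
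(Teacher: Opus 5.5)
Your proof is correct. The stationarity-plus-Stein derivation of $(\lambda\I+\tau\C)\btheta(0)=a\bmu$, the formulas for $\tau$ and $a$, the identity $\operatorname{sign}(a)=\operatorname{sign}(\bmu^\top\btheta(0))$, and the reduction for part~(ii) all match the paper.

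You differ only in how you establish $a>0$. The paper argues by contradiction inside the stationarity equations. If $\bmu^\top\btheta(0)\le 0$, the positive-class margin $A$ is stochastically dominated by the negative-class margin $B$. Monotonicity of $L'$ then gives $\EE[L'(A)]\le\EE[L'(B)]<0$, hence $a\ge-(1-2\phi)\EE[L'(B)]>0$, which forces $\bmu^\top\btheta(0)>0$. So the ``wrong direction'' comparison you noticed becomes the right one once the conclusion is assumed to fail. This single step also disposes of $\btheta(0)=\mathbf{0}$ automatically, since that case has $\bmu^\top\btheta(0)=0$.

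You instead use a zeroth-order reflection argument. Writing $g(u):=\EE[L(u+s\xi)]$ with $s^2=\btheta^\top\C\btheta$, you use
\[
\mathcal{L}_{\mathrm{pop}}(\btheta;0)-\mathcal{L}_{\mathrm{pop}}(-\btheta;0)=(1-2\phi)\bigl(g(m)-g(-m)\bigr)
\]
together with uniqueness of the minimiser, and then a separate gradient check at the origin. This takes two steps where the paper needs one. Its advantage is that $\bmu^\top\btheta(0)>0$ follows only from the reflection structure of the objective, strong convexity and $\phi<\tfrac12$, without Stein's lemma or second-derivative information. The cone representation is then needed only to transfer that sign to $a$.
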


\begin{proof}
We prove (i); part (ii) follows from (i) by the rescaling noted in the
statement (and with $\phi = 0$ the condition $a > 0$ is immediate since
$L' < 0$ everywhere by Assumption~\ref{ass:loss}).

Setting
$\nabla\mathcal{L}_{\mathrm{pop}}(\btheta(0);0)
= \mathbf{0}$ and applying Stein's lemma to the Gaussian
expectations yields
$(\lambda\I+\tau\C)\btheta(0) = a\bmu$,
where, writing $\mathbf{z} \sim \NN(\mathbf{0},\I)$,
\[
A = \btheta(0)^\top\bmu
+ \btheta(0)^\top\C^{1/2}\mathbf{z},
\qquad
B = -\btheta(0)^\top\bmu
+ \btheta(0)^\top\C^{1/2}\mathbf{z},
\]
and
\[
\tau = (1-\phi)\,\EE[L''(A)] + \phi\,\EE[L''(B)] \geq 0,
\qquad
a = -(1-\phi)\,\EE[L'(A)] + \phi\,\EE[L'(B)].
\]
Here $A$ represents the margin of a class~$+1$ sample and
$B$ that of a class~$-1$ sample.
Since $\lambda > 0$ and $\tau \geq 0$, the matrix
$\lambda\I+\tau\C \succ 0$, so
$\btheta(0) = a(\lambda\I+\tau\C)^{-1}\bmu$
and $\mathrm{sign}(\bmu^\top\btheta(0))
= \mathrm{sign}(a)$.

It remains to show $a > 0$.
Suppose for contradiction that
$\bmu^\top\btheta(0) \leq 0$.
Then $A$ has nonpositive mean
$\EE[A] = \bmu^\top\btheta(0) \leq 0$ and $B$ has
nonnegative mean
$\EE[B] = -\bmu^\top\btheta(0) \geq 0$.
Since $L'$ is increasing (by convexity of~$L$) and strictly negative
(by Assumption~\ref{ass:loss}: $L$ is strictly decreasing),
a stochastic-ordering argument gives
\[
\EE[L'(A)] \leq \EE[L'(B)] < 0.
\]
Therefore
\begin{align*}
a
&= -(1-\phi)\EE[L'(A)] + \phi\EE[L'(B)] \\
&\geq -(1-\phi)\EE[L'(B)] + \phi\EE[L'(B)] \\
&= -(1-2\phi)\EE[L'(B)] > 0,
\end{align*}
using $\phi < \tfrac{1}{2}$ and $\EE[L'(B)] < 0$.
This implies $\bmu^\top\btheta(0) > 0$, contradicting
our assumption, so $a > 0$.
\end{proof}

\begin{remark}
The representation reflects that the loss depends on~$\btheta$ only
through $\btheta^\top\bmu$ and $\btheta^\top\C\btheta$, while the
regulariser selects the unique minimiser on this manifold.
For part~(i), the condition $\phi < \tfrac{1}{2}$ enforces positive
alignment $\bmu^\top\btheta(0) > 0$: the majority (positive) class
dominates the gradient. For part~(ii) the analogous statement is trivial
since there is only one class.
\end{remark}

\subsection{Proof of Proposition~\ref{prop:pop_convergence}}
\label{app:proof_pop_convergence}

\paragraph{Orthogonality assumption used in this section.}
Throughout this subsection we work with the strong form of
orthogonality, $\bv^\top\C^k\bmu = 0$ for every $k \geq 0$, rather
than the asymptotic $o_{\alpha}(1)$ form of
Assumption~\ref{ass:orthogonality}.
The reason is that the population minimiser $\btheta(\alpha)$ is
defined for any fixed dimension $p$ (including small $p$), and there
is no $n \to \infty$ limit available along which to absorb $o_{\alpha}(1)$
terms.
The same conclusions hold under the asymptotic version of
Assumption~\ref{ass:orthogonality} if one further takes
$p \to \infty$ (after $n \to \infty$): then $\bv^\top\C^k\bmu = o_{\alpha}(1)$
along the sequence, and the resolvent identity
$\bv^\top(\lambda\I+\tau\C)^{-1}\bmu = o_{\alpha}(1)$ follows with $0$ replaced by
$o_{\alpha}(1)$ at each occurrence.

\begin{proof}
The proof has two parts.
Part~(i) shows that the population minimiser converges to the benign
minimiser by constructing a competitor whose poisoned loss vanishes
as $\alpha \to \infty$ and then using strong convexity.
Part~(ii) shows that when $\bmu$ is an eigenvector of~$\C$, both
minimisers are collinear with~$\bmu$, and a derivative comparison
at $\btheta_{\mathrm{ben}}$ establishes the strict inequality.

Existence and uniqueness of $\btheta(\alpha)$ and
$\btheta_{\mathrm{ben}}$ follow from $\lambda$-strong convexity.

\medskip
\noindent\textbf{Part (i): Convergence as $\alpha \to \infty$.}

We begin with an auxiliary result whose proof is given at the end
of Part~(i).

\begin{lemma}[Gaussian expectation vanishing]
\label{lem:gauss_vanish}
Let $(X_n)_{n \geq 1}$ be Gaussian random variables with
$\EE[X_n] \to \infty$ and
$\bar\sigma^2 := \sup_{n \geq 1}\operatorname{Var}(X_n) < \infty$.
Under Assumption~\ref{ass:loss},
$\lim_{n \to \infty}\EE[L(X_n)] = 0$.
\end{lemma}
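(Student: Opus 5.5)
We prove Lemma~\ref{lem:gauss_vanish} by dominated convergence after reducing to a standard Gaussian. Write $X_n = m_n + s_n \xi$ with $\xi\sim\NN(0,1)$, $m_n := \EE[X_n]\to\infty$ and $0\le s_n\le\bar\sigma$. Then $\EE[L(X_n)] = \EE_\xi[L(m_n+s_n\xi)]$. Two facts about $L$ are needed.

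\emph{First, $L(x)\to 0$ as $x\to+\infty$.} Since $L$ is strictly decreasing and non-negative, the limit $L_\infty:=\lim_{x\to\infty}L(x)\ge 0$ exists. The tail bound $|L'(x)|\le C_2/(x^{1+\epsilon}+1)$ is integrable on $(0,\infty)$, so it only gives $L(x)-L_\infty = \int_x^\infty |L'| \le C x^{-\epsilon}$. Showing $L_\infty=0$ is the main obstacle. Assumption~\ref{ass:loss} does not literally force it: a decreasing positive loss could level off at a positive constant. I would handle this in one of two ways. The first option is to note that the intended losses (logistic, exponential) have infimum $0$, and make this normalisation explicit. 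The cleaner option is to observe that adding a constant to $L$ does not change any minimiser. So we may replace $L$ by $L-L_\infty\ge0$, which satisfies every hypothesis and has limit $0$. The lemma is then applied to this shifted loss, which is all that the competitor argument in Part~(i) requires.

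\emph{Second, an integrable envelope.} For $x\ge 0$, monotonicity gives $L(x)\le L(0)$. For $x<0$, integrating $|L'(u)|\le e^{C_1(|u|+1)}$ from $x$ to $0$ gives $L(x)\le L(0)+ e^{C_1(|x|+1)}/C_1$. Since $m_n\to\infty$, we may assume $m_n\ge 0$. Then $m_n+s_n\xi<0$ forces $|m_n+s_n\xi|\le \bar\sigma|\xi|$. Hence
\[
L(m_n+s_n\xi)\le L(0)+C_1^{-1}e^{C_1(\bar\sigma|\xi|+1)},
\]
which is integrable against the Gaussian density because it grows only exponentially in $|\xi|$.

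Pointwise, for each fixed $\xi$ we have $m_n+s_n\xi\ge m_n-\bar\sigma|\xi|\to\infty$, so $L(m_n+s_n\xi)\to 0$ by the first fact. Dominated convergence then gives $\EE[L(X_n)]\to0$. If desired, the rate $O(m_n^{-\epsilon})$ follows from splitting at $\{|\xi|\le m_n/(2\bar\sigma)\}$, but the limit alone suffices.
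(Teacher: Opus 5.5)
Your argument is correct (for the normalised loss; see below), but it handles the domination step differently from the paper. The paper first uses convexity. It couples $X_n$ with an independent Gaussian so that $Y_n\sim\NN(\mu_n,\bar\sigma^2)$, and conditional Jensen then gives $\EE[L(X_n)]\le\EE[L(\mu_n+\bar\sigma\xi)]$. This replaces every variance by the worst one. Monotonicity then gives the single dominator $L(\bar\sigma\xi)$ once $\mu_n>0$. Its integrability comes from integrating the derivative bound (Remark~\ref{rem:gauss_int}), the same computation as your envelope. You skip the variance-inflation step and dominate $L(m_n+s_n\xi)$ uniformly over $s_n\in[0,\bar\sigma]$. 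You do this by noting that a negative argument has modulus at most $\bar\sigma|\xi|$, which amounts to $L(m_n+s_n\xi)\le L(-\bar\sigma|\xi|)$ by monotonicity alone. So convexity is not needed for this lemma. The paper's coupling reduces everything to one fixed-variance family, a device it reuses in Lemma~\ref{lem:benign_cont}. Your route is more elementary and also gives the explicit rate.

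Your concern about $L_\infty$ is justified, and it exposes a gap in the paper rather than in your proof. The paper asserts that $L(x)\to0$ as $x\to+\infty$ ``follows from $L$ being non-negative, convex, and strictly decreasing''. That is false: $L(x)=1+\log(1+e^{-x})$ satisfies all of Assumption~\ref{ass:loss}, yet $\EE[L(X_n)]\to1$. The lemma as stated therefore needs the normalisation $\inf L=0$. Your shift $L\mapsto L-L_\infty$ is the right repair. It preserves Assumption~\ref{ass:loss}, including non-negativity, which the lower bound $\mathcal{L}_{\mathrm{ben}}\le\mathcal{L}_{\mathrm{pop}}$ relies on. It changes $\mathcal{L}_{\mathrm{pop}}$ and $\mathcal{L}_{\mathrm{ben}}$ only by the constants $L_\infty$ and $(1-\phi)L_\infty$, so $\btheta(\alpha)$ and $\btheta_{\mathrm{ben}}$ are unchanged. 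Without the shift, Lemma~\ref{lem:competitor} would only give the bound $\mathcal{L}_{\mathrm{ben}}(\btheta_{\mathrm{ben}})+\phi L_\infty$. The convergence argument for $\btheta(\alpha)\to\btheta_{\mathrm{ben}}$ would then fail.
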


\begin{remark}\label{rem:gauss_int}
Integrating the derivative bound in Assumption~\ref{ass:loss} gives
$L(x) \leq A\,e^{C_1|x|}$ for $x < 0$ and $L(x) \leq L(0)$
for $x \geq 0$.
In particular, $\EE[L(a + b\,\xi)] < \infty$ for any finite
$a,b\in\RR$ and $\xi \sim \NN(0,1)$, since the moment-generating
function of $|\xi|$ is finite everywhere. 
\end{remark}

We next record a continuity result for the benign expectation.

\begin{lemma}[Continuity of benign expectation]
\label{lem:benign_cont}
If $\btheta_\alpha \to \btheta$ in $\ell_2$, then
$\EE_{\x\sim\NN(\bmu,\C)}[L(\x^\top\btheta_\alpha)]
\to \EE_{\x\sim\NN(\bmu,\C)}[L(\x^\top\btheta)]$.
\end{lemma}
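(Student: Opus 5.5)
The plan is to prove Lemma~\ref{lem:benign_cont} by dominated convergence applied to a one-dimensional Gaussian representation. For $\x\sim\NN(\bmu,\C)$ write $\x=\bmu+\C^{1/2}\mathbf{z}$ with $\mathbf{z}\sim\NN(\mathbf{0},\I)$. Then
\[
\x^\top\btheta_\alpha \overset{d}{=} m_\alpha + s_\alpha\xi,
\qquad
m_\alpha:=\bmu^\top\btheta_\alpha,\quad s_\alpha:=\sqrt{\btheta_\alpha^\top\C\btheta_\alpha},\quad \xi\sim\NN(0,1),
\]
and similarly $\x^\top\btheta\overset{d}{=} m+s\xi$. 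This gives
\[
\EE[L(\x^\top\btheta_\alpha)]=\EE_\xi[L(m_\alpha+s_\alpha\xi)].
\]
Since $\btheta_\alpha\to\btheta$ in $\ell_2$ and $\bmu,\C$ are fixed, we get $m_\alpha\to m$ (Cauchy--Schwarz) and $s_\alpha\to s$ (continuity of the quadratic form $\btheta\mapsto\btheta^\top\C\btheta$ together with continuity of the square root on $[0,\infty)$).

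For pointwise convergence, note that $L$ is convex on $\RR$ and hence continuous. So for each fixed $\xi$ we have $L(m_\alpha+s_\alpha\xi)\to L(m+s\xi)$.

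For domination, use Remark~\ref{rem:gauss_int}: $L(x)\le A e^{C_1|x|}$ for $x<0$ and $L(x)\le L(0)$ for $x\ge0$, so $L(x)\le L(0)+Ae^{C_1|x|}$ for all $x$. Fix $\alpha$ large enough that $|m_\alpha|\le |m|+1=:\bar m$ and $s_\alpha\le s+1=:\bar s$. Then
\[
0\le L(m_\alpha+s_\alpha\xi)\le L(0)+A e^{C_1\bar m}e^{C_1\bar s|\xi|}.
\]
This bound is independent of $\alpha$. It is integrable against the Gaussian density because $\EE e^{t|\xi|}\le 2e^{t^2/2}<\infty$.

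Dominated convergence then gives the claim. The only step needing care is producing a uniform dominating function. The exponential growth bound on the negative side is what makes this work: it is dominated by Gaussian tails, which is exactly why Assumption~\ref{ass:loss} requires exponentially bounded $L'$. The index $\alpha$ may be continuous rather than sequential, which causes no issue because convergence along every sequence $\alpha_k\to\infty$ suffices.
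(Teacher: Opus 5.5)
Your proof is correct, and it takes a more direct route than the paper's.

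\textbf{The paper's route.} The paper works on the original probability space. It gets $L(\x^\top\btheta_\alpha)\to L(\x^\top\btheta)$ almost surely from $|\x^\top(\btheta_\alpha-\btheta)|\le\|\x\|\,\|\btheta_\alpha-\btheta\|$, and takes the lower bound from Fatou. For the upper bound it inflates the variance to a fixed $\sigma^2+\epsilon$ by an independent Gaussian coupling and conditional Jensen. With the variance frozen, only the mean varies, so monotonicity of $L$ supplies the dominator $L((\mu-1)+\bar\sigma_\epsilon\xi)$. A second dominated-convergence step then sends $\epsilon\downarrow 0$. This is the same variance-inflation device it uses for Lemma~\ref{lem:gauss_vanish}.

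\textbf{Your route.} You reduce everything to a single standard Gaussian via $\x^\top\btheta_\alpha\overset{d}{=}m_\alpha+s_\alpha\xi$. This makes pointwise convergence deterministic. It also lets one integrable dominator absorb the varying mean and the varying scale at once. A single application of dominated convergence then suffices, and the Fatou lower bound, the coupling and the $\epsilon$-limit all disappear.

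\textbf{What each buys.} The paper's Jensen comparison $\EE[L(X_\alpha)]\le\EE[L(\mu_\alpha+\bar\sigma_\epsilon\xi)]$ holds whenever the variance is merely bounded above. In this lemma, though, the variances converge. Moreover, the paper's own final step already needs a dominator with varying scale, namely $L(\mu-(\sigma+1)|\xi|)$, which is exactly of your type. So nothing is gained by the detour here. Your bound $L(0)+Ae^{C_1\bar m}e^{C_1\bar s|\xi|}$ could equally be replaced by $L(-\bar m-\bar s|\xi|)$, using only that $L$ is decreasing together with Remark~\ref{rem:gauss_int}.
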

\begin{proof}
Write $X_\alpha := \x^\top\btheta_\alpha$ and
$X := \x^\top\btheta$.
Since $\x\sim\NN(\bmu,\C)$, both are Gaussian:
$X_\alpha\sim\NN(\mu_\alpha,\sigma_\alpha^2)$ with
$\mu_\alpha := \bmu^\top\btheta_\alpha$ and
$\sigma_\alpha^2 := \btheta_\alpha^\top\C\btheta_\alpha$,
and $X\sim\NN(\mu,\sigma^2)$ with
$\mu := \bmu^\top\btheta$ and
$\sigma^2 := \btheta^\top\C\btheta$.
Since $\btheta_\alpha\to\btheta$ in $\ell_2$,
we have $\mu_\alpha\to\mu$ and $\sigma_\alpha\to\sigma$.

\emph{Almost sure convergence.}
Since $\btheta_\alpha\to\btheta$ deterministically,
for each realisation of $\x$ with $\|\x\|<\infty$
(which holds a.s.\ for Gaussian $\x$),
Cauchy--Schwarz gives
\[
|X_\alpha - X|
= |\x^\top(\btheta_\alpha - \btheta)|
\leq \|\x\|\,\|\btheta_\alpha - \btheta\| \to 0.
\]
Continuity of $L$ then yields
$L(X_\alpha)\to L(X)$ almost surely.

\emph{Upper bound via coupling.}
Fix $\epsilon>0$ and set
$\bar\sigma_\epsilon^2 := \sigma^2+\epsilon$.
Since $\sigma_\alpha\to\sigma$, for all sufficiently large
$\alpha$ we have $\sigma_\alpha^2\leq\bar\sigma_\epsilon^2$.
By Gaussianity we can couple an independent
$\xi\sim\NN(0,1)$ with
$\tilde\sigma_\alpha
:= (\bar\sigma_\epsilon^2-\sigma_\alpha^2)^{1/2}\geq 0$
so that
\[
Y_\alpha := X_\alpha + \tilde\sigma_\alpha\,\xi
\sim\NN\!\bigl(\mu_\alpha,\,\bar\sigma_\epsilon^2\bigr).
\]
By the conditional Jensen inequality (using convexity of $L$),
\[
\EE[L(Y_\alpha)]
= \EE\bigl[\EE[L(X_\alpha+\tilde\sigma_\alpha\xi)\mid X_\alpha]\bigr]
\geq \EE\bigl[L\bigl(
  \EE[X_\alpha+\tilde\sigma_\alpha\xi\mid X_\alpha]\bigr)\bigr]
= \EE[L(X_\alpha)],
\]
whence
$\EE[L(X_\alpha)]
\leq \EE[L(Y_\alpha)]
= \EE[L(\mu_\alpha+\bar\sigma_\epsilon\,\xi)]$.

We apply dominated convergence to the right-hand side.
Pointwise, $L(\mu_\alpha+\bar\sigma_\epsilon\,\xi)
\to L(\mu+\bar\sigma_\epsilon\,\xi)$ by continuity of $L$.
Since $L$ is decreasing, for all $\alpha$ large enough that
$\mu_\alpha\geq\mu-1$ we have the dominator
\[
L(\mu_\alpha+\bar\sigma_\epsilon\,\xi)
\leq L\bigl((\mu-1)+\bar\sigma_\epsilon\,\xi\bigr).
\]
The right-hand side is integrable by
Remark~\ref{rem:gauss_int}.
Dominated convergence therefore yields
\[
\limsup_{\alpha\to\infty}\EE[L(X_\alpha)]
\leq
\lim_{\alpha\to\infty}\EE[L(\mu_\alpha+\bar\sigma_\epsilon\,\xi)]
= \EE[L(\mu+\bar\sigma_\epsilon\,\xi)].
\]

\emph{Conclusion.}
Since $L\geq 0$ and $L(X_\alpha)\to L(X)$ a.s., Fatou's lemma
gives
$\EE[L(X)]\leq\liminf_\alpha\EE[L(X_\alpha)]$.
The coupling bound above holds for every $\epsilon>0$.
As $\epsilon\downarrow 0$, $\bar\sigma_\epsilon\downarrow\sigma$
and $L(\mu+\bar\sigma_\epsilon\,\xi)\to L(\mu+\sigma\,\xi)$
pointwise; a further application of dominated convergence
(with integrable dominator $L(\mu-(\sigma{+}1)|\xi|)$,
cf.\ Remark~\ref{rem:gauss_int}) gives
$\EE[L(\mu+\bar\sigma_\epsilon\,\xi)]
\to\EE[L(\mu+\sigma\,\xi)]=\EE[L(X)]$.
Hence $\limsup_\alpha\EE[L(X_\alpha)]\leq\EE[L(X)]$,
and combining with Fatou's lower bound completes the proof.
\end{proof}

We now construct a sequence of competitors whose population loss
converges to the benign-only optimum.

\begin{lemma}[Competitor upper bound]
\label{lem:competitor}
Define $\hat\btheta_\alpha := \btheta_{\mathrm{ben}}
+ \alpha^{-1/2}\bv$. Then
\[
\limsup_{\alpha\to\infty}
\mathcal{L}_{\mathrm{pop}}(\hat\btheta_\alpha;\alpha)
\leq \mathcal{L}_{\mathrm{ben}}(\btheta_{\mathrm{ben}}).
\]
\end{lemma}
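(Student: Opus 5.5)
The proof splits $\mathcal{L}_{\mathrm{pop}}(\hat\btheta_\alpha;\alpha)$ into three pieces and bounds each as $\alpha\to\infty$:
\[
\mathcal{L}_{\mathrm{pop}}(\hat\btheta_\alpha;\alpha)
= \underbrace{(1-\phi)\EE_{\x\sim\NN(\bmu,\C)}[L(\x^\top\hat\btheta_\alpha)]}_{(\mathrm{I})}
+\underbrace{\phi\,\EE_{\x\sim\NN(\alpha\bv-\bmu,\C)}[L(\x^\top\hat\btheta_\alpha)]}_{(\mathrm{II})}
+\underbrace{\tfrac{\lambda}{2}\|\hat\btheta_\alpha\|^2}_{(\mathrm{III})}.
\]
Since $\hat\btheta_\alpha-\btheta_{\mathrm{ben}}=\alpha^{-1/2}\bv$ and $\|\bv\|=1$, we have $\hat\btheta_\alpha\to\btheta_{\mathrm{ben}}$ in $\ell_2$. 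Hence (III) tends to $\tfrac{\lambda}{2}\|\btheta_{\mathrm{ben}}\|^2$. By Lemma~\ref{lem:benign_cont}, (I) tends to $(1-\phi)\EE_{\x\sim\NN(\bmu,\C)}[L(\x^\top\btheta_{\mathrm{ben}})]$. Together, (I)+(III) tends exactly to $\mathcal{L}_{\mathrm{ben}}(\btheta_{\mathrm{ben}})$.

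It remains to show that (II) tends to $0$, which I will obtain from Lemma~\ref{lem:gauss_vanish}. For $\x\sim\NN(\alpha\bv-\bmu,\C)$, the scalar $X_\alpha:=\x^\top\hat\btheta_\alpha$ is Gaussian with mean
\[
\EE[X_\alpha]=\alpha\,\bv^\top\btheta_{\mathrm{ben}}-\bmu^\top\btheta_{\mathrm{ben}}+\alpha^{1/2}-\alpha^{-1/2}\bmu^\top\bv .
\]
By the Cone Lemma~\ref{lem:cone}(ii), $\btheta_{\mathrm{ben}}=a(\tfrac{\lambda}{1-\phi}\I+\tau\C)^{-1}\bmu$. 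This resolvent applied to $\bmu$ is a polynomial in $\C$ applied to $\bmu$ (Cayley--Hamilton, finite $p$), so it lies in the Krylov subspace. The strong orthogonality $\bv^\top\C^k\bmu=0$ then gives $\bv^\top\btheta_{\mathrm{ben}}=0$, and also $\bmu^\top\bv=0$ (the case $k=0$). The mean therefore reduces to $\alpha^{1/2}-\bmu^\top\btheta_{\mathrm{ben}}\to\infty$.

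The variance is $\hat\btheta_\alpha^\top\C\hat\btheta_\alpha\le\|\C\|\,(\|\btheta_{\mathrm{ben}}\|+1)^2$ for $\alpha\ge1$, which is uniformly bounded. Applying Lemma~\ref{lem:gauss_vanish} along any sequence $\alpha_k\to\infty$ gives $\EE[L(X_{\alpha_k})]\to0$, hence (II) $\to0$. Summing the three limits yields the claimed $\limsup$ bound, in fact with equality as a limit.

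The main obstacle is the cancellation $\bv^\top\btheta_{\mathrm{ben}}=0$. Without it, the term $\alpha\,\bv^\top\btheta_{\mathrm{ben}}$ could be large and negative and swamp the $\alpha^{1/2}$ gain. This is exactly why the Cone Lemma is needed together with the Krylov (rather than merely pairwise) orthogonality. Lemma~\ref{lem:gauss_vanish} itself is assumed here; if it has to be proved, I would split $\EE[L(X_n)]$ at $X_n\ge m_n/2$, where $L\le L(\text{large})\to0$ because $L$ is decreasing and nonnegative. On the complement I would use the exponential bound of Remark~\ref{rem:gauss_int} against the Gaussian tail $e^{-c m_n^2}$.
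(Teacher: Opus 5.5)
Your proof is correct and is essentially the paper's argument. Both use the same three-term split, then the Cone Lemma~\ref{lem:cone}(ii) plus Krylov orthogonality to get $\bv^\top\btheta_{\mathrm{ben}}=0$, so that the poisoned margin has mean $\alpha^{1/2}-\bmu^\top\btheta_{\mathrm{ben}}\to\infty$ with bounded variance, and finally Lemmas~\ref{lem:benign_cont} and~\ref{lem:gauss_vanish}. Your Cayley--Hamilton justification of the Krylov step and your treatment of the regulariser by continuity of the norm are only cosmetic differences.

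One caveat you share with the paper: ``decreasing and nonnegative'' does not force $L(x)\to 0$ as $x\to\infty$. For example, $1+\log(1+e^{-x})$ satisfies Assumption~\ref{ass:loss} but tends to $1$. So Lemma~\ref{lem:gauss_vanish}, your sketch of it, and hence this bound all tacitly use the normalisation $\inf L=0$. This is harmless, since subtracting a constant from $L$ leaves every minimiser unchanged.
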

\begin{proof}
By Lemma~\ref{lem:cone}(ii),
$\btheta_{\mathrm{ben}} = a(\tfrac{\lambda}{1-\phi}\I+\tau\C)^{-1}\bmu$,
so Krylov orthogonality gives
$\bv^\top\btheta_{\mathrm{ben}} = 0$.
We analyse each term of the population loss separately.

\emph{Regulariser.}
Since $\bv^\top\btheta_{\mathrm{ben}} = 0 $,
\[
\tfrac{\lambda}{2}\|\hat\btheta_\alpha\|^2
- \tfrac{\lambda}{2}\|\btheta_{\mathrm{ben}}\|^2
= \tfrac{\lambda}{2}\alpha^{-1}\|\bv\|^2 \to 0.
\]

\emph{Benign loss.}
The benign loss converges by Lemma~\ref{lem:benign_cont}
since $\hat\btheta_\alpha \to \btheta_{\mathrm{ben}}$ in $\ell_2$.

\emph{Poisoned loss.}
For the poisoned term, let
$\tilde\x \sim \NN(\alpha\bv-\bmu,\C)$.
The margin $\hat\btheta_\alpha^\top\tilde\x$ is Gaussian with
mean
\[
(\alpha\bv-\bmu)^\top\hat\btheta_\alpha
= \alpha^{1/2}\|\bv\|^2
  - \bmu^\top\btheta_{\mathrm{ben}}
  - \alpha^{-1/2}\bmu^\top\bv
\to +\infty
\]
(using $\bv^\top\btheta_{\mathrm{ben}}=0$)
and variance
$\hat\btheta_\alpha^\top\C\hat\btheta_\alpha
\leq \|\C\|\,\|\hat\btheta_\alpha\|^2$,
which is bounded since
$\hat\btheta_\alpha\to\btheta_{\mathrm{ben}}$.
Lemma~\ref{lem:gauss_vanish} therefore gives
$\EE[L(\hat\btheta_\alpha^\top\tilde\x)] \to 0$.
\end{proof}

Since $L \geq 0$, the poisoned loss term is non-negative, so
$\mathcal{L}_{\mathrm{ben}}(\btheta)
\leq \mathcal{L}_{\mathrm{pop}}(\btheta;\alpha)$ pointwise for
every~$\btheta$.
Taking the infimum:
\[
\mathcal{L}_{\mathrm{ben}}(\btheta_{\mathrm{ben}})
\leq \inf_{\btheta}\mathcal{L}_{\mathrm{pop}}(\btheta;\alpha).
\]
Lemma~\ref{lem:competitor} gives the matching $\limsup$, hence
$\inf_{\btheta}\mathcal{L}_{\mathrm{pop}}(\btheta;\alpha)
\to \mathcal{L}_{\mathrm{ben}}(\btheta_{\mathrm{ben}})$.

\begin{lemma}[Convergence of minimisers]
\label{lem:min_conv}
$\btheta(\alpha) \to \btheta_{\mathrm{ben}}$ as $\alpha \to \infty$ in $\ell_2$.
\end{lemma}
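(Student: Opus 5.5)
The plan is to use strong convexity of the benign objective together with the sandwich of optimal values established just before the lemma. Since $\mathcal{L}_{\mathrm{ben}}$ is $\lambda$-strongly convex (the regulariser contributes $\tfrac{\lambda}{2}\|\btheta\|^2$ and the expectation term is convex) with minimiser $\btheta_{\mathrm{ben}}$, the standard quadratic growth inequality gives, for every $\btheta$,
\[
\mathcal{L}_{\mathrm{ben}}(\btheta)-\mathcal{L}_{\mathrm{ben}}(\btheta_{\mathrm{ben}})\;\geq\;\tfrac{\lambda}{2}\|\btheta-\btheta_{\mathrm{ben}}\|^2 .
\]
I will apply this at $\btheta=\btheta(\alpha)$.

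Next, since $L\geq 0$, the poisoned term in \eqref{eq:pop_loss} is non-negative. Hence $\mathcal{L}_{\mathrm{ben}}(\btheta(\alpha))\leq\mathcal{L}_{\mathrm{pop}}(\btheta(\alpha);\alpha)=\inf_{\btheta}\mathcal{L}_{\mathrm{pop}}(\btheta;\alpha)$. By Lemma~\ref{lem:competitor} and the lower bound recorded right after it, the right-hand side converges to $\mathcal{L}_{\mathrm{ben}}(\btheta_{\mathrm{ben}})$. Concretely, optimality of $\btheta(\alpha)$ gives $\mathcal{L}_{\mathrm{pop}}(\btheta(\alpha);\alpha)\leq\mathcal{L}_{\mathrm{pop}}(\hat\btheta_\alpha;\alpha)$, and the limsup of the latter is at most $\mathcal{L}_{\mathrm{ben}}(\btheta_{\mathrm{ben}})$.

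Combining the two displays yields
\[
\tfrac{\lambda}{2}\limsup_{\alpha\to\infty}\|\btheta(\alpha)-\btheta_{\mathrm{ben}}\|^2\leq\limsup_{\alpha\to\infty}\mathcal{L}_{\mathrm{pop}}(\hat\btheta_\alpha;\alpha)-\mathcal{L}_{\mathrm{ben}}(\btheta_{\mathrm{ben}})\leq 0,
\]
which is the claim.

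The only point needing care is finiteness of the objectives, so that the subtraction is legitimate. This is guaranteed by Remark~\ref{rem:gauss_int}, which shows every Gaussian expectation of $L$ is finite. The quadratic growth inequality also needs strong convexity of the expectation part's sum with the regulariser on all of $\RR^p$; this holds because $\btheta\mapsto\EE[L(\x^\top\btheta)]$ is convex as an average of convex functions. I do not expect a genuine obstacle here: the substantive work, namely the competitor construction and the Gaussian vanishing lemma, is already done in Lemma~\ref{lem:competitor}.
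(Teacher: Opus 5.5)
Your proposal is correct and is essentially the paper's proof. Both arguments sandwich $\mathcal{L}_{\mathrm{ben}}(\btheta(\alpha))-\mathcal{L}_{\mathrm{ben}}(\btheta_{\mathrm{ben}})$ between $\tfrac{\lambda}{2}\|\btheta(\alpha)-\btheta_{\mathrm{ben}}\|^2$ (by $\lambda$-strong convexity) and $\inf_{\btheta}\mathcal{L}_{\mathrm{pop}}(\btheta;\alpha)-\mathcal{L}_{\mathrm{ben}}(\btheta_{\mathrm{ben}})$ (by $L\geq 0$), and then use Lemma~\ref{lem:competitor} to send the latter to $0$; you only need the $\limsup$ half, which is all the argument requires.
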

\begin{proof}
Using
$\mathcal{L}_{\mathrm{ben}} \leq
\mathcal{L}_{\mathrm{pop}}(\cdot;\alpha)$ pointwise:
\[
0 \leq \mathcal{L}_{\mathrm{ben}}(\btheta(\alpha))
- \mathcal{L}_{\mathrm{ben}}(\btheta_{\mathrm{ben}})
\leq \inf_{\btheta}\mathcal{L}_{\mathrm{pop}}(\btheta;\alpha)
- \mathcal{L}_{\mathrm{ben}}(\btheta_{\mathrm{ben}}) \to 0.
\]
By $\lambda$-strong convexity of $\mathcal{L}_{\mathrm{ben}}$,
the left side is at least
$\tfrac{\lambda}{2}\|\btheta(\alpha)-\btheta_{\mathrm{ben}}\|^2$,
and therefore
$\|\btheta(\alpha) - \btheta_{\mathrm{ben}}\| \to 0$.
\end{proof}

Part~(i) follows from Lemma~\ref{lem:min_conv}.

We now give the deferred proof.

\begin{proof}[Proof of Lemma~\ref{lem:gauss_vanish}]
Write $\mu_n = \EE[X_n]$ and $\sigma_n^2 = \operatorname{Var}(X_n)$.
Let $Y_n \sim \NN(\mu_n,\bar\sigma^2)$.
Since $\sigma_n^2 \leq \bar\sigma^2$,
we can couple an independent $\xi\sim\NN(0,1)$
with $\tilde\sigma_n
:= (\bar\sigma^2-\sigma_n^2)^{1/2}\geq 0$
so that
$Y_n = X_n + \tilde\sigma_n\,\xi$.
By the conditional Jensen inequality (using convexity of $L$),
\[
\EE[L(Y_n)]
= \EE\bigl[\EE[L(X_n+\tilde\sigma_n\xi)\mid X_n]\bigr]
\geq \EE\bigl[L\bigl(
  \EE[X_n+\tilde\sigma_n\xi\mid X_n]\bigr)\bigr]
= \EE[L(X_n)],
\]
so $0 \leq \EE[L(X_n)]
\leq \EE[L(Y_n)]
= \EE[L(\mu_n+\bar\sigma\,\xi)]$.

Since $L$ is decreasing and $\mu_n > 0$ for $n$ large,
$L(\mu_n+\bar\sigma\,\xi) \leq L(\bar\sigma\,\xi)$.
This dominator is integrable by Remark~\ref{rem:gauss_int}.
Moreover $L(\mu_n+\bar\sigma\,\xi) \to 0$ pointwise,
since $\mu_n+\bar\sigma\,\xi \to +\infty$ and
$L(x)\to 0$ as $x\to+\infty$
(which follows from $L$ being non-negative, convex,
and strictly decreasing).
By dominated convergence,
$\EE[L(\mu_n+\bar\sigma\,\xi)] \to 0$,
and the squeeze
$0 \leq \EE[L(X_n)] \leq \EE[L(\mu_n+\bar\sigma\,\xi)]$
completes the proof.
\end{proof}

\medskip
\noindent\textbf{Part (ii): Eigenvector case.}
Suppose $\bmu$ is an eigenvector of $\C$ with eigenvalue~$\lambda_\mu$.
We first show that both minimisers are collinear with~$\bmu$, and then
compare their scalar coefficients.

Writing $\btheta = a\bmu + \mathbf{w}$ with $\mathbf{w} \perp \bmu$,
the variance decouples:
$\btheta^\top\C\btheta
= a^2\lambda_\mu\|\bmu\|^2 + \mathbf{w}^\top\C\mathbf{w}$.
Setting $\nabla_{\mathbf{w}}\mathcal{L}_{\mathrm{pop}}(\btheta;0)
= \mathbf{0}$ and using the fact that the gradient with respect to
$\mathbf{w}$ at any critical point satisfies
\[
\nabla_{\mathbf{w}}\mathcal{L}_{\mathrm{pop}}(\btheta;0)
= \bigl[(1-\phi)\EE[L''(A)] + \phi\EE[L''(B)]\bigr]\C\mathbf{w}
+ \lambda\mathbf{w} = \mathbf{0},
\]
forces $\mathbf{w} = \mathbf{0}$ since $\lambda > 0$
(the coefficient of $\C\mathbf{w}$ is non-negative by convexity of
$L$, so $\lambda\mathbf{w} = \mathbf{0}$ is the only solution).
The same argument applies to $\mathcal{L}_{\mathrm{ben}}$.
Both minimisers are therefore collinear with $\bmu$:
$\btheta^* = a^*\bmu$,
and it suffices to compare the scalar minimisers
\[
a_0^* = \arg\min_a\,\mathcal{L}_{\mathrm{pop}}(a\bmu;\,0),
\qquad
a_{\mathrm{ben}} = \arg\min_a\,\mathcal{L}_{\mathrm{ben}}(a\bmu).
\]
Since $\mathcal{L}_{\mathrm{pop}}(\,\cdot\,;0)$ is strictly convex
in $a$ with unique minimiser $a_0^*$, it suffices to show
\[
\frac{d}{da}\mathcal{L}_{\mathrm{pop}}(a_{\mathrm{ben}}\bmu;\,0) > 0,
\]
which forces $a_0^* < a_{\mathrm{ben}}$ by strict convexity.

By optimality of $a_{\mathrm{ben}}$,
$\frac{d}{da}\mathcal{L}_{\mathrm{ben}}(a_{\mathrm{ben}}\bmu) = 0$,
so the derivative of the population loss at $a_{\mathrm{ben}}$
reduces to the contribution from the poisoned class alone:
\[
\frac{d}{da}\mathcal{L}_{\mathrm{pop}}(a\bmu;\,0)\bigg|_{a=a_{\mathrm{ben}}}
= \phi\,\frac{d}{da}\,\EE\!\left[L\!\left(
  -a\|\bmu\|^2 + a\sqrt{\lambda_\mu}\|\bmu\|\,\xi
\right)\right]\bigg|_{a=a_{\mathrm{ben}}},
\]
where $\xi \sim \NN(0,1)$.
Differentiating inside the expectation:
\[
\frac{d}{da}\mathcal{L}_{\mathrm{pop}}(a\bmu;\,0)\bigg|_{a=a_{\mathrm{ben}}}
= \phi\,\EE\!\left[L'(B_{\mathrm{ben}})\left(-\|\bmu\|^2
+ \sqrt{\lambda_\mu}\|\bmu\|\,\xi\right)\right],
\]
where $B_{\mathrm{ben}} = a_{\mathrm{ben}}(-\|\bmu\|^2
+ \sqrt{\lambda_\mu}\|\bmu\|\xi)$ is the margin of a poisoned sample
evaluated at $\btheta_{\mathrm{ben}}$.

We now separate the deterministic and Gaussian contributions.
Applying Stein's lemma to the $\xi$ term, using
$\partial B_{\mathrm{ben}}/\partial\xi
= a_{\mathrm{ben}}\sqrt{\lambda_\mu}\|\bmu\|$:
\[
\EE[L'(B_{\mathrm{ben}})\xi]
= a_{\mathrm{ben}}\sqrt{\lambda_\mu}\|\bmu\|\,
\EE[L''(B_{\mathrm{ben}})].
\]
Substituting:
\[
\frac{d}{da}\mathcal{L}_{\mathrm{pop}}(a\bmu;\,0)\bigg|_{a=a_{\mathrm{ben}}}
= \phi\|\bmu\|^2\!\left(
\underbrace{-\EE[L'(B_{\mathrm{ben}})]}_{>0}
+ \underbrace{a_{\mathrm{ben}}\lambda_\mu\,
\EE[L''(B_{\mathrm{ben}})]}_{\geq 0}
\right) > 0,
\]
since $L'(x) < 0$ everywhere (Assumption~\ref{ass:loss}),
$a_{\mathrm{ben}} > 0$ (by the Cone Lemma), and
$\lambda_\mu, \EE[L''(B_{\mathrm{ben}})] \geq 0$
(by non-negativity of the eigenvalue and convexity of $L$).
Therefore $a_0^* < a_{\mathrm{ben}}$, i.e.\
$\bmu^\top\btheta(0) < \bmu^\top\btheta_{\mathrm{ben}}$.
\end{proof}

\subsection{Proof of Proposition~\ref{prop:one_step}}
\label{app:proof_one_step}

\begin{proof}
The idea is to decompose the population gradient at
$\btheta_{\mathrm{ben}}$ into the benign part (which vanishes by
optimality of $\btheta_{\mathrm{ben}}$) and the poisoned part, then
show the latter has a strictly positive projection onto~$\bmu$ using
Stein's lemma and the Cone Lemma.

Write
$\mathcal{L}_{\mathrm{pop}}(\btheta;\alpha)
= \mathcal{L}_{\mathrm{ben}}(\btheta)
+ \phi\,h_\alpha(\btheta)$,
where
$h_\alpha(\btheta)
= \EE_{\x\sim\NN(\alpha\bv-\bmu,\C)}[L(\x^\top\btheta)]$.
Since $\nabla\mathcal{L}_{\mathrm{ben}}(\btheta_{\mathrm{ben}})
= \mathbf{0}$ by optimality,
\begin{equation}
\label{eq:grad_decomp_app}
\bmu^\top\nabla\mathcal{L}_{\mathrm{pop}}
(\btheta_{\mathrm{ben}};\alpha)
= \phi\,\bmu^\top\nabla h_\alpha(\btheta_{\mathrm{ben}}).
\end{equation}
It suffices to show
$\bmu^\top\nabla h_\alpha(\btheta_{\mathrm{ben}}) > 0$.

Let $\x = \alpha\bv - \bmu + \C^{1/2}\mathbf{z}$ with
$\mathbf{z} \sim \NN(\mathbf{0},\I)$, and set
$B = \x^\top\btheta_{\mathrm{ben}}$.
The gradient of the poisoned expectation in the direction $\bmu$ is
\[
\bmu^\top\nabla h_\alpha(\btheta_{\mathrm{ben}})
= \EE[L'(B)\,\bmu^\top\x].
\]
Using $\bv^\top\bmu = 0$:
$\bmu^\top\x = -\|\bmu\|^2 + \bmu^\top\C^{1/2}\mathbf{z}$.
We now separate the deterministic part (involving $-\|\bmu\|^2$) from
the Gaussian part (involving $\mathbf{z}$).
Applying Stein's lemma to the Gaussian part:
\[
\EE[L'(B)\,\bmu^\top\C^{1/2}\mathbf{z}]
= (\bmu^\top\C\btheta_{\mathrm{ben}})\,\EE[L''(B)],
\]
where we used $\partial B/\partial \mathbf{z}
= \C^{1/2}\btheta_{\mathrm{ben}}$.
Therefore
\begin{equation}
\label{eq:stein_split_app}
\bmu^\top\nabla h_\alpha(\btheta_{\mathrm{ben}})
= \underbrace{-\|\bmu\|^2\EE[L'(B)]}_{>0}
+ \underbrace{(\bmu^\top\C\btheta_{\mathrm{ben}})
\EE[L''(B)]}_{\geq 0}.
\end{equation}
The first term is strictly positive since $L$ is strictly decreasing
($L' < 0$ everywhere by Assumption~\ref{ass:loss}).

For the second term, we invoke Lemma~\ref{lem:cone}(ii), which gives
$\btheta_{\mathrm{ben}} = a(\tfrac{\lambda}{1-\phi}\I+\tau\C)^{-1}\bmu$ with
$a > 0$.
Writing $\mathbf{R} = \tfrac{\lambda}{1-\phi}\I+\tau\C$:
\[
\bmu^\top\C\btheta_{\mathrm{ben}}
= a\,\bmu^\top\C\mathbf{R}^{-1}\bmu \geq 0,
\]
since $\C$ and $\mathbf{R}^{-1}$ are commuting positive semidefinite
matrices (they share the eigenbasis of $\C$) and hence their product
$\C\mathbf{R}^{-1}$ is also positive semidefinite.
Moreover $\EE[L''(B)] \geq 0$ by convexity of $L$.

Hence both terms in~\eqref{eq:stein_split_app} are non-negative and
the first is strictly positive,
giving $\bmu^\top\nabla h_\alpha(\btheta_{\mathrm{ben}}) > 0$.
Substituting into~\eqref{eq:grad_decomp_app} yields
$\bmu^\top\nabla\mathcal{L}_{\mathrm{pop}}
(\btheta_{\mathrm{ben}};\alpha) > 0$.
\end{proof}


%% file: appendix_zeta.tex
\section{Comparing ERM and information limit}

\subsection{Precise relation between ERM and information limit}
\label{app:erm_pop_bound}

\paragraph{Fixed-dimensional convergence of the empirical optimiser.}
We briefly justify the relationship between the empirical optimisation problem
\[
\widehat{\btheta}_{n}
\in
\argmin_{\btheta\in\RR^p}
\mathcal L_n(\btheta),
\qquad
\mathcal L_n(\btheta)
=
\frac1n\sum_{i=1}^n L(y_i\x_i^\top\btheta)
+
\frac{\lambda}{2}\|\btheta\|^2,
\]
and its population analogue
\[
\btheta_{\rm pop}(\alpha)
\in
\argmin_{\btheta\in\RR^p}
\mathcal L_{\rm pop}(\btheta;\alpha).
\]
Writing \(\mathbf{z}_i=y_i\x_i\), the empirical objective is
\[
\mathcal L_n(\btheta)
=
\frac1n\sum_{i=1}^n L(\btheta^\top \mathbf{z}_i)
+
\frac{\lambda}{2}\|\btheta\|^2.
\]
The population objective is the corresponding expectation under the mixture law
\[
\mathbf{z} \sim P_\alpha
:=
(1-\phi)\NN(\bmu,\C)
+
\phi\NN(\alpha\bv-\bmu,\C),
\]
namely
\[
\mathcal L_{\rm pop}(\btheta;\alpha)
=
\EE_{\mathbf{z}\sim P_\alpha}
\left[
L(\btheta^\top \mathbf{z})
\right]
+
\frac{\lambda}{2}\|\btheta\|^2.
\]

The standard empirical-risk decomposition gives
\[
\begin{aligned}
\mathcal L_{\rm pop}(\widehat{\btheta}_n;\alpha)
-
\mathcal L_{\rm pop}(\btheta_{\rm pop};\alpha)
&=
\left[
\mathcal L_{\rm pop}(\widehat{\btheta}_n;\alpha)
-
\mathcal L_n(\widehat{\btheta}_n)
\right]
\\
&\quad+
\left[
\mathcal L_n(\widehat{\btheta}_n)
-
\mathcal L_n(\btheta_{\rm pop})
\right]
\\
&\quad+
\left[
\mathcal L_n(\btheta_{\rm pop})
-
\mathcal L_{\rm pop}(\btheta_{\rm pop};\alpha)
\right].
\end{aligned}
\]
The middle term is non-positive by optimality of \(\widehat{\btheta}_n\). Hence, for any set
\(\Theta\) containing both \(\widehat{\btheta}_n\) and \(\btheta_{\rm pop}\),
\[
\mathcal L_{\rm pop}(\widehat{\btheta}_n;\alpha)
-
\mathcal L_{\rm pop}(\btheta_{\rm pop};\alpha)
\leq
2
\sup_{\btheta\in\Theta}
\left|
\mathcal L_n(\btheta)
-
\mathcal L_{\rm pop}(\btheta;\alpha)
\right|.
\]
Since the regularisation term is deterministic and appears in both objectives, the supremum reduces to
\[
\sup_{\btheta\in\Theta}
\left|
\frac1n\sum_{i=1}^n L(\btheta^\top \mathbf{z}_i)
-
\EE_{\mathbf{z}\sim P_\alpha}
L(\btheta^\top \mathbf{z})
\right|.
\]

It remains to control the uniform deviation of the function class
\[
\mathcal G_\Theta
=
\left\{
\mathbf{z}\mapsto L(\btheta^\top \mathbf{z})
:
\btheta\in\Theta
\right\}.
\]
A bounded parameter set is needed in order for this class to have finite complexity. In the present regularised problem this is natural. For example, if \(L\geq 0\), then
\[
\mathcal L_n(\widehat{\btheta}_n)
\leq
\mathcal L_n(0)
=
L(0),
\]
and therefore
\[
\frac{\lambda}{2}\|\widehat{\btheta}_n\|^2
\leq
L(0).
\]
Thus
\[
\|\widehat{\btheta}_n\|
\leq
B_\lambda
:=
\sqrt{\frac{2L(0)}{\lambda}}.
\]
The same argument applies to \(\btheta_{\rm pop}\). Hence it suffices to take
\[
\Theta_\lambda
=
\left\{
\btheta\in\RR^p:
\|\btheta\|\leq B_\lambda
\right\}.
\]

Let
\[
\mathfrak R_n(\mathcal G_{\Theta_\lambda})
=
\EE_{\mathbf{z},\sigma}
\left[
\sup_{\|\btheta\|\leq B_\lambda}
\frac1n
\sum_{i=1}^n
\sigma_i
L(\btheta^\top \mathbf{z}_i)
\right],
\]
where \(\sigma_1,\dots,\sigma_n\) are independent Rademacher variables. By the usual symmetrisation and contraction inequalities for Rademacher complexity, if \(L\) is \(G\)-Lipschitz, then
\[
\EE
\sup_{\|\btheta\|\leq B_\lambda}
\left|
\frac1n\sum_{i=1}^n L(\btheta^\top \mathbf{z}_i)
-
\EE L(\btheta^\top \mathbf{z})
\right|
\lesssim
G\,
\mathfrak R_n
\left(
\left\{
\mathbf{z}\mapsto \btheta^\top \mathbf{z}:
\|\btheta\|\leq B_\lambda
\right\}
\right).
\]
For the linear class,
\[
\begin{aligned}
\mathfrak R_n
\left(
\left\{
\mathbf{z}\mapsto \btheta^\top \mathbf{z}:
\|\btheta\|\leq B_\lambda
\right\}
\right)
&=
\EE_{\mathbf{z},\sigma}
\left[
\sup_{\|\btheta\|\leq B_\lambda}
\frac1n
\sum_{i=1}^n
\sigma_i
\btheta^\top \mathbf{z}_i
\right]
\\
&=
B_\lambda
\EE_{\mathbf{z},\sigma}
\left\|
\frac1n
\sum_{i=1}^n
\sigma_i \mathbf{z}_i
\right\|
\\
&\leq
B_\lambda
\sqrt{
\frac{\EE\|\mathbf{z}\|^2}{n}
}.
\end{aligned}
\]
For the Gaussian mixture \(P_\alpha\),
\[
\EE\|\mathbf{z}\|^2
=
\tr(\C)
+
(1-\phi)\|\bmu\|^2
+
\phi\|\alpha\bv-\bmu\|^2.
\]
Consequently,
\[
\EE
\sup_{\|\btheta\|\leq B_\lambda}
\left|
\mathcal L_n(\btheta)
-
\mathcal L_{\rm pop}(\btheta;\alpha)
\right|
\lesssim
G B_\lambda
\sqrt{
\frac{
\tr(\C)
+
(1-\phi)\|\bmu\|^2
+
\phi\|\alpha\bv-\bmu\|^2
}{n}
}.
\]
In the fixed-dimensional regime, or more generally when
\[
\EE\|\mathbf{z}\|^2 = O(p),
\]
this is the standard
\[
O\left(\sqrt{\frac{p}{n}}\right)
\]
Rademacher-complexity rate for bounded linear predictors; see, for example,
standard treatments of Rademacher complexity and linear prediction
\citep{mohriFoundationsMachineLearning2018}.

It follows that, for fixed \(p\) and fixed \(\lambda>0\),
\[
\sup_{\|\btheta\|\leq B_\lambda}
\left|
\mathcal L_n(\btheta)
-
\mathcal L_{\rm pop}(\btheta;\alpha)
\right|
\to 0
\]
in probability, and therefore
\[
\mathcal L_{\rm pop}(\widehat{\btheta}_n;\alpha)
-
\mathcal L_{\rm pop}(\btheta_{\rm pop};\alpha)
\to 0.
\]
Since \(L\) is convex and \(\lambda>0\), the population objective is
\(\lambda\)-strongly convex, so its minimiser is unique. Hence the empirical minimiser converges to the population minimiser as $\kappa \rightarrow 0$:
\[
\widehat{\btheta}_n
\to
\btheta_{\rm pop}(\alpha).
\]

\subsection{Variance decomposition and clean accuracy beyond the
eigenvector simplification}
\label{app:variance_decomp_general}
Section~\ref{sec:comparison} derived the variance
decomposition~\eqref{eq:variance_decomp} under the simplifying
assumption that $\bmu$ and $\bv$ are eigenvectors of~$\C$.  We show
here that the same structure persists for generic $\bmu,\bv$
satisfying Assumption~\ref{ass:orthogonality}, so the monotone
clean-accuracy conclusion does not rely on that simplification.

Since $\mathbf{R} = (\lambda\I+\tau\C)^{-1}$ commutes with $\C$, we
have $\mathbf{R}^2\C = \mathbf{R}\C\mathbf{R}$, and Theorem~\ref{thm:loureiro}
gives
\[
\sigma^2
= (\eta_1\bmu_1+\eta_2\bmu_2)^\top\mathbf{R}\C\mathbf{R}(\eta_1\bmu_1+\eta_2\bmu_2)
+ \zeta,
\qquad
\zeta := \frac{\gamma}{n}\tr[\mathbf{R}^2\C^2].
\]
Writing
$\eta_1\bmu_1+\eta_2\bmu_2 = (\eta_1-\eta_2)\bmu + \eta_2\alpha\bv$ and
expanding,
\[
\sigma^2
= (\eta_1-\eta_2)^2\,\bmu^\top\mathbf{R}\C\mathbf{R}\bmu
+ 2(\eta_1-\eta_2)\eta_2\alpha\,\bmu^\top\mathbf{R}\C\mathbf{R}\bv
+ \eta_2^2\alpha^2\,\bv^\top\mathbf{R}\C\mathbf{R}\bv
+ \zeta.
\]
The matrix $\mathbf{R}\C\mathbf{R} = \C(\lambda\I+\tau\C)^{-2}$ is a
bounded function of~$\C$, so by a Weierstrass-polynomial approximation
argument (cf.\ Remark~\ref{rem:krylov}) Assumption~\ref{ass:orthogonality}
gives $\bmu^\top\mathbf{R}\C\mathbf{R}\bv = o(1)$, and the cross term
vanishes.
To re-express in terms of the mean alignments
$a := \EE[\bmu^\top\tilde\btheta]$ and
$b := \EE[\bv^\top\tilde\btheta]$, recall
that (again up to $o(1)$ from $\bmu^\top\mathbf{R}\bv = o(1)$)
$a = (\eta_1-\eta_2)\,\bmu^\top\mathbf{R}\bmu$ and
$b = \eta_2\alpha\,\bv^\top\mathbf{R}\bv$.  Since $\lambda>0$, both
$\bmu^\top\mathbf{R}\bmu$ and $\bv^\top\mathbf{R}\bv$ are strictly
positive, so the substitutions
$(\eta_1-\eta_2) = a/(\bmu^\top\mathbf{R}\bmu) + o(1)$ and
$\eta_2\alpha = b/(\bv^\top\mathbf{R}\bv) + o(1)$ are well-defined
as algebraic identities.  We caution that a priori $\bmu^\top\mathbf{R}\bmu$
and $\bv^\top\mathbf{R}\bv$ could vanish along the sequence if the
fixed-point parameter~$\tau$ grows unboundedly, in which case the
$o(1)$ residuals propagating below need not be small; the decomposition
is quantitatively informative in the regime where $\tau$ remains
bounded.  Assuming this, we substitute:
\begin{equation}
\label{eq:sigma2_general}
\sigma^2
\;=\;
\beta_\mu\,a^2
\;+\;
\beta_v\,b^2
\;+\;
\zeta
\;+\; o(1),
\qquad
\beta_\mu := \frac{\bmu^\top\mathbf{R}\C\mathbf{R}\bmu}
                   {(\bmu^\top\mathbf{R}\bmu)^2},\quad
\beta_v := \frac{\bv^\top\mathbf{R}\C\mathbf{R}\bv}
                 {(\bv^\top\mathbf{R}\bv)^2}.
\end{equation}
Both $\beta_\mu$ and $\beta_v$ are non-negative
($\bmu^\top\mathbf{R}\C\mathbf{R}\bmu = \|\C^{1/2}\mathbf{R}\bmu\|^2$),
and~\eqref{eq:sigma2_general} is the general-case analogue
of~\eqref{eq:variance_decomp}: the eigenvector simplification recovers
$\beta_\mu = \lambda_\mu/\|\bmu\|^2$ and $\beta_v = \lambda_\bv$.  In the
spectral basis of~$\C$, writing
$\bmu = \sum_i\mu_i\be_i$ with $\C\be_i = \lambda_i\be_i$,
\[
\beta_\mu
= \frac{\sum_i \mu_i^2\,\lambda_i/(\lambda+\tau\lambda_i)^2}
       {\bigl(\sum_i \mu_i^2/(\lambda+\tau\lambda_i)\bigr)^2},
\]
a resolvent-weighted average of the eigenvalues of~$\C$ along~$\bmu$,
and analogously for $\beta_\bv$.

\paragraph{Clean accuracy in the general case.}
Substituting~\eqref{eq:sigma2_general} into~\eqref{eq:clean_acc} with
$\btheta = \tilde\btheta$ (using the concentration
$\tilde\btheta^\top\C\tilde\btheta \to \sigma^2$) and folding the
small trigger contribution $\beta_v b^2$ (which vanishes as
$\alpha\to\infty$ by Proposition~\ref{prop:alignment_peaks} and is
empirically subdominant at all $\alpha$,
cf.\ Table~\ref{tab:sigma2_decomposition}) into an effective noise
$\tilde\zeta := \beta_v b^2 + \zeta$:
\begin{equation}
\label{eq:clean_acc_general_app}
\mathrm{Acc}_{\mathrm{clean}}
\;\approx\;
\Phi\!\left(\frac{a}{\sqrt{\beta_\mu\,a^2 + \tilde\zeta}}\right).
\end{equation}
This retains the structure used in Section~\ref{sec:comparison}:
\begin{itemize}
\item[(i)] \emph{Information limit} ($\kappa\to 0$, so $\zeta\to 0$
and, by Proposition~\ref{prop:pop_convergence}, $b\to 0$ at large
$\alpha$, giving $\tilde\zeta\to 0$).  The argument of $\Phi$
collapses to $1/\sqrt{\beta_\mu}$, independent of~$a$: clean accuracy
is flat in~$\alpha$.
\item[(ii)] \emph{Proportional regime} ($\zeta>0$).  Differentiating,
\[
\frac{d}{da}\,\frac{a}{\sqrt{\beta_\mu a^2 + \tilde\zeta}}
= \frac{\tilde\zeta}{(\beta_\mu a^2 + \tilde\zeta)^{3/2}} > 0,
\]
so clean accuracy is strictly increasing in~$a$ whenever
$\tilde\zeta > 0$.  Combined with the monotonicity of~$a$ in~$\alpha$
(Proposition~\ref{prop:pop_convergence} at the population level),
clean accuracy inherits the same monotonicity.
\end{itemize}

Thus the qualitative claim of Section~\ref{sec:comparison} --- that the
finite-sample noise floor $\zeta$ is the mechanism coupling benign
alignment to clean accuracy --- is independent of the eigenvector
simplification.  The simplification enters only through the
interpretation of the scalar~$\beta_\mu$ as a single eigenvalue of~$\C$
rather than as the resolvent-weighted spectral average above.

%% file: appendix_experiments.tex
\section{Further Experimental Details}
\label{app:experimental_details}

\paragraph{Dataset and preprocessing.}
Except for the synthetic linear-regression experiments in
Figures~\ref{fig:lin_peak_benign} and~\ref{fig:lin_eigen_story},all experiments use CIFAR-10 restricted to classes~0 (``airplane'') and~1 (``automobile''), pooling the original 50{,}000 training and 10{,}000 test images (12{,}000 total per class pair --- 6{,}000 per class) and applying a single random 80/20 train/test split (seed~42), yielding $n_{\mathrm{train}}\approx 9{,}600$ and $n_{\mathrm{test}}\approx 2{,}400$ samples. Pixel intensities are normalised to $[0,1]$ (divided by 255) and the training set mean is subtracted from both splits. The feature dimension is $d = 32 \times 32 \times 3 = 3{,}072$.

\paragraph{Poisoning protocol.}
For the CIFAR-10, Gaussian-surrogate, and ResNet experiments, a fraction
\(\phi=0.05\) of the dataset with class~\(-1\) airplane training samples is selected
uniformly at random and poisoned: the trigger vector \(\alpha\bv\), with
\(\|\bv\|=1\), is added to each selected sample, and its label is flipped to
\(+1\). The trigger direction is a \(2{\times}2\) corner patch in the top-left
of the image applied uniformly across all three RGB channels, normalised to a
unit vector and then scaled to norm~\(\alpha\).

For these image experiments, we sweep \(\alpha\) over 20 equally-spaced values
in \([0,1.5]\). Attack success rate is always evaluated at a fixed test trigger
norm \(\alpha_{\mathrm{test}}=0.5\) in the same direction as the training
trigger. The synthetic linear-regression experiments in
Figures~\ref{fig:lin_peak_benign} and~\ref{fig:lin_eigen_story} use the
synthetic Gaussian-mixture setup and poisoning parameters specified in the
corresponding figure-specific paragraphs below.

\paragraph{Gaussian surrogate construction.}
The Gaussian surrogate dataset is constructed to match the empirical first- and
second-order statistics of the real CIFAR-10 training split.  From the centred
training data we estimate the class mean $\hat\bmu = \frac{1}{n_{+}}\sum_{i:\,y_i=+1}\x_i$
and the pooled within-class covariance
$\hat\C = \tfrac{1}{2}\hat\Sigma_{+1} + \tfrac{1}{2}\hat\Sigma_{-1}$,
regularised by a small ridge $\epsilon\I$ ($\epsilon = 10^{-4}\,\mathrm{tr}(\hat\C)/p$).
Synthetic samples are then drawn as
$\x \sim \mathcal{N}(\pm\hat\bmu,\,\hat\C)$ via the Cholesky factorisation
$\hat\C = \mathbf{L}\mathbf{L}^\top$, using $\x = \pm\hat\bmu + \mathbf{L}\mathbf{z}$
with $\mathbf{z} \sim \mathcal{N}(\mathbf{0}, \mathbf{I})$.
The surrogate dataset uses the same $n_{\mathrm{train}}$ and $n_{\mathrm{test}}$
as the corresponding real split and balanced class sizes
($\lfloor(n_{\mathrm{train}}+n_{\mathrm{test}})/2\rfloor$ samples per class).
All other experimental choices---poisoning fraction, trigger direction,
regularisation strength, and number of seeds---are identical to those of the
real CIFAR-10 experiment.

\paragraph{Figure~\ref{fig:resnet_intro}: ResNet-18 triple panel.}
We train a ResNet-18~\citep{heDeepResidualLearning2015} with a single scalar output (binary classification). For CIFAR-10's $32{\times}32$ images we replace the standard $7{\times}7$/stride-2 first convolution with a $3{\times}3$/stride-1 convolution and remove the initial max-pool, following common practice for small images. Training uses SGD with momentum $0.9$, weight decay $5\times10^{-4}$, initial learning rate $0.05$ annealed to~$0$ via a cosine schedule over 50 epochs, and mini-batch size~128. The experiment is repeated for 25 independent train/test splits (seeds $42, 43, \ldots, 66$); shaded bands show mean~$\pm$ one standard deviation.

The three panels show: (left) clean test accuracy versus training trigger norm~$\alpha$; (centre) ASR versus~$\alpha$; (right) ASR versus the $\log_{10}$-eigenvalue of the pooled within-class covariance $\C$ of CIFAR-10. For the rightmost panel, 20 eigenvectors of~$\C$ are selected log-uniformly across the full eigenvalue spectrum; each eigenvector is used as the trigger direction with $\alpha = \alpha_{\mathrm{test}} = 0.5$ fixed. The pooled covariance is $\C = \frac{1}{2}\hat\Sigma_{+1} + \frac{1}{2}\hat\Sigma_{-1}$ with a small ridge $\epsilon\I$ ($\epsilon = 10^{-4}\,\mathrm{tr}(\C)/d$) for numerical stability, eigendecomposed via \texttt{numpy.linalg.eigh}. Each eigenvector point is averaged over 25 independent runs.

\paragraph{Figure~\ref{fig:clean_acc_cifar}: CIFAR-10 vs.\ Gaussian empirical (logistic regression).}
Logistic regression with $\ell_2$ regularisation strength $\lambda = 10^{-4}$ is trained on (i)~the real CIFAR-10 split and (ii)~a synthetic Gaussian dataset whose class-conditional means and covariance are estimated empirically from the same CIFAR-10 training split; both use the same $n_{\mathrm{train}}$ and $n_{\mathrm{test}}$. The classifier is obtained by minimising
\[
  \frac{\lambda}{2}\|\btheta\|^2 + \frac{1}{n}\sum_{i=1}^n \log\bigl(1+e^{-y_i(\btheta^\top\x_i)}\bigr)
\]
via PyTorch L-BFGS with the strong Wolfe line-search condition (up to 2{,}000 iterations). The experiment is repeated for 25 independent seeds; shaded bands show mean~$\pm$ one standard deviation. The four panels display clean test accuracy, ASR, alignment $\btheta^\top\bmu$, and alignment $\btheta^\top\bv$ as functions of training trigger norm~$\alpha$.

\paragraph{Figure~\ref{fig:alpha_test_sensitivity}: sensitivity to test trigger norm.}
The attack success rate in Figures~\ref{fig:resnet_intro} and~\ref{fig:clean_acc_cifar}
is reported at a fixed test trigger norm $\alpha_{\mathrm{test}}=0.5$. To check that
the qualitative ASR-vs-$\alpha$ shape is not an artifact of this choice, we repeat
both experiments at additional values of $\alpha_{\mathrm{test}}$. Panel~(a) shows
the logistic-regression sweep on CIFAR-10 and panel~(b) shows the ResNet-18 sweep,
both with all other settings identical to the main-text figures. We additionally connect the points where $\alpha = \alpha_{\text{test}}$, and label the maximum value of each curve with a star. 

\begin{figure}[ht]
    \centering
    \begin{subfigure}[b]{0.48\textwidth}
        \centering
        \includegraphics[width=\textwidth]{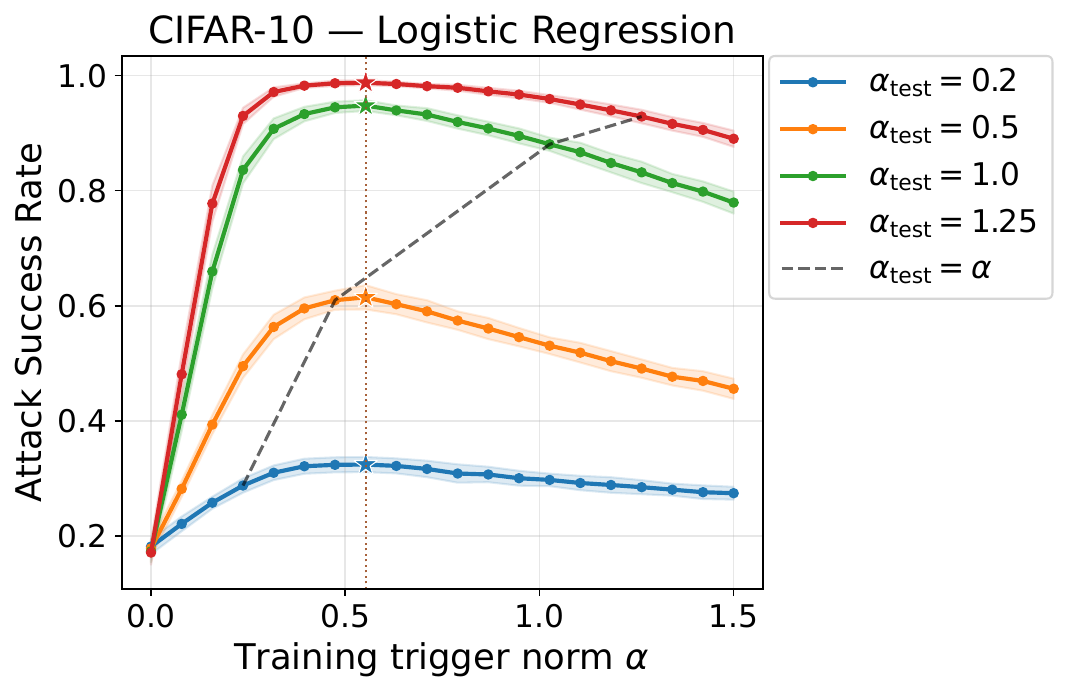}
        \caption{Logistic regression, CIFAR-10.}
    \end{subfigure}
    \hfill
    \begin{subfigure}[b]{0.48\textwidth}
        \centering
        \includegraphics[width=\textwidth]{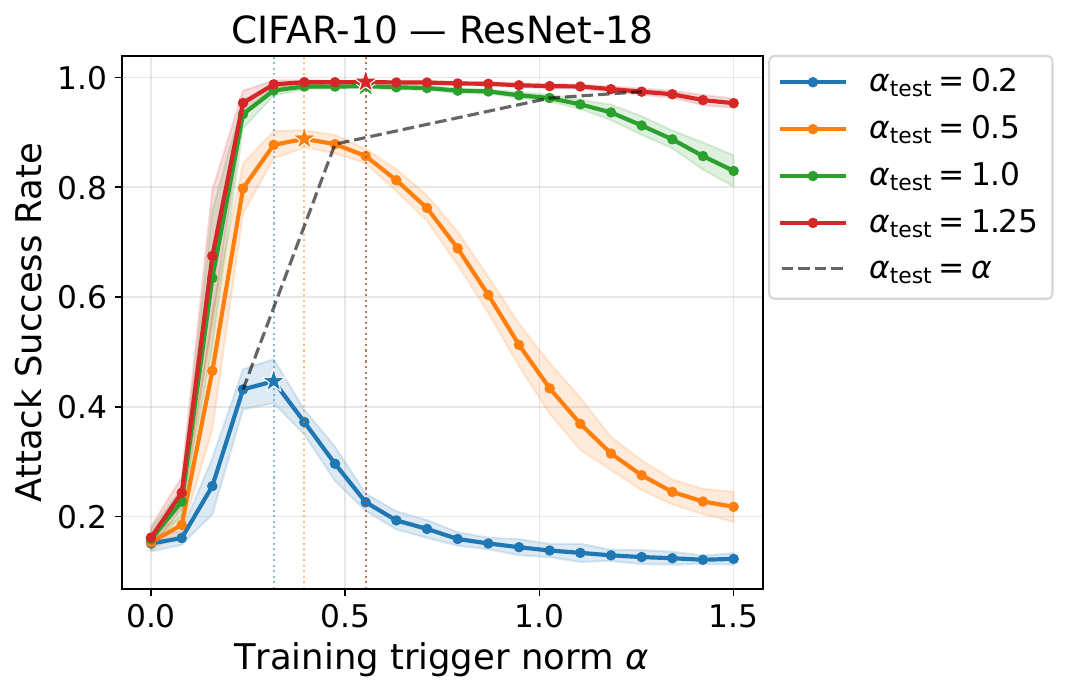}
        \caption{ResNet-18, CIFAR-10.}
    \end{subfigure}
    \caption{ASR vs.\ training trigger strength $\alpha$ at several values of the
    test trigger norm $\alpha_{\mathrm{test}}$.}
    \label{fig:alpha_test_sensitivity}
\end{figure}

\paragraph{Figure~\ref{fig:lin_peak_benign}: projection curves across aspect ratios.}
This figure is a theory-only comparison across several proportional-regime
aspect ratios \(\kappa=p/n\), including the overparameterized regime
\(\kappa>1\). We solve the deterministic theory on the grid
\(\alpha\in[0,30]\) for the four configurations
\[
(p,n,\kappa)\in
\{(1100,1000,1.10),\ (1050,1000,1.05),\ (1000,2000,0.50),\
(1000,5000,0.20)\}.
\]
For each value of \(\kappa\), the dashed curves show the squared-loss
prediction and the solid curves show the logistic-loss fixed-point prediction.
The left panel plots the trigger projection
\(|\langle\theta^\star,v\rangle|\), which rises for small \(\alpha\), reaches a
finite peak, and then decays. The right panel plots the benign projection
\(\langle\theta^\star,\mu\rangle\), which increases with \(\alpha\). The main
purpose of the figure is to show that these qualitative behaviors are stable
across different values of \(p/n\), including the overparameterized regime
\(\kappa>1\).

\paragraph{Figure~\ref{fig:lin_eigen_story}: eigenvector spectral effect.}
This figure isolates the dependence of the trigger projection on the trigger
eigenvalue \(s_v^2\). We use a synthetic non-isotropic covariance model in
which the clean and trigger directions are orthogonal eigenvectors of \(\C\).
Writing
\[
u_\mu:=\frac{\bmu}{\|\bmu\|},
\]
the covariance used in the plotted experiment is
\[
\C
=
s_{\mathrm{rest}}^2
\bigl(\I-u_\mu u_\mu^\top-\bv\bv^\top\bigr)
+
s_\mu^2u_\mu u_\mu^\top
+
s_v^2\bv\bv^\top .
\]
Thus \(u_\mu\) has eigenvalue \(s_\mu^2\), the trigger direction \(\bv\) has
eigenvalue \(s_v^2\), and every direction orthogonal to
\(\operatorname{span}\{u_\mu,\bv\}\) has eigenvalue
\(s_{\mathrm{rest}}^2\). In the plotted setting, \(\|\bmu\|=1\),
\(\|\bv\|=1\), and \(s_{\mathrm{rest}}^2=1\), so the covariance reduces to the
rank-two perturbation
\[
\C
=
\I
+
(s_\mu^2-1)\bmu\bmu^\top
+
(s_v^2-1)\bv\bv^\top .
\]

The eigenvector specialization in Corollary~\ref{cor:lin_peak_eigen} is more
general than this particular plotted construction. It only requires that
\(\bmu\) and \(\bv\) are orthogonal eigenvectors of \(\C\), with eigenvalues
\(s_\mu^2\) and \(s_v^2\), respectively. The covariance on the orthogonal
complement of \(\operatorname{span}\{\bmu,\bv\}\) may be arbitrary positive
definite, subject to the standing boundedness assumptions; in the displayed
projection formulas, its effect enters through the fixed-point scalar \(\tau\).

The parameters used in the figure are
\[
p=300,
\qquad
n=5000,
\qquad
\kappa=0.06,
\qquad
\|\bmu\|=1,
\qquad
\phi=0.20,
\qquad
\lambda=0.50,
\]
with \(s_\mu^2=2.0\) and \(s_{\mathrm{rest}}^2=1.0\). We vary only the trigger
eigenvalue \(s_v^2\), keeping \(\bv\) a unit-norm covariance eigenvector
orthogonal to \(\bmu\). The left panel plots exact trigger-projection curves
\(h_v(\alpha)\) for
\[
s_v^2\in\{0.20,0.35,0.80,1.80\},
\]
over the grid
\[
\alpha\in\{0,0.5,1.0,\ldots,8.0\}.
\]
The middle panel plots the peak value \(\max_\alpha h_v(\alpha)\) for
\[
s_v^2\in\{0.20,0.35,0.50,0.80,1.00,1.40,1.80\}.
\]
The right panel fixes the training trigger strength at \(\alpha=4.0\) and
compares the exact theoretical prediction for \(h_v(4.0)\) with finite-sample
ridge estimates \(\bv^\top\hat{\btheta}\). The finite-sample sweep is repeated
over \(8\) independent repetitions with seed \(2027\). Markers show empirical
means, and error bars show the standard error of the mean.

These panels illustrate Corollary~\ref{cor:lin_min_eig}: within the
covariance-eigenvector class, increasing \(s_v^2\) increases the denominator of
the trigger-projection formula through
\[
B_v=\lambda+\tau s_v^2,
\]
and therefore attenuates the learned trigger projection.

\paragraph{Figure~\ref{fig:lin_phi_sweep_appendix}: poisoning-fraction sweep.}
This figure shows how the square-loss projections vary with the poisoning
fraction \(\phi\). We use the same synthetic Gaussian-mixture setup as in the
linear square-loss experiments, and sweep
\[
\phi\in\{0,0.05,0.1,0.2,0.3,0.4,0.5\}
\]
and
\[
\alpha\in\{0,0.5,1.0,\ldots,10.0\}.
\]
For each pair \((\phi,\alpha)\), we generate a finite training sample from the
absorbed two-component Gaussian mixture, fit the squared-loss ridge estimator
\(\hat{\btheta}\), and record the empirical projections
\[
\bmu^\top\hat{\btheta},
\qquad
\bv^\top\hat{\btheta}.
\]
We also compute the corresponding square-loss theory predictions from the
closed-form linear formulas. Solid curves show empirical ridge estimates and
dashed curves show the theory predictions.

The left panel plots the trigger alignment \(h_v(\alpha)\), while the right
panel plots the benign alignment \(h_\mu(\alpha)\). Increasing \(\phi\) has two
different effects. For the trigger alignment, larger poisoning fractions can
increase \(h_v\) at small trigger strengths, because more poisoned examples
reinforce the trigger direction. At larger trigger strengths, the denominator
effect in the square-loss formula dominates, and increasing \(\phi\) can reduce
the trigger alignment. In contrast, the benign alignment decreases with
\(\phi\) at fixed \(\alpha\), to leading order. The curves \(\phi=0\) and
\(\phi=1/2\) are included as visual reference cases; the formal statements in
the main text assume \(0<\phi<1/2\).

\paragraph{Table~\ref{tab:sigma2_decomposition}: fixed-point solution.}
The table reports the asymptotic prediction for $\sigma^2 = \mathbb{E}[\tilde\btheta^\top\C\tilde\btheta]$,
the variance of the classifier's output on a test point, derived from a high-dimensional analysis
of regularised logistic regression. In the limit $p, n \to \infty$ with aspect ratio
$p/n \to \kappa$, the trained weight vector concentrates around a Gaussian
proxy $\tilde\btheta$ with average $(\lambda\I + \tau\C)^{-1}(\eta_1\bmu + \eta_2\bv)$, where the scalar
parameters $(\tau, \gamma, \delta, \eta_1, \eta_2)$ are the unique solution to a system of
self-consistent fixed-point equations that couple the loss curvature, effective noise variance,
and class-conditional signals.

These equations are solved numerically using the empirically estimated class mean $\bmu$ and
pooled within-class covariance $\C$ from CIFAR-10 (classes 0 and 1), with $\lambda = 10^{-4}$,
poisoning fraction $\phi = 0.1$, and aspect ratio $p/n \approx 0.32$. The expectation integrals
that appear in the fixed-point equations are evaluated by Gauss--Hermite quadrature (100 nodes),
and the system is iterated to convergence (residual $< 10^{-10}$).

At the solution, $\sigma^2$ decomposes as
\begin{align}
\sigma^2
&= \underbrace{(\eta_1-\eta_2)^2\,\bmu^\top\mathbf{A}\bmu}_{\text{mean-direction signal}}
 + \underbrace{\eta_2^2\,\bv^\top\mathbf{A}\bv}_{\text{trigger-direction signal}} \notag \\
&\quad + \underbrace{2(\eta_1-\eta_2)\eta_2\,\bmu^\top\mathbf{A}\bv}_{\text{cross term}}
 + \underbrace{\tfrac{\gamma}{n}\mathrm{tr}\bigl[(\lambda\I+\tau\C)^{-2}\C^2\bigr]}_{\text{fundamental noise }
\zeta},
\end{align}
where $\mathbf{A} := (\lambda\I+\tau\C)^{-2}\C$. The first three terms reflect how much of the
classifier's variance is driven by alignment with the clean-class mean and the trigger direction
respectively; the last term is an irreducible noise floor arising from finite-sample fluctuations,
present even in the absence of poisoning.

\paragraph{Compute resources.}
All experiments were run on a single workstation with a 32-core (64-thread) CPU and four NVIDIA RTX~3080~Ti GPUs. The full ResNet-18 sweep (Figure~\ref{fig:resnet_intro}: $3$ panels $\times\,20$ trigger configurations $\times\,25$ seeds, each a 50-epoch training run) took approximately 14 hours when distributed across the four GPUs. The logistic-regression and Gaussian-surrogate sweeps (Figure~\ref{fig:clean_acc_cifar}) ran on CPU and completed in well under an hour. Fixed-point solves used to produce the theoretical curves and Table~\ref{tab:sigma2_decomposition} take a few seconds per $(\alpha,\kappa)$ configuration.

\paragraph{Licensing of existing assets.}
CIFAR-10~\citep{krizhevsky2009learning} is publicly distributed by the University of Toronto for research use; the ResNet architecture is from~\citet{heDeepResidualLearning2015} and our implementation uses the \texttt{torchvision} reference (BSD-3-Clause). PyTorch (BSD-3-Clause), NumPy (BSD-3-Clause), and SciPy (BSD-3-Clause) were used for training and the fixed-point solver.